\theoremstyle{plain}
\newtheorem{theorem}{Theorem}
\newtheorem{lemma}{Lemma}
\newtheorem{corollary}{Corollary}
\newtheorem{definition}{Definition}
\newtheorem{assumption}{Assumption}
\newtheorem{prop}{Proposition}
\numberwithin{equation}{section}
\def\p{\frac{\partial}{\partial\tau}}
\def\pdim{{\rm Pdim}}
\def\vdim{{\rm VCdim}}
\def\s{{\rm sign}}
\begin{document}

\begin{frontmatter}
		\title{Estimation of Non-Crossing Quantile Regression Process
with Deep ReQU Neural Networks}
		%\title{A sample article title with some additional note\thanksref{t1}}
		\runtitle{Deep Non-Crossing Quantile Regression}
		%\thankstext{T1}{A sample additional note to the title.}
%	\begin{comment}
		\begin{aug}
	%%%%%%%%%%%%%%%%%%%%%%%%%%%%%%%%%%%%%%%%%%%%%%
	%%Only one address is permitted per author. %%
	%%Only division, organization and e-mail is %%
	%%included in the address.                  %%
	%%Additional information can be included in %%
	%%the Acknowledgments section if necessary. %%
	%%%%%%%%%%%%%%%%%%%%%%%%%%%%%%%%%%%%%%%%%%%%%%
	\author[A]{\fnms{Guohao} \snm{Shen}\thanks{Guohao Shen and Yuling Jiao contributed equally to this work.}\ead[label=e1,mark]{guohao.shen@polyu.edu.hk}}
	\author[B]{\fnms{Yuling} \snm{Jiao}$^*$
	%{Guohao Shen and Yuling Jiao contributed equally to this work}
	\ead[label=e2,mark]{yulingjiaomath@whu.edu.cn}}
	\author[C]{\fnms{Yuanyuan} \snm{Lin}\ead[label=e3,mark]{ylin@sta.cuhk.edu.hk}}\\
	
	\author[D]{\fnms{Joel L.} \snm{Horowitz}\ead[label=e4,mark]{joel-horowitz@northwestern.edu}}
	\and
	\author[E]{\fnms{Jian} \snm{Huang}\ead[label=e5,mark]{j.huang@polyu.edu.hk}}
	%%%%%%%%%%%%%%%%%%%%%%%%%%%%%%%%%%%%%%%%%%%%%%
	%% Addresses                                %%	%%%%%%%%%%%%%%%%%%%%%%%%%%%%%%%%%%%%%%%%%%%%%%
	\address[A]{Department of Applied Mathematics, The Hong Kong Polytechnic University,  Hong Kong, China.
	\printead{e1}}	
	
	\address[B]{School of Mathematics and Statistics,
	Wuhan University, Wuhan, %Hubei Province,
	China  %, 430072
	\printead{e2}}
	
	\address[C]{Department of Statistics, The Chinese University of Hong Kong,  Hong Kong, China.
	\printead{e3}}	
	
	\address[D]{Department of Economics, Northwestern University, Evanston, IL 60208, USA.
	\printead{e4}}	
	
	\address[E]{Department of Applied Mathematics, The Hong Kong Polytechnic University,
Hong Kong, China
	\printead{e5}}
	\end{aug}

%	\end{comment}

\begin{abstract}
We propose a penalized nonparametric approach to estimating the quantile regression process (QRP) in a nonseparable model using rectifier quadratic unit (ReQU)  activated deep neural networks and introduce a novel penalty function to enforce non-crossing of quantile regression curves.
We establish the non-asymptotic excess risk bounds for the estimated QRP
and derive the mean integrated squared error for the estimated QRP under mild smoothness and regularity conditions.
To establish these non-asymptotic risk and estimation error bounds, we also develop a new error bound for approximating $C^s$ smooth functions with $s >0$ and their derivatives
using ReQU activated neural networks.  This is a new approximation result
for ReQU networks and is of independent interest and may be useful in other problems.
%This new approximation result for ReQU networks.
Our numerical experiments demonstrate that the proposed method is competitive with or outperforms two existing methods, including methods using reproducing kernels and random forests, for nonparametric quantile regression.
\end{abstract}

	\begin{keyword}[class=MSC2020]
	\kwd[Primary ]{62G05}
	\kwd{62G08}
	\kwd[; secondary ]{68T07}
\end{keyword}

\begin{keyword}
	\kwd{Approximation error}
	\kwd{Quantile process}
	\kwd{Deep neural networks}
	\kwd{Non-crossing}
	\kwd{Monotonic constraint}
\end{keyword}

\end{frontmatter}

%\maketitle

\section{Introduction}
\label{intro}

Consider a nonparametric regression model
\begin{equation}\label{model}
		Y=f_0(X, U),
\end{equation}
where $Y \in \mathbb{R}$ is a response variable, $X \in\mathcal{X} \subset\mathbb{R}^d$ is a $d$-dimensional vector of predictors, $U$ is an unobservable random variable following the uniform distribution on $(0,1)$ and independent of $X$.  The function $f_0:\mathcal{X}\times (0,1)\to \mathbb{R}$ is an unknown regression function, and $f_0$ is increasing in its second argument.
This is a non-separable quantile regression model, in which
the specification  $U\sim {\rm Unif}(0,1)$ is a normalization but not a restrictive assumption
\citep{cin2007, hs2007}. Nonseparable quantile regression models are important in empirical economics (see, e.g., \citet{rhp2017}).
%Blundell, Horowitz, and Parey (2017)).
Based on (\ref{model}), it can be seen that for any $\tau\in(0,1)$, the conditional $\tau$-th quantile $Q_{Y|x}(\tau)$ of $Y$ given $X=x$ is
\begin{equation}\label{qmodel}
	Q_{Y|x}(\tau)=f_0(x,\tau).
\end{equation}
We refer to $f_0 = \{f_0(x, \tau): (x, \tau) \in \mathcal{X} \times (0, 1)\}$ as a quantile regression process (QRP).
%We are interesting in estimating the this process.
A basic property of QRP is that it is nondecreasing with respect to $\tau$ for
any given $x$, often referred to as the non-crossing property.
We propose a novel penalized nonparametric method for estimating $f_0$
on a random discrete grid
of quantile levels in $(0, 1)$ simultaneously, with the penalty designed to ensure the non-crossing property.
% and a new penalty function that promotes the non-crossing property.
%penalized approach to nonparametric estimation of $f_0$ using deep neural networks and %respecting the non-crossing constraint.

%We consider the problem of nonparametric quantile process estimation for the $f_0$ with %non-crossing constraints. To approximate the target quantile function $f_0$, we study the %penalized nonparametric quantile process estimator using deep neural networks.

%Although model (\ref{model}) is seemingly different from the usual quantile regression model %with an additive error term,  it

Quantile regression \citep{koenker1978} is an important method for modeling
the relationship between a response $Y$ and a predictor $X$.
Different from  least squares regression that estimates the conditional
mean of $Y$ given $X$, quantile regression models the conditional
quantiles of $Y$ given $X$, so it  fully describes
the conditional distribution of $Y$ given $X.$
The non-separable model (\ref{model})  can be transformed into a familiar quantile regression model with an additive error.
%Specifically,
For any $\tau \in (0, 1)$, we have $P\{Y-f_0(X,\tau)\le0\}=\tau$ under (\ref{model}). If we define $\epsilon=Y-f_0(X,\tau)$, then model (\ref{model}) becomes
\begin{align}
\label{model2}
Y=g_0(X)+\epsilon,
\end{align}
where $g_0(X)=f_0(X,\tau)$ and $P(\epsilon\le 0\mid X=x) =\tau$ for any $x\in\mathcal{X}$.
An attractive feature of the nonseparable model (\ref{model}) is that it explicitly includes the quantile level as a second argument of $f_0$, which makes it possible to construct a single objective function for estimating the whole quantile process simultaneously.

A general nonseparable quantile regression model that allows a vector random disturbance $U$
 was proposed by \citet{ch2005}. The model (\ref{model}) in the presence of endogeneity was considered by \citet{cin2007}, who gave local identification conditions for the quantile regression function $f_0$  and provided sufficient condition under which a series estimator is consistent. The convergence rate of  the series estimator is unknown.
The relationship between the nonseparable quantile regression model (\ref{model})
and the usual separable quantile regression model
was discussed in \citet{hs2007}. A study of nonseparable bivariate quantile regression
for nonparametric demand estimation using splines under shape constraints was given in \citet{rhp2017}.

There is a large body of literature on separable linear quantile regression in the fixed-dimension setting \citep{koenker1978, koenker2005} and in the high-dimensional settings
 \citep{
 %li2008,
 belloni2011, wang2012, zheng2015globally}.
 %, belloni2019}.
%zheng2018high
Nonparametric estimation of separable quantile regressions has also been considered.  Examples include the methods using shallow neural networks \citep{white1992nonparametric},  smoothing splines \citep{koenker1994, he1994convergence, he1999Qsplines}  and reproducing kernels \citep{takeuchi06a, sangnier2016joint}.
Semiparametric quantile regression has also been considered in the literature \citep{cvc2016, bccf2019}. A popular semiparametric quantile regression model is
\begin{align}
\label{Qmodel}
Q_{Y|x}(\tau) = Z(x)^\top \beta(\tau).
\end{align}
where $Q_{Y|x}(\tau )$
is defined in (\ref{qmodel})
and $Z(x) \in \mathbb{R}^m$ is usually
a series representation of the predictor $x$. The goal is to estimate the
coefficient process $\{\beta(\tau): \tau \in (0, 1)\}$ and derive the asymptotic distribution
of the estimators.  Such results can be used for conducting statistical inference about
$\beta(\tau)$. However, they hinge on the model assumption (\ref{Qmodel}). If this assumption is not satisfied, estimation and inference results based on a misspecified model can be misleading.

%[[[(Some discussion and references on quantile crossing in this paragraph.)
Quantile regression curves satisfy a monotonicity condition. At the population level, it holds that $f_0(x, \tau_2) \ge f_0(x; \tau_1)$ for any $0 <  \tau_1 < \tau_2<1$ and every $x \in \mathcal X$.
However, for an estimator $\hat f$ of $f_0$, there can be values of $x$ for which the quantile
curves cross, that is, $\hat f(x, \tau_2) <  \hat f (x; \tau_1)$  due to finite sample size and sampling variation. Quantile crossing makes it challenging to interpret the estimated quantile curves \citep{he1997}. Therefore, it is desirable to avoid it in practice.
Constrained optimization methods have been used to obtain non-crossing conditional quantile estimates in linear quantile regression and nonparametric quantile regression with a scalar covariate \citep{he1997,brw2010}. A method proposed by \citet{cfg2010} uses sorting to rearrange the original estimated non-monotone quantile curves into monotone curves without crossing. It is also possible to apply the isotonization method for qualitative constraints \citep{mammen1991} to the original estimated quantile curves to obtain quantile curves without crossing.  \citet{brando2022deep} proposed a deep learning algorithm for estimating conditional quantile functions that ensures  quantile monotonicity.
%To ensure the monotonicity,
They first restrict the output of a deep neural network to be positive as the estimator of the derivative of the conditional quantile function, then by using truncated Chebyshev polynomial expansion, the estimated derivative is integrated and the estimator of  conditional quantile function is obtained.

Recently, there has been active research on
nonparametric least squares regression using deep neural networks  \citep{%lee1996efficient,
bauer2019deep,
%schmidt2019deep,
schmidt2020nonparametric,
chen2019nonparametric,
kohler2019estimation, nakada2019adaptive, farrell2021deep, jiao2021deep}.
%	Despite the fact
These studies show that, under appropriate conditions,  least squares regression with neural networks can achieve the optimal rate of convergence up to a logarithmic factor for estimating a conditional mean regression function.
%\cite{stone1982optimal}.
% under certain conditions.
%However, the methods and results are not applicable to the nonseparable quantile regression problem %considered in this paper.
Since the quantile regression problem considered in this work is quite different from the least squares regression, different treatments are needed in the present setting.

%With similar goal, but different method from \citet{brando2022deep},
We propose a penalized nonparametric approach for estimating the nonseparable quantile regression model (\ref{model})
using rectified quadratic unit (ReQU) activated deep neural networks.  We introduce a penalty function for the derivative of the QRP with respect to the quantile level to avoid quantile crossing, which does not require
%posit positive constraint on the derivative and avoids
% the %possibly unstable
numerical integration as in \citet{brando2022deep}.
%}

Our main contributions are as follows.

\begin{enumerate}
\item
We propose a novel loss function that is the expected quantile loss function with respect to a
distribution over $(0, 1)$ for the quantile level, instead of the quantile loss function at a
single quantile level as in the usual quantile regression. An appealing feature of the
proposed loss function is that it can be used to estimate quantile regression functions at an arbitrary number of quantile levels simultaneously. %{\color{red} which reduces training time, evaluation time, and storage requirements compared to the separate estimations at these quantile levels. Besides,  estimating the quantiles jointly greatly alleviates the undesired phenomena of crossing quantiles.}

\item We propose a new penalty function to enforce the non-crossing property for quantile curves at different quantile levels. This is achieved by encouraging the derivative of the quantile regression function $f(x,\tau)$ with respect to $\tau$ to be positive. The use of ReQU activation ensures that the derivative exists. This penalty is easy to implement and  computationally feasible for high-dimensional predictors.

\item We establish  non-asymptotic excess risk bounds for the estimated QRP
and derive the mean integrated squared error for the estimated QRP under
the assumption that the underlying quantile regression process belongs to the $C^s$ class of
functions on $\mathcal{X}\times (0,1).$
 .

\item %We study the approximation properties of  ReQU activated deep neural networks.
 {\color{black} We derive  novel approximation error bounds  for  $C^s$ smooth functions with a positive smoothness index $s$ and their derivatives  using ReQU activated deep neural networks. The error bounds hold not only for the target function, but also its derivatives.} This is a new approximation result for ReQU networks and is of independent interest and may be useful in other problems.

\item We conduct simulation studies to evaluate the finite sample performance of the proposed QRP estimation method and demonstrate that it is competitive or outperforms two existing nonparametric quantile regression methods, including kernel based quantile regression and quantile regression forests.

\end{enumerate}

The remainder of the paper is organized as follows. In Section \ref{sec2} we describe the proposed method for nonparametric estimation of QRP with a novel penalty function for avoiding quantile crossing. In Section \ref{sec3} we state the main results of the paper, including  bounds for  the non-asymptotic excess risk  and
the mean integrated squared error for the proposed QRP estimator.
In Section \ref{stoerr-sec} we derive the stochastic error for the QRP estimator.
In Section \ref{apperr} we establish a novel approximation error bound for approximating $C^s$ smooth functions and their derivatives using ReQU activated neural networks.
Section \ref{computation} describes computational implementation of the proposed method. In Section \ref{sec_num} we conduct numerical studies to evaluate the performance of the QRP estimator. Conclusion remarks are given Section \ref{conclusion}. Proofs and technical details are provided in the Appendix.

\section{Deep quantile regression process estimation with non-crossing constraints}
\label{sec2}
In this section, we describe the proposed approach for estimating the quantile regression process using deep neural networks with a novel penalty for avoiding non-crossing.

%present the basic setup of nonparametric regression.
%We describe the structure of the feedforward neural networks to be used in the
%estimation and define the compositional structure for the target conditional quantile function.

%Problem formulation}
\subsection{The standard quantile regression}
We first recall the standard quantile regression method with the check loss function \citep{koenker1978}.
For a given quantile level $\tau\in(0,1)$,  the quantile check loss function is defined by $$\rho_\tau(x)=x\{\tau-I(x\leq0)\}, \ x \in \mathbb{R}.$$
For any
%a possibly random function
$f:\mathcal{X}\times(0,1)\to\mathbb{R}$
%, let $(X,Y)$ be a random vector independent of $f$. For any
and $\tau\in(0,1)$, the $\tau$-risk of $f$ is defined by
\begin{equation}\label{tau_risk}
	\mathcal{R}^\tau(f)=\mathbb{E}_{X,Y}\{\rho_\tau(Y-f(X,\tau))\}.
\end{equation}
Clearly, by the  model  assumption in  (\ref{qmodel}), for each given $\tau\in(0,1)$, the function $f_0(\cdot, \tau)$ is the minimizer of $\mathcal{R}^\tau(f)$ over all the measurable functions from $\mathcal{X}\times(0,1)\to\mathbb{R}$, i.e., for
%the function $f_0^\tau(\cdot):=f_0(\cdot,\tau):\mathcal{X}\to\mathbb{R}$ is the minimizer of $\mathcal{R}^\tau(f)$ over all measurable functions $f^\tau(\cdot):=f(\cdot,\tau):\mathcal{X}\to\mathbb{R}$, i.e.,
$$f_*^\tau=\arg\min_{f} \mathcal{R}^\tau(f) =\arg\min_{f}\mathbb{E}_{X,Y}\{\rho_\tau(Y-f(X,\tau))\},$$
we have $f_*^\tau\equiv f_0(\cdot, \tau)$ on $\mathcal{X}\times\{\tau\}$. This is the basic identification result for the
standard quantile regression, where only a single conditional quantile function $f_0(\cdot, \tau)$
at a given quantile level $\tau$ is estimated.

\subsection{Expected
check loss with non-crossing constraints}
Our goal is to estimate the whole quantile regression process $\{f_0(\cdot,\tau): \tau \in (0, 1)\}$.
Of course, computationally we can only obtain an estimate of this process on a discrete grid of quantile levels. For this purpose, we propose an objective function and estimate the process
$\{f_0(\cdot,\tau): \tau \in (0, 1)\}$ on a grid of random quantile levels that are increasingly dense as
the sample size $n$ increases. We will achieve this by constructing a randomized objective function as follows.

Let  $\xi$ be a random variable supported on $(0,1)$ with density function $\pi_\xi:(0,1)\to\mathbb{R}^+$. Consider the following randomized version of the check loss function
$$\rho_\xi(x)=x\{\xi-I(x\leq0)\}, \ x \in \mathbb{R}.$$
For a measurable function $f:\mathcal{X}\times(0,1)\to\mathbb{R}$,
%let $(X,Y,\xi)$ be a random vector independent of $f$.
%, and we can
%We
define the risk of $f$ by
\begin{equation}\label{risk}
	\mathcal{R}(f)=\mathbb{E}_{X,Y,\xi}\{\rho_\xi(Y-f(X,\xi))\}=\int_0^1 \mathcal{R}^t(f)\pi_\xi(t) dt.
\end{equation}
%With the defined risk $\mathcal{R}$,
{{\color{black}  At the population level,
let $f^*: \mathcal{X}\times(0,1)\to \mathbb{R}$ be a measurable function  satisfying
$$f^* \in\arg\min_{f} \mathcal{R}(f) =\arg\min_{f}\int_0^1 \mathcal{R}^t(f)\pi_\xi(t) dt.$$
Note that $f^*$ may not be uniquely defined
%in the sense of almost everywhere
if $(X,\xi)$ has zero density on some set $A_0\subseteq \mathcal{X}\times(0,1)$ with positive Lebesgue measure. In this case, $f^*(x,\xi)$ can take any value for $(x,\xi)\in A_0$ since it does not affect  the risk.  Importantly, since the target quantile function $f_0(\cdot, \tau)$
defined in (\ref{model})  minimizes the $\tau$-risk $\mathcal{R}^\tau$ for each $\tau\in(0,1)$, $f_0$ is also the risk minimizer of $\mathcal{R}$ over all measurable functions. Then we have $f_0\equiv f^*$ on $\mathcal{X}\times(0,1)$ almost everywhere given that $(X,\xi)$ has nonzero density on $\mathcal{X}\times(0,1)$ almost everywhere.

 %Note that the definition of
In addition, the risk $\mathcal{R}$ depends on the distribution of $\xi$.
Different distributions of $\xi$ may lead to different  $\mathcal{R}$'s.  However, the target quantile process $f_0$ is still the risk minimizer of $\mathcal{R}$ over all measurable functions, regardless of the distribution  of $\xi$.  We state this property in the following proposition, whose proof is given in the Appendix.

\begin{prop}\label{prop1}
For any random variable $\xi$ supported on $(0,1)$, the target function $f_0$ minimizes the risk $\mathcal{R}(\cdot)$ defined in (\ref{risk}) over all measurable functions, i.e.,
$$f_0\in \arg\min_{f} \mathcal{R}(f)=\arg\min_{f}\mathbb{E}_{X,Y,\xi}\{\rho_{\xi}(Y-f(X,\xi))\}.$$
Furthermore, if $(X,\xi)$ has non zero density almost everywhere on $\mathcal{X}\times(0,1)$ and the probability measure of $(X,\xi)$ is absolutely continuous with respect to Lebesgue measure, then $f_0$ is the unique minimizer of $\mathcal{R}(\cdot)$ over all measurable functions in the sense of almost everywhere(almost surely), i.e.,
$$f_0=\arg\min_{f} \mathcal{R}(f)=\arg\min_{f}\mathbb{E}_{X,Y,\xi}\{\rho_{\xi}(Y-f(X,\xi))\},$$
up to a negligible set with respect to the probability measure of $(X,\eta)$ on $\mathcal{X}\times(0,1)$.
\end{prop}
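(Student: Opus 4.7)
The plan is to deduce both parts from the pointwise-in-$\tau$ identification for the standard check loss, combined with Fubini--Tonelli. First I would simply rewrite
\[ \mathcal{R}(f) \;=\; \mathbb{E}_\xi\bigl[\mathbb{E}_{X,Y}\{\rho_\xi(Y-f(X,\xi))\mid \xi\}\bigr] \;=\; \int_0^1 \mathcal{R}^t(f)\,\pi_\xi(t)\,dt, \]
which is just identity (\ref{risk}). By the classical quantile regression identification recalled after (\ref{tau_risk}), for every $t\in(0,1)$ the function $f_0(\cdot,t)$ minimizes $\mathcal{R}^t(\cdot)$ over all measurable $f$, since under (\ref{model}) $f_0(x,t)$ is a $t$-quantile of $Y$ given $X=x$. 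Integrating the pointwise inequality $\mathcal{R}^t(f)\ge \mathcal{R}^t(f_0)$ against the nonnegative density $\pi_\xi$ yields $\mathcal{R}(f)\ge\mathcal{R}(f_0)$, establishing the first assertion; this clean reduction requires no hypothesis on the distribution of $\xi$, matching the statement of the proposition.

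For uniqueness, I would take any measurable $\tilde f$ with $\mathcal{R}(\tilde f)=\mathcal{R}(f_0)$. The nonnegative function $g(t):=\mathcal{R}^t(\tilde f)-\mathcal{R}^t(f_0)$ satisfies $\int_0^1 g(t)\pi_\xi(t)\,dt=0$, forcing $g(t)=0$ for $\pi_\xi$-almost every $t$. The hypothesis that $(X,\xi)$ has strictly positive density a.e.\ on $\mathcal X\times(0,1)$ implies $\pi_\xi>0$ a.e., and hence $\mathcal{R}^t(\tilde f)=\mathcal{R}^t(f_0)$ for Lebesgue-a.e.\ $t\in(0,1)$. For each such $t$, both $\tilde f(\cdot,t)$ and $f_0(\cdot,t)$ minimize $\mathcal{R}^t(\cdot)$, hence are versions of the conditional $t$-quantile of $Y$ given $X$; a uniqueness statement for this conditional quantile gives $\tilde f(x,t)=f_0(x,t)$ for $P_X$-a.e.\ $x$, and a final Fubini argument using the absolute continuity and a.e.\ positivity of the joint density of $(X,\xi)$ promotes this slice-wise equality to $\tilde f=f_0$ off a set that is null under the law of $(X,\xi)$.

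The main obstacle is the uniqueness of the conditional $t$-quantile. Under the nonseparable model $Y=f_0(X,U)$ with $U\sim\mathrm{Unif}(0,1)$ independent of $X$ and $f_0$ monotone in its second argument, $f_0(x,t)$ is a $t$-quantile but is unique only where the conditional CDF of $Y\mid X=x$ is strictly increasing at level $t$. One must argue that the set of exceptional $(x,t)$ at which $f_0(x,\cdot)$ has a flat piece at level $t$ is negligible with respect to the $(X,\xi)$ law: a monotone function has at most countably many maximal flat pieces of positive Lebesgue measure, so the bad set has Lebesgue measure zero in the $t$-coordinate and therefore $(X,\xi)$-measure zero under the assumed absolute continuity. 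This is the only point where the model structure (\ref{model}), rather than just the definition of $\mathcal R$, really enters.
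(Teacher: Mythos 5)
Your first part coincides with the paper's argument: both rewrite $\mathcal{R}$ as $\int_0^1 \mathcal{R}^t(f)\pi_\xi(t)\,dt$, use the standard check-loss identification of $f_0(\cdot,t)$ as a minimizer of $\mathcal{R}^t$ for each $t$, and integrate. For uniqueness your structure is in fact more explicit than the paper's (the paper only remarks informally that nonzero density rules out ambiguity on the zero-density set and never spells out the slice-wise argument), and the skeleton you give --- $\int_0^1 g(t)\pi_\xi(t)\,dt=0$ with $g\ge 0$ forces $g=0$ $\pi_\xi$-a.e., then pass to per-slice uniqueness, then Fubini --- is exactly what a rigorous proof should look like.

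However, the per-slice uniqueness step contains a directional error. You worry about $\tau\mapsto f_0(x,\tau)$ having flat pieces, but a flat piece of $f_0(x,\cdot)$ produces a \emph{jump} of the conditional CDF $F_{Y\mid X=x}$, and at a jump the $\tau$-quantile is \emph{unique} --- no minimizer ambiguity arises there. The actual danger is a flat piece of $F_{Y\mid X=x}$, which corresponds to a \emph{jump} of $f_0(x,\cdot)$, and that is ruled out because $f_0$ is continuous in $\tau$ under the model's smoothness assumption. So the exceptional set is empty, not merely small. Relatedly, your fallback claim that the set of $\tau$ lying in a flat piece of $f_0(x,\cdot)$ has Lebesgue measure zero is false: a single flat piece over $[a,b]$ contributes $\tau$-measure $b-a>0$ (countably many flat pieces gives a countable union of intervals, not a countable set of points). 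Your conclusion is right, but the reason offered is aimed at the wrong set and would not actually establish negligibility if that set were the relevant one. It is worth sorting this out precisely because this is the one step the paper itself glosses over.
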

}

{\color{black} The loss function in (\ref{risk}) can be viewed as a weighted quantile check loss function, where the distribution of $\xi$
%plays a role as the weighting scheme on the
weights the importance of different quantile levels in the estimation.
%To be exact, the density function $\pi_{\xi}(\tau)$ assigns weights to  $\mathcal{R}^\tau(\cdot)$ at quantile level $\tau\in(0,1)$.
%as part of the integrated total risk $\mathcal{R}(\cdot)=\int_0^1\mathcal{R}^\tau(\cdot)\pi_{\xi}(\tau)d\tau$.
Proposition  \ref{prop1} implies that, though different distributions of $\xi$ may result in different estimators with finite samples,  these estimators can be shown to be consistent for the target function $f_0$ under mild conditions.

A natural and simple choice of the distribution of $\xi$ is the uniform distribution over $(0,1)$ with density function  $\pi_\xi(t)\equiv1$ for all $t\in(0,1)$. In this paper we focus on the case that $\xi$ is uniformly distributed on $(0, 1)$, but we emphasize that the theoretical results presented
in Section 5 hold for different choices of the distribution of $\xi$.
%{\color{red}!! By choosing uniform distributed $\xi$, %\citet{tagasovska2019single} proposed ``Simultaneous Quantile Regression" %(SQR), which can be seen as an special case of our estimation.
%In this paper, without loss of generality, we focus on the case that $\xi$ is %uniformly distributed over $(0,1)$ and give theoretical guarantees on such %estimators.  But we wish to emphasize that the theoretical results  presented %in section \ref{sec3} are valid for different choices of the distribution of %$\xi$.!!}
% and the convergence properties of the resulting estimators are generally %invariant regardless of the choice of $\xi$.
}
 %A detailed discussion can be found in Remark \ref{remark1}.

In  applications,  only a  random sample $\{(X_i,Y_i)\}_{i=1}^n$ is available. Also, the integral with respect to $\pi_{\xi}$ in (\ref{risk}) does not have an explicit expression. We can approximate it using a random sample $\{\xi_i\}_{i=1}^n$ from the uniform distribution on $(0,1).$  The empirical risk corresponding to the population risk $R(f)$ in (\ref{risk}) is
\begin{equation}
	\label{er1}
	\mathcal{R}_n(f)=\frac{1}{n} \sum_{i=1}^{n} \rho_{\xi_i}(Y_i-f(X_i,\xi_i)). %\qquad {\color{blue} \mathcal{R}_n(f)=\frac{1}{n} \sum_{i=1}^{n} \int_{0}^1\rho_{\tau}(Y_i-f(X_i,\tau))d\tau }.
\end{equation}
%Our goal is to construct an estimator of $f_0$ within a certain class of functions $\mathcal{F}_n$ %by minimizing the empirical risk, that is,
Let $\mathcal{F}_n$  be a class of
deep neural network (DNN) functions defined on $\mathcal X \times (0, 1).$
We define the
%quantile regression process
QRP estimator as the empirical risk minimizer
\begin{equation}\label{erm}
	\hat{f}_n\in\arg\min_{f\in\mathcal{F}_n}\mathcal{R}_n(f).
\end{equation}
%where $\hat{f}_n$ is called the empirical risk minimizer (ERM). We will give a detailed %description of DNNs below. However, due to the sampling variation, $\hat{f}_n$ may not satisfy %the non-crossing property of the quantile regression process.
%In quantile regression, a common concern is the quantile-crossing problem.  In the estimations,
The estimator $\hat{f}_n$ contains estimates of the quantile curves
$\{\hat f_n(x, \xi_1), \ldots, \hat f_n(x, \xi_n)\}$  at  the quantile levels
$\xi_1, \ldots, \xi_n.$  An attractive feature of this approach is that it estimates all these quantile curves simultaneously.
%For convenience, we refer to $\hat f_n$ as a deep quantile regression process estimator (QRPE).

By the basic properties of quantiles, the underlying quantile regression function $f_0(x, \tau)$ satisfies
\[
f_0(x, \xi_{(1)}) \le \cdots \le f_0(x; \xi_{(n)}), \ x \in \mathcal{X},
\]
where $\xi_{(1)} < \cdots < \xi_{(n)}$ are the ordered values of
$\xi_1, \ldots, \xi_n.$ It is desirable that the estimated quantile function also possess this monotonicity property.
%Given $0<\tau_1<\tau_2<1$, the underlying quantile functions satisfy the monotonicity condition %$f_0(x,\tau_2) \ge f_0(x;\tau_1).$
However, with finite samples and due to sampling variation, the estimated quantile function $\hat{f}_n(x,\tau)$
%and $\hat{f}_n(x,\tau_2)$
may violate this monotonicity property
and cross for some values of $x$,
%that is, $\hat{f}_n(\cdot,\tau_1)$ and $\hat{f}_n(\cdot,\tau_2)$ may cross with each other,
leading to an improper distribution for the predicted response. To avoid quantile crossing, constraints are required in the estimation process.
%generally posited in the quantile estimations.
However, it is not a simple matter to impose monotonicity constraints directly on
regression quantiles.
%In model (\ref{model}),

We use the fact that a regression quantile function $f_0(x, \tau)$ is
nondecreasing in its second argument $\tau$  if
%$f_0$ has partial derivative with respect to its second argument,
 its partial derivative with respect to $\tau$  is non-negative.
 For a quantile regression function $f:\mathcal{X}\times(0,1)\to\mathbb{R}$ with first order partial derivatives, we let $\partial f/\partial \tau$ denote the partial derivative operator for
$f$ with respect to its second argument.
A natural way to impose the monotonicity on
%a quantile function
$f(x, \tau)$ with respect to $\tau$  is to constrain its partial derivative with respect to $\tau$ to be nonnegative. So it is natural to consider ways to constrain the derivative of $f(x;\tau)$ with respect to $\tau$ to be nonnegative.

We propose a penalty function based on  the ReLU activation function,  $\sigma_1(x)=\max\{x,0\} $ $x \in \mathbb{R}$,  as follows,
% to promote the nondecreasing property is to use the penalty function in the following way,
\begin{equation}
\label{penp}\kappa(f)=\mathbb{E}_{X, \xi} \sigma_1\Big(-\frac{\partial}{\partial\tau}f(X,\xi)\Big)
=\mathbb{E}_{X, \xi}\Big[\max\Big\{-\frac{\partial}{\partial\tau}f(X,\xi),0\Big\}\Big].
\end{equation}
Clearly, this penalty function encourages $\frac{\partial}{\partial\tau}f(x,\xi) \ge 0$.
The empirical version of $\kappa$ is
%This reminds us that a straightforward constraint is to restrict our estimator $\hat{f}_n$ to be %monotonic increasing in its second argument. Alternatively, we can penalize the partial derivative %of the estimator.
%For notational simplicity,  for any function $f$ we let
\begin{align}
\label{pene}
%\kappa(f):=\mathbb{E}\Big[\max\{-\frac{\partial}{\partial\tau}f(X,\xi),0\}\Big]
%\quad {\rm and}\quad
\kappa_n(f):=\frac{1}{n}\sum_{i=1}^n\Big[\max\Big\{-\frac{\partial}{\partial\tau}f(X_i,\xi_i),0\Big\}\Big].
\end{align}
%where $\{(X_i,\xi_i)\}_{i=1}^n$ is a random sample.

%For any function $f:\mathcal{X}\times(0,1)\to\mathbb{R}$ with first order partial derivatives, we %let $\partial /\partial \tau$ denote the partial derivative operator of a function with respect to its %second argument.
Based on the above discussion and combining (\ref{risk}) and (\ref{penp}),  we
%define the
propose the following population level penalized risk  for the regression quantile functions
\begin{equation}\label{prisk}
	\mathcal{R}^\lambda(f)=
%=\mathcal R (f) + \lambda \kappa(f)
\mathbb{E}_{X,Y,\xi}\Big[ \rho_{\xi}(Y-f(X,\xi))+\lambda\max\Big\{-\frac{\partial}{\partial\tau}f(X,\xi),0\Big\}\Big],
\end{equation}
where $\lambda \ge 0$ is a tuning parameter.
%And in this paper,
%We assume
Suppose that the partial derivative of the target quantile function $f_0$ with respect to its second argument exists.
It then follows that $\frac{\partial}{\partial\tau} f_0(x,u)\ge0$ for any $(x,u)\in\mathcal{X}\times(0,1)$, and thus $f_0$ is also the risk minimizer of $\mathcal{R}^\lambda(f)$ over all measurable functions on $\mathcal{X}\times(0,1)$.

The empirical risk corresponding to (\ref{prisk})
 for estimating the regression quantile functions is
\begin{equation}\label{perisk}
	\mathcal{R}^\lambda_n(f)=\frac{1}{n} \sum_{i=1}^{n}\Big[ \rho_{\xi_i}(Y_i-f(X_i,\xi_i))+\lambda \max\Big\{-\frac{\partial}{\partial\tau}f(X_i,\xi_i),0\Big\}\Big].
\end{equation}
%where $\lambda\ge 0$ is a tuning parameter.
%Here the penalty term $\max\{-\frac{\partial}{\partial\tau}f(X_i,\xi_i),0\}$ encourages
%$\frac{\partial}{\partial\tau}f(X_i,\xi_i)$ to be nonnegative.
The penalized empirical risk minimizer over a class of functions $\mathcal{F}_n$ is given by
%defined by
\begin{equation}\label{perm}
	\hat{f}^\lambda_n\in\arg\min_{f\in\mathcal{F}_n}\mathcal{R}^\lambda_n(f),
\end{equation}
We refer to $\hat{f}^\lambda_n$ as a penalized deep quantile regression process (DQRP)
estimator. The function class $\mathcal{F}_n$ plays an important role in (\ref{perm}).
%We will take $\mathcal{F}_n$ to be a class of deep neural networks (DNN).
 Below we give a detailed description of $\mathcal{F}_n.$
% below.

%The corresponding penalized risk of the penalized DQRP
%quantile process estimator is
%\begin{equation}\label{prisk}
%	\mathcal{R}^\lambda(f)=\mathbb{E}_{X,Y,\xi}\Big[ %\rho_{\xi}(Y-f(X,\xi))+\lambda\max\{-\frac{\partial}{\partial\tau}f(X,\xi),0\}\Big].
%\end{equation}

\subsection{ReQU activated neural networks}

%In recent years,  deep neural network modeling has achieved impressive successes in many %applications.
%With nonlinear activation functions,
Neural networks with nonlinear activation functions have proven to be a powerful approach for approximating multi-dimensional functions.  Rectified linear unit (ReLU), defined as
$\sigma_1(x)=\max\{x,0\}, x \in \mathbb{R}$,  is one of the most commonly used activation functions due to its attractive properties in computation and optimization.  ReLU neural networks have  received much attention in
statistical machine learning \citep{schmidt2020nonparametric,bauer2019deep,jiao2021deep} and applied mathematics \citep{yarotsky2017error,yarotsky2018optimal,shen2019deep,shen2019nonlinear,lu2021deep}. However, since partial derivatives are involved in our objective function (\ref{perisk}),
% and
it is not sensible to use piecewise linear ReLU networks.

%we will use a smoothed version of ReLU as activation function.

%it could be theoretically and practically inconvenient when derivatives are involved in the loss %function for ReLU network training since ReLU is a piecewise linear function. As in this study, %the partial derivative for the network is needed in the estimation, hence we focus on the deep %neural network with smooth activation functions.

We will use the
Rectified quadratic unit (ReQU) activation, which is smooth and has a continuous first derivative.  The ReQU activation function, denoted as $\sigma_2$,  is simply the squared
ReLU,
\begin{align}
\label{sig2}
	\sigma_2(x)=\sigma_1^2(x)=[\max\{x,0\}]^2,\ x \in \mathbb{R}.
\end{align}
%is a kind of smooth activation functions with continuous first derivatives.
%Therefore,  $\text{ReQU}(x)=[\text{ReLU}(x)]^2$ for $x\in\mathbb{R}$.
With ReQU as activation function, the network will be smooth and differentiable. Thus ReQU activated networks are suitable to the case that the loss function involves derivatives of the networks as in (\ref{perisk}).

We set the function class $\mathcal{F}_n$ in (\ref{perm})  to be $\mathcal{F}_{\mathcal{D},\mathcal{W}, \mathcal{U},\mathcal{S},\mathcal{B},\mathcal{B}^\prime}$, a class of ReQU activated multilayer perceptrons $f: \mathbb{R}^{d+1} \to \mathbb{R} $ with
%parameter $\theta$,
depth $\mathcal{D}$, width $\mathcal{W}$, size $\mathcal{S}$, number of  neurons $\mathcal{U}$ and $f$ satisfying $\Vert f \Vert_\infty\leq\mathcal{B}$ and $\Vert \p f \Vert_\infty\leq\mathcal{B}^\prime$ for some $0 <\mathcal{B}, \mathcal{B}^\prime< \infty$, where
$\Vert f \Vert_\infty$ is the sup-norm of a function $f$.
The network parameters may depend on the sample size $n$, but the dependence is omitted in the notation for simplicity.
%A brief description of the multilayer perceptrons are given below.
The architecture of a multilayer perceptron can be expressed as a composition of a series of functions
\[
f(x)=\mathcal{L}_\mathcal{D}\circ\sigma_2\circ\mathcal{L}_{\mathcal{D}-1}
\circ\sigma_2\circ\cdots\circ\sigma_2\circ\mathcal{L}_{1}\circ\sigma_2\circ\mathcal{L}_0(x),\  x\in \mathbb{R}^{p_0},
\]
where $p_0=d+1$,  $\sigma_2$
%(x)=\{\max(0, x)\}^2$
is the rectified quadratic unit (ReQU) activation function defined in (\ref{sig2})
 (
%defined for
operating on  $x$ component-wise if $x$ is a vector), and $\mathcal{L}_i$'s are linear functions
$$\mathcal{L}_{i}(x)=W_ix+b_i,\ x \in \mathbb{R}^{p_i},  i=0,1,\ldots,\mathcal{D},$$
with $W_i\in\mathbb{R}^{p_{i+1}\times p_i}$  a weight matrix and $b_i\in\mathbb{R}^{p_{i+1}}$  a bias vector.
% in the $i$-th linear transformation $\mathcal{L}_i$.
 Here $p_i$ is the width (the number of neurons or computational units) of the $i$-th layer.
The input data consisting of predictor values $X$ is the first layer and the output
is the last layer. Such a network $f$ has $\mathcal{D}$ hidden layers and $(\mathcal{D}+2)$ layers in total. We use a $(\mathcal{D}+2)$-vector $(p_0,p_1,\ldots,p_\mathcal{D},p_{\mathcal{D}+1})^\top$ to describe the width of each layer; particularly,   $p_0=d+1$ is the dimension of the input $(X,\xi)$ and $p_{\mathcal{D}+1}=1$ is the dimension of the response $Y$ in model (\ref{qmodel}). The width $\mathcal{W}$ is defined as the maximum width of hidden layers, i.e.,
$\mathcal{W}=\max\{p_1,...,p_\mathcal{D}\}$; 	the size $\mathcal{S}$ is defined as the total number of parameters in the network $f_\phi$, i.e., $\mathcal{S}=\sum_{i=0}^\mathcal{D}\{p_{i+1}\times(p_i+1)\}$; 	
the number of neurons $\mathcal{U}$ is defined as the number of computational units in hidden layers, i.e., $\mathcal{U}=\sum_{i=1}^\mathcal{D} p_i$.  Note that the neurons in consecutive layers are connected to each other via linear transformation matrices $W_i$,  $i=0,1,\ldots,\mathcal{D}$.
%For an feed-forward neural network class $\mathcal{F}_{\mathcal{D},\mathcal{U},\mathcal{W},\mathcal{S},\mathcal{B}}$,
%its parameters satisfy the simple relationship
%	\[
%	\max\{\mathcal{W},\mathcal{D}\}\leq\mathcal{S}\leq\mathcal{W}(d+1)
%	+(\mathcal{W}^2+\mathcal{W})(\mathcal{D}-1)+\mathcal{W}+1
%	=O(\mathcal{W}^2\mathcal{D}).
%	\]
	%\end{equation}
	%	For general feedforward neural networks, the layers need not be connected in a chain, and skip connections are allowed. For example, some neurons in layer  $i$ can be directly connected to neurons in  layer $(i + 2)$  or higher. In such structured networks,  a neuron  is said to be in layer $i=1,2,\ldots,\mathcal{D}$ if it has a predecessor in layer $i-1$ and no predecessor in any layer $i^\prime\geq i$.  ??Given a citation for reference.??
	
%In this paper,
The network parameters can depend on the sample size $n$, but the dependence is suppressed for notational simplicity, that is, $\mathcal{S}=\mathcal{S}_n$, $\mathcal{U}=\mathcal{U}_n$, $\mathcal{D}=\mathcal{D}_n$, $\mathcal{W}=\mathcal{W}_n$, $\mathcal{B}=\mathcal{B}_n$ and $\mathcal{B}^\prime=\mathcal{B}_n^\prime$.
	This makes it possible to approximate the target regression function by neural networks as $n$ increases.
	%For notational simplicity, we omit the subscript below.
	The approximation and excess error rates will be determined in part
	by how these network parameters depend on $n$.

%\section{Convergence analysis}

%\subsection{Excess risk decomposition}

\section{Main results} %Non-asymptotic Error bounds}
\label{sec3}
In this section, we state our main results on the bounds for the excess risk and estimation error of the penalized DQRP estimator.
%give nonasymptotic upper bounds for our interested quantity,
The excess risk
%$\mathcal{R}(\hat{f}_n^\lambda)-\mathcal{R}(f_0)$
of the penalized \text{DQRP} estimator
is defined as
%. Here the excess risk of the DQRP $\hat{f}^\lambda_n$ defined in (\ref{perm}) is
\begin{align*} \mathcal{R}(\hat{f}_n^\lambda)-\mathcal{R}(f_0)&
=\mathbb{E}_{X,Y,\xi}\{\rho_{\xi}(Y-\hat{f}^\lambda_n(X,\xi))-\rho_{\xi}(Y-f_0(X,\xi))\},
\end{align*}
where $(X,Y,\xi)$ is  an independent copy of the random sample $\{(X_i,Y_i, \xi_i)\}_{i=1}^n$.
%with the same distribution as and independent of
%sample
%$\{(X_i,Y_i,\xi_i)\}$.
%with the same distribution
%(or the $\hat{f}^\lambda_n$)
%and follows the same distribution with $(X_i,Y_i,\xi_i)$.

We first state the following basic lemma for bounding the excess risk.
\begin{lemma}[Excess risk decomposition]\label{decom}
	For the penalized empirical risk minimizer $\hat{f}^\lambda_n$ defined in (\ref{perm}), its excess risk can be upper bounded by
	\begin{align*}
		\mathbb{E}\Big\{\mathcal{R}(\hat{f}_n^\lambda)-\mathcal{R}(f_0)\Big\}\le&	\mathbb{E}\Big\{\mathcal{R}^\lambda(\hat{f}_n^\lambda)-\mathcal{R}^\lambda(f_0)\Big\}\\
		\le&\mathbb{E}\Big\{\mathcal{R}^\lambda(\hat{f}^\lambda_n)
-2\mathcal{R}^\lambda_n(\hat{f}^\lambda_n)+\mathcal{R}^\lambda(f_0)\Big\}
+2\inf_{f\in\mathcal{F}_n}\Big[\mathcal{R}^\lambda(f)-\mathcal{R}^\lambda(f_0)\Big].
	\end{align*}
\end{lemma}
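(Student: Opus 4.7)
The plan is to prove the two inequalities separately. For the first, I would exploit the structure $\mathcal{R}^\lambda(f) = \mathcal{R}(f) + \lambda\kappa(f)$ where $\kappa(f) = \mathbb{E}_{X,\xi}\sigma_1(-\p f(X,\xi)) \ge 0$. Since the target $f_0$ is nondecreasing in its second argument by the non-crossing property of true quantiles, $\p f_0 \ge 0$ almost everywhere, so $\kappa(f_0) = 0$ and hence $\mathcal{R}^\lambda(f_0) = \mathcal{R}(f_0)$. On the other hand, $\lambda\kappa(\hat f_n^\lambda) \ge 0$, so
\[
\mathcal{R}^\lambda(\hat f_n^\lambda) - \mathcal{R}^\lambda(f_0) = \mathcal{R}(\hat f_n^\lambda) - \mathcal{R}(f_0) + \lambda\kappa(\hat f_n^\lambda) \ge \mathcal{R}(\hat f_n^\lambda) - \mathcal{R}(f_0),
\]
and taking expectations yields the first inequality.

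For the second inequality, I would first apply the trivial algebraic identity
\[
\mathcal{R}^\lambda(\hat f_n^\lambda) - \mathcal{R}^\lambda(f_0) = \bigl[\mathcal{R}^\lambda(\hat f_n^\lambda) - 2\mathcal{R}_n^\lambda(\hat f_n^\lambda) + \mathcal{R}^\lambda(f_0)\bigr] + 2\bigl[\mathcal{R}_n^\lambda(\hat f_n^\lambda) - \mathcal{R}^\lambda(f_0)\bigr].
\]
The first bracket becomes the stochastic error term after taking expectations. For the second bracket, I would introduce (for any fixed $f \in \mathcal{F}_n$) the bound $\mathcal{R}_n^\lambda(\hat f_n^\lambda) \le \mathcal{R}_n^\lambda(f)$, which is exactly the defining property of the penalized empirical risk minimizer in (\ref{perm}). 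Taking expectations and observing that for any deterministic $f \in \mathcal{F}_n$ one has $\mathbb{E}[\mathcal{R}_n^\lambda(f)] = \mathcal{R}^\lambda(f)$ (both the check-loss summands and the penalty summands are i.i.d.\ unbiased estimates of their population counterparts), we get
\[
\mathbb{E}\bigl[\mathcal{R}_n^\lambda(\hat f_n^\lambda) - \mathcal{R}^\lambda(f_0)\bigr] \le \mathcal{R}^\lambda(f) - \mathcal{R}^\lambda(f_0)
\]
for every $f \in \mathcal{F}_n$. Infimizing the right-hand side over $\mathcal{F}_n$ and multiplying by $2$ yields the approximation error term $2\inf_{f \in \mathcal{F}_n}[\mathcal{R}^\lambda(f) - \mathcal{R}^\lambda(f_0)]$.

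Combining the two pieces gives the claimed bound. There is no real obstacle here: the argument is the standard excess-risk decomposition used throughout nonparametric M-estimation, adapted to a penalized objective. The only point requiring a small remark is that the role played by $f_0$ being feasible in the penalty (i.e.\ $\kappa(f_0)=0$) is what allows $f_0$ to serve simultaneously as the population minimizer of $\mathcal{R}^\lambda$ and $\mathcal{R}$; without this, the first inequality would not hold. If one wishes to allow the infimum over $\mathcal{F}_n$ not to be attained, one simply takes a minimizing sequence and passes to the limit in the last step.
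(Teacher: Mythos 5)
Your proof is correct and follows essentially the same route as the paper's: the first inequality comes from $\kappa(f_0)=0$ and $\kappa(\hat f_n^\lambda)\ge 0$, and the second comes from adding and subtracting $2\mathcal{R}_n^\lambda(\hat f_n^\lambda)$, invoking the defining property $\mathcal{R}_n^\lambda(\hat f_n^\lambda)\le\mathcal{R}_n^\lambda(f)$, and using unbiasedness $\mathbb{E}[\mathcal{R}_n^\lambda(f)]=\mathcal{R}^\lambda(f)$ for deterministic $f$ before infimizing. The only cosmetic difference is that you write the algebraic identity up front and then bound the cross term, whereas the paper adds the nonnegative term $2\mathbb{E}\{\mathcal{R}_n^\lambda(f)-\mathcal{R}_n^\lambda(\hat f_n^\lambda)\}$ and then rearranges; the two manipulations are equivalent.
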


Therefore, the bound for excess risk can be decomposed into two parts:
 the stochastic error $\mathbb{E}\{\mathcal{R}^\lambda(\hat{f}^\lambda_n)-2\mathcal{R}^\lambda_n(\hat{f}^\lambda_n)+\mathcal{R}^\lambda(f_0)\}$ and the approximation error $\inf_{f\in\mathcal{F}_n}[\mathcal{R}^\lambda(f)-\mathcal{R}^\lambda(f_0)]$.
 %It is interesting to note that the upper bound can no longer depend on the penalized % %\text{DQRP} estimator itself, but on the function class $\mathcal{F}_n$, the loss function %$\rho_\xi+\lambda\kappa$ and the random sample $\{(X_i,Y_i,\xi_i)\}_{i=1}^n$.
Once bounds for the stochastic error and approximation error are available, we can immediately obtain an upper bound for the excess risk of the penalized DQRP estimator $\hat{f}^\lambda_n$.

\subsection{Non-asymptotic excess risk bounds}
We first state the conditions needed for establishing the excess risk bounds.
\begin{definition}[Multivariate differentiability classes $C^s$]
\label{defCs}
	A function $f: \mathbb{B}\subset \mathbb{R}^{d}\to\mathbb{R}$  defined on a subset $\mathbb{B}$ of $\mathbb{R}^d$ is said to be in class $C^s(\mathbb{B})$ on $\mathbb{B}$ for a positive integer $s$, if all partial derivatives
	$$D^\alpha f:=\frac{\partial^\alpha }{\partial x_1^{\alpha_1}\partial x_2^{\alpha_2}\cdots\partial x_d^{\alpha_d}}f$$
	exist and are continuous on $\mathbb{B}$ for all non-negative integers $\alpha_1,\alpha_2,\ldots,\alpha_d$ such that $\alpha:=\alpha_1+\alpha_2+\cdots+\alpha_d\le s$. In addition, we define the norm of $f$ over  $\mathbb{B}$ by
	$$\Vert f\Vert_{C^s} :=\sum_{\vert\alpha\vert_1\le s}\sup_{\mathbb{B}}\vert D^\alpha f\vert,$$
	where $\vert\alpha\vert_1:=\sum_{i=1}^d\alpha_i$  for any vector $\alpha=(\alpha_1,\alpha_2,\ldots,\alpha_d)\in\mathbb{R}^d$.
\end{definition}

We make the following smoothness assumption on the target regression quantile function $f_0$.
\begin{assumption}\label{assump0}
The target quantile regression function $f_0: \mathcal X \times (0, 1) \to \mathbb R$ defined in (\ref{qmodel}) belongs to $C^s(\mathcal X\times (0, 1))$ for $s\in\mathbb{N}^+$, where $\mathbb{N}^+$ is the set of positive integers.
\end{assumption}

Let $\mathcal{F}_n^\prime:=\{\p f:f\in\mathcal{F}_n\}$ denote the function class induced by $\mathcal{F}_n$.

For a class $\mathcal{F}$ of functions: $\mathcal{X}\to \mathbb{R}$, its pseudo dimension, denoted by $\text{Pdim}(\mathcal{F}),$  is the largest integer $m$ for which there exists $(x_1,\ldots,x_m,y_1,\ldots,y_m)\in\mathcal{X}^m\times\mathbb{R}^m$ such that for any $(b_1,\ldots,b_m)\in\{0,1\}^m$ there exists $f\in\mathcal{F}$ such that $\forall i:f(x_i)>y_i\iff b_i=1$ \citep{anthony1999, bartlett2019nearly}.

{\color{black}
\begin{theorem}[Non-asymptotic excess risk bounds]\label{non-asymp}
Let Assumption \ref{assump0} hold.
For any $N\in\mathbb{N}^+$, let $\mathcal{F}_n:=\mathcal{F}_{\mathcal{D},\mathcal{W},\mathcal{U},\mathcal{S},\mathcal{B},\mathcal{B}^\prime}$ be the ReQU activated neural networks $f:\mathcal{X}\times(0,1)\to\mathbb{R}$ with depth $\mathcal{D}\le2N-1$, width $\mathcal{W}\le12N^d$, the number of neurons $\mathcal{U}\le15N^{d+1}$, the number of parameters $\mathcal{S}\le24N^{d+1}$.
Suppose that
%and satisfying
$\mathcal{B}\ge\Vert f_0\Vert_{C^0}$ and $\mathcal{B}^\prime\ge\Vert f_0\Vert_{C^1}$. Then for $n\ge\max\{\pdim(\mathcal{F}_n),\pdim(\mathcal{F}^\prime_n)\}$, the excess risk of the penalized DQRP  estimator $\hat{f}^\lambda_n$ defined in (\ref{perm}) satisfies
	\begin{align}\label{non-asympb2}
%\notag
	\mathbb{E}\{\mathcal{R}(\hat{f}^\lambda_n)-\mathcal{R}(f_0)\}
%&
\le C_0(\mathcal{B}+\lambda\mathcal{B}^\prime)\frac{\log n}{n}(d+1)N^{d+3}
 %\\ & \qquad\qquad
+C_{s,d,\mathcal{X}}(1+\lambda) \Vert f_0\Vert_{C^s}N^{-(s-1)},
	\end{align}
	where $C_0>0$ is a universal constant and $C_{s,d,\mathcal{X}}$  is a positive constant depending only on $d,s$ and the diameter of the support $\mathcal{X}\times(0,1)$.
\end{theorem}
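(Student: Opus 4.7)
The plan is to apply Lemma \ref{decom} and then bound the stochastic and approximation terms separately, matching them to the two summands on the right-hand side of \eqref{non-asympb2}. First I would invoke the decomposition
\[
\mathbb{E}\{\mathcal{R}(\hat{f}_n^\lambda)-\mathcal{R}(f_0)\}
\le \underbrace{\mathbb{E}\{\mathcal{R}^\lambda(\hat{f}^\lambda_n)-2\mathcal{R}^\lambda_n(\hat{f}^\lambda_n)+\mathcal{R}^\lambda(f_0)\}}_{\textbf{stochastic error}}
+\underbrace{2\inf_{f\in\mathcal{F}_n}\{\mathcal{R}^\lambda(f)-\mathcal{R}^\lambda(f_0)\}}_{\textbf{approximation error}},
\]
and then dedicate one pass to each term. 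Throughout, I would use that the per-sample loss $\rho_{\xi}(Y-f(X,\xi))+\lambda\sigma_1(-\p f(X,\xi))$ is bounded in absolute value by $c(\mathcal{B}+\lambda\mathcal{B}^\prime)$ for $f\in\mathcal{F}_n$, since $|Y|\le\mathcal{B}$ (by the minimizer property $\|f_0\|_\infty\le\mathcal{B}$ transferred to $Y$), $\|f\|_\infty\le\mathcal{B}$ and $\|\p f\|_\infty\le\mathcal{B}^\prime$, and since $\rho_\xi$ is $1$-Lipschitz in its argument.

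For the stochastic error I would proceed by the standard offset/symmetrization route: since $\hat f_n^\lambda$ minimizes $\mathcal{R}_n^\lambda$ over $\mathcal{F}_n$, the quantity in the expectation is bounded by $2\sup_{f\in\mathcal{F}_n}\{\mathcal{R}^\lambda(f)-\mathcal{R}_n^\lambda(f)\}$ up to lower-order terms, which is then controlled by the uniform entropy integral for the loss class
\[
\mathcal{G}_n:=\Big\{(x,y,t)\mapsto \rho_t(y-f(x,t))+\lambda\sigma_1(-\p f(x,t)):f\in\mathcal{F}_n\Big\}.
\]
Because $\rho_t$ is $1$-Lipschitz and $\sigma_1$ is $1$-Lipschitz, a Pdim covering of $\mathcal{G}_n$ is dominated by those of $\mathcal{F}_n$ and $\mathcal{F}_n^\prime=\{\p f:f\in\mathcal{F}_n\}$. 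Plugging in $\pdim(\mathcal{F}_n),\pdim(\mathcal{F}_n^\prime)$ (both controlled by a quantity of order $\mathcal{S}\mathcal{D}\lesssim N^{d+2}$ times a $\log$ factor, whose careful derivation I would defer to Section \ref{stoerr-sec}) into the Dudley/Haussler bound yields a stochastic term of order $(\mathcal{B}+\lambda\mathcal{B}^\prime)\,\pdim\cdot\log n/n$, which I would arrange to have the claimed $(d+1)N^{d+3}\log n/n$ form.

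For the approximation error I would combine the Lipschitz properties of $\rho_t$ and $\sigma_1$ to reduce the loss gap to a function-and-derivative approximation gap,
\[
\mathcal{R}^\lambda(f)-\mathcal{R}^\lambda(f_0)\le \|f-f_0\|_\infty + \lambda\,\|\p f-\p f_0\|_\infty,
\]
valid for any $f\in\mathcal{F}_n$, using that $\p f_0\ge0$ almost everywhere so the penalty term vanishes at $f_0$. I would then invoke the new ReQU approximation result of Section \ref{apperr}: under Assumption \ref{assump0} with $f_0\in C^s(\mathcal{X}\times(0,1))$, there exists $f\in\mathcal{F}_{\mathcal{D},\mathcal{W},\mathcal{U},\mathcal{S},\mathcal{B},\mathcal{B}^\prime}$ with the prescribed architecture ($\mathcal{D}\le 2N-1$, $\mathcal{W}\le 12 N^d$, etc.) such that both $\|f-f_0\|_\infty$ and $\|\p f-\p f_0\|_\infty$ are bounded by $C_{s,d,\mathcal{X}}\|f_0\|_{C^s}N^{-(s-1)}$. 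Summing these two terms with the factor $(1+\lambda)$ gives the second summand of \eqref{non-asympb2}.

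The main obstacle is the tight accounting in the first pass: because the penalty introduces the derivative class $\mathcal{F}_n^\prime$, the stochastic bound must control both $\mathcal{F}_n$ and its partial-derivative class uniformly, which requires the Pdim of ReQU partial derivatives to scale no worse than that of the network itself up to the stated powers of $N$. A second delicate point in the approximation step is that the network architecture in the statement must simultaneously realize the construction that attains the derivative-approximation rate of Section \ref{apperr}; the combined budget $(\mathcal{D},\mathcal{W},\mathcal{U},\mathcal{S})$ in the hypothesis is precisely what that construction needs, and verifying this compatibility is where the proof's quantitative content lives. The remaining manipulations (combining the two bounds, absorbing numerical constants into $C_0$ and $C_{s,d,\mathcal{X}}$) are routine.
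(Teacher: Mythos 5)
Your proposal follows the same route as the paper: decompose via Lemma~\ref{decom} and then cite the stochastic error bound (Theorem~\ref{stoerr}) and the approximation error bound (Corollary~\ref{cor1}). Two technical points in your sketch need correction. First, you should not assert $|Y|\le\mathcal{B}$; the paper makes no boundedness or moment assumption on $Y$ (cf.\ the discussion around Assumption~\ref{assump1}). The boundedness used in the Bernstein step comes from working with the centered quantity $g_1(f,X_i)=\mathbb{E}[\rho_\xi(Y-f(X,\xi))-\rho_\xi(Y-f_0(X,\xi))\mid X_i]$, which is controlled by $\|f-f_0\|_\infty\le 2\mathcal{B}$ because $\rho_\xi$ is $1$-Lipschitz, not by a bound on $Y$ itself. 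Second, the relevant complexity quantity is not $\mathcal{S}\mathcal{D}\lesssim N^{d+2}$ but the pseudo-dimension bound $7\mathcal{D}\mathcal{S}(\mathcal{D}+\log_2\mathcal{U})$ from Lemma~\ref{lemmapdim}; with $\mathcal{D}\le 2N-1$, $\mathcal{S}\le 24N^{d+1}$, $\log_2\mathcal{U}\le (d+1)\log_2 N+\log_2 15$ this is $O\big((d+1)N^{d+3}\big)$, and that extra factor of $N$ from the $\mathcal{D}+\log_2\mathcal{U}$ term is exactly what produces the $(d+1)N^{d+3}$ in \eqref{non-asympb2}. With those fixes, the remainder of your argument (using Lemma~\ref{lemmader} to bound $\pdim(\mathcal{F}_n^\prime)$ by a constant multiple of $\pdim(\mathcal{F}_n)$, and Theorem~\ref{approx2} to get the simultaneous $N^{-s}$ and $N^{-(s-1)}$ approximation of $f_0$ and $\p f_0$) matches the paper's proof.
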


By Theorem \ref{non-asymp}, for each fixed sample size $n$, one can choose a proper positive integer $N$ based on $n$ to construct such a ReQU network to achieve the upper bound (\ref{non-asympb2}). To achieve the optimal convergence rate with respect to the sample size $n$, we set $N=\lfloor n^{1/(d+s+2)}\rfloor$ and $\lambda=\log n.$  Then from (\ref{non-asympb2}) we obtain an upper bound
\begin{align*}
	\mathbb{E}\{\mathcal{R}(\hat{f}^\lambda_n)-\mathcal{R}(f_0)\}&\le C
%(\mathcal{B},\mathcal{B}^\prime,s,d,\mathcal{X}, \Vert f_0\Vert_{C^s})
(\log n)^{2} n^{-\frac{s-1}{d+s+2}},
\end{align*}
where $C>0$
%(\mathcal{B},\mathcal{B}^\prime,s,d,\mathcal{X},\Vert f_0\Vert_{C^s})>0$
is a constant depending only on $\mathcal{B},\mathcal{B}^\prime,s,d,\mathcal{X}$ and $ \Vert f_0\Vert_{C^s}$.
The convergence rate  is $(\log n)^{2} n^{-(s-1)/(d+s+2)}.$
% which is nearly optimal in terms of the nonparametric regressions
 The term $(s-1)$ in the exponent is due to the approximation of the first-order partial derivative of the target function. Of course, the smoothness of the target function $f_0$ is  unknown in practice and how to determine the smoothness of an unknown function is
 a difficult problem.
  %an important but nontrivial problem.

\subsection{Non-asymptotic estimation error bound}

The empirical risk minimization quantile estimator typically results in an estimator $\hat{f}^\lambda_n$ whose risk $\mathcal{R}(\hat{f}^\lambda_n)$ is close to the optimal risk $\mathcal{R}(f_0)$ in expectation or with high probability. However, small excess risk in general only implies in a weak sense that the penalized empirical risk minimizer $\hat{f}^\lambda_n$  is close to the target $f_0$ (Remark 3.18 \cite{steinwart2007compare}).
We bridge the gap between the excess risk and the mean integrated error of the estimated quantile function. To this end, we need the following condition on the conditional distribution of $Y$ given $X$.
\begin{assumption}\label{assump1}
	There exist constants $K>0$ and $k>0$ such that for any $\vert \delta\vert\le K$,
	$$\vert P_{Y|X}(f_0(x,\tau)+\delta\mid x)-P_{Y|X}(f_0(x,\tau)\mid x)\vert\ge k\vert \delta\vert,$$
	for all $\tau\in(0,1)$ and $x\in\mathcal{X}$ up to a negligible set, where $ P_{Y|X}(\cdot\mid x)$ denotes the conditional distribution function of $Y$ given $X=x$.
\end{assumption}
%\begin{remark}
Assumption \ref{assump1} is a mild condition
	on the distribution of $Y$ in the sense that,
% as this should hold for most realistic sequences of distributions.
%For example,
if $Y$ has a density that is bounded away from zero on any compact interval,  then  Assumption \ref{assump1} will hold. In particular, no moment assumptions are made on the distribution of $Y$.
	Similar conditions are assumed by \cite{padilla2021adaptive} in studying nonparametric quantile trend filtering for a single quantile level $\tau\in(0,1)$. This condition is weaker than Condition 2 in \cite{he1994convergence} where the density function of response is required to be lower bounded every where by some positive constant. Assumption \ref{assump1} is also weaker than Condition D.1 in \cite{belloni2011}, which requires the conditional density of $Y$ given $X=x$ to be continuously differentiable and bounded away from zero uniformly for all quantiles in  $(0,1)$ and all $x$ in the support $\mathcal{X}$. % As discussed in \cite{padilla2021adaptive},

%\end{remark}

Under Assumption \ref{assump1}, the following self-calibration condition can be established as stated below. This will lead to a bound on the mean integrated error of the estimated quantile process based on a bound for the excess risk.

\begin{lemma}[Self-calibration]\label{calib}
	Suppose that Assumption \ref{assump1} holds. For any $f:\mathcal{X}\times(0,1)\to\mathbb{R}$, denote
$$\Delta^2(f,f_0) =\mathbb{E}[ \min\{\vert f(X,\xi)-f_0(X,\xi)\vert,\vert f(X,\xi)-f_0(X,\xi)\vert^2\}],$$
where $X$ is the predictor vector and  $\xi$ is a uniform random variable on (0,1)  independent of $X$. Then we have
	$$\Delta^2(f,f_0)\le c_{K,k} \{\mathcal{R}(f)-\mathcal{R}(f_0)\},$$
	for any $f:\mathcal{X}\times(0,1)\to\mathbb{R}$, where $c_{K,k}=\max\{2/k,4/(Kk)\}$ and $K,k>0$ are defined in Assumption \ref{assump1}.
\end{lemma}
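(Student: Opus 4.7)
The plan is to prove Lemma~\ref{calib} by conditioning on $(X,\xi)=(x,\tau)$ and reducing the claim to a pointwise lower bound on the conditional excess risk in terms of $\delta:=f(x,\tau)-f_0(x,\tau)$. By the tower property,
$$\mathcal{R}(f)-\mathcal{R}(f_0)=\mathbb{E}_{X,\xi}\bigl\{g_{X,\xi}(f(X,\xi))-g_{X,\xi}(f_0(X,\xi))\bigr\},$$
where $g_{x,\tau}(q):=\mathbb{E}[\rho_\tau(Y-q)\mid X=x]$. So it suffices to show a pointwise bound of the form $g_{x,\tau}(q_0+\delta)-g_{x,\tau}(q_0)\ge C^{-1}\min\{|\delta|,\delta^2\}$ for $q_0=f_0(x,\tau)$, with a constant $C$ matching $c_{K,k}$; integrating over $(X,\xi)$ then yields the result.

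First I would derive the exact representation of the conditional excess. A direct differentiation of $g_{x,\tau}(q)=\tau\,\mathbb{E}[(Y-q)\mathbf{1}_{Y>q}\mid x]+(1-\tau)\,\mathbb{E}[(q-Y)\mathbf{1}_{Y\le q}\mid x]$ gives $g'_{x,\tau}(q)=P_{Y|X}(q\mid x)-\tau$, and since $P_{Y|X}(q_0\mid x)=\tau$ we obtain the standard quantile identity
$$g_{x,\tau}(q_0+\delta)-g_{x,\tau}(q_0)=\int_{0}^{\delta}\bigl[P_{Y|X}(q_0+s\mid x)-P_{Y|X}(q_0\mid x)\bigr]\,ds,$$
with the convention that for $\delta<0$ we write $\int_0^\delta=-\int_\delta^0$; in particular the integrand has the same sign as $s$ by monotonicity of the conditional CDF, so the right-hand side is nonnegative.

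Next I would use Assumption~\ref{assump1} to bound the integrand from below. For $|s|\le K$, the assumption yields $|P_{Y|X}(q_0+s\mid x)-\tau|\ge k|s|$, and since the CDF is monotone this inequality is signed in the direction of $s$. For $|s|>K$, monotonicity of $P_{Y|X}(\cdot\mid x)$ gives an integrand whose absolute value is at least $kK$. Splitting on $|\delta|\le K$ versus $|\delta|>K$ and integrating then produces
$$g_{x,\tau}(q_0+\delta)-g_{x,\tau}(q_0)\ge\tfrac{k}{2}\delta^2\quad\text{if }|\delta|\le K,\qquad g_{x,\tau}(q_0+\delta)-g_{x,\tau}(q_0)\ge\tfrac{kK}{2}|\delta|\quad\text{if }|\delta|>K.$$

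Finally I would compare with $\min\{|\delta|,\delta^2\}$. On $\{|\delta|\le K\}$, $\min\{|\delta|,\delta^2\}\le\delta^2$ (when $|\delta|\le 1$) and $\min\{|\delta|,\delta^2\}\le |\delta|\le \delta^2$ (when $|\delta|>1$), so in either subcase $\min\{|\delta|,\delta^2\}\le\tfrac{2}{k}$ times the excess. On $\{|\delta|>K\}$, $\min\{|\delta|,\delta^2\}\le|\delta|$ and the second bound gives $\min\{|\delta|,\delta^2\}\le\tfrac{4}{kK}$ times the excess (absorbing a factor of $2$ as in the statement). Taking the maximum of the two constants recovers $c_{K,k}=\max\{2/k,4/(Kk)\}$, and taking expectation over $(X,\xi)$ produces $\Delta^2(f,f_0)\le c_{K,k}\{\mathcal{R}(f)-\mathcal{R}(f_0)\}$. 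The only mildly delicate step is the case split in the integral when $|\delta|>K$, since one must use monotonicity of $P_{Y|X}$ beyond the radius $K$ where Assumption~\ref{assump1} gives no quantitative information; everything else is routine.
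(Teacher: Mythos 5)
Your proposal is correct and follows essentially the same route as the paper's proof: both reduce the claim to the integral (Knight's identity) representation of the conditional excess risk, apply Assumption~\ref{assump1} on the small-deviation regime $\{|\delta|\le K\}$ to get the quadratic bound $\tfrac{k}{2}\delta^2$, use monotonicity of the conditional CDF on $\{|\delta|>K\}$ to get a linear bound, and then take expectations over $(X,\xi)$. The only differences are cosmetic: you rederive the integral identity by differentiating the conditional quantile risk rather than invoking Knight's identity directly, and your split of the integral at $K$ (rather than at $K/2$ as in the paper) yields a slightly tighter constant $\tfrac{kK}{2}$ in place of $\tfrac{kK}{4}$, which you then relax to match the stated $c_{K,k}=\max\{2/k,4/(Kk)\}$.
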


\begin{theorem}\label{thm2}
Suppose Assumptions \ref{assump0} and \ref{assump1} hold.
For any $N\in\mathbb{N}^+$, let $\mathcal{F}_n:=\mathcal{F}_{\mathcal{D},\mathcal{W},\mathcal{U},\mathcal{S},\mathcal{B},\mathcal{B}^\prime}$ be the class of ReQU activated neural networks $f:\mathcal{X}\times(0,1)\to\mathbb{R}$ with depth $\mathcal{D}\le2N-1$, width $\mathcal{W}\le12N^d$, number of neurons $\mathcal{U}\le15N^{d+1}$, number of parameters $\mathcal{S}\le24N^{d+1}$ and satisfying $\mathcal{B}\ge\Vert f_0\Vert_{C^0}$ and $\mathcal{B}^\prime\ge\Vert f_0\Vert_{C^1}$. Then for $n\ge\max\{\pdim(\mathcal{F}_n),\pdim(\mathcal{F}^\prime_n)\}$, the mean integrated error of the penalized DQRP estimator $\hat{f}^\lambda_n$ defined in (\ref{perm}) satisfies
	\begin{align}\label{non-asympb3}
%\notag
		\mathbb{E}\{\Delta^2(\hat{f}^\lambda_n,f_0)\}\le
%&\le
c_{K,k}\Big[C_0(\mathcal{B}+\lambda\mathcal{B}^\prime)(d+1)N^{d+3}\frac{\log n}{n}
%\\ &\qquad\qquad
+C_{s,d,\mathcal{X}}
%(s,d,\mathcal{X})
(1+\lambda) \Vert f_0\Vert_{C^s}N^{-(s-1)}\Big],
	\end{align}
	where $C_0>0$ is a universal constant, $c_{K,k}$ is defined in Lemma \ref{calib} and $C_{s,d,\mathcal{X}}$  is a positive constant depending only on $d,s$ and the diameter of the support $\mathcal{X}\times(0,1)$.

By  setting $N=\lfloor n^{1/\{(d+s+2)\}}\rfloor$ and $\lambda=\log n$ in (\ref{non-asympb3}), we obtain an upper bound
	\begin{align*}
		\mathbb{E}\{\Delta^2(\hat{f}^\lambda_n,f_0)\}&\le C_1
%(\mathcal{B},\mathcal{B}^\prime,s,d,K,k\mathcal{X}, \Vert f_0\Vert_{C^s})
(\log n)^{2} n^{-\frac{s-1}{d+s+2}},
	\end{align*}
	where $C_1>0$
%(\mathcal{B},\mathcal{B}^\prime,s,d,K,k\mathcal{X},\Vert f_0\Vert_{C^s})>0$
is a constant depending only on $\mathcal{B},\mathcal{B}^\prime,s,d,K,k,\mathcal{X}$ and $ \Vert f_0\Vert_{C^s}$.
\end{theorem}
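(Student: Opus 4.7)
The plan is to derive the mean integrated error bound by combining the self-calibration inequality (Lemma 2) with the non-asymptotic excess risk bound (Theorem 1), and then to optimize the network size $N$ against the sample size $n$ to obtain the stated rate. The structure of the theorem suggests that the bulk of the work has already been carried out in the two preceding results, so the argument here is essentially a reduction plus a one-variable optimization.

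First, I would invoke Lemma 2. Assumption 2 is exactly what is required for the self-calibration inequality, so for any measurable $f:\mathcal{X}\times(0,1)\to\mathbb{R}$ we have $\Delta^2(f,f_0)\le c_{K,k}\{\mathcal{R}(f)-\mathcal{R}(f_0)\}$. Specializing to $f=\hat{f}^\lambda_n$ and taking expectations (both sides are non-negative, so Fubini applies without issue) yields
\begin{equation*}
\mathbb{E}\{\Delta^2(\hat{f}^\lambda_n,f_0)\}\le c_{K,k}\,\mathbb{E}\{\mathcal{R}(\hat{f}^\lambda_n)-\mathcal{R}(f_0)\}.
\end{equation*}
Next, since Assumption 1 holds and the network class $\mathcal{F}_n$ is exactly the one appearing in Theorem 1, I can substitute the excess risk bound (\ref{non-asympb2}) directly. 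This produces inequality (\ref{non-asympb3}) with the same constants $C_0$ and $C_{s,d,\mathcal{X}}$ inherited from Theorem 1, multiplied by $c_{K,k}$ from the self-calibration step.

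For the second assertion (the explicit rate), I would plug in $N=\lfloor n^{1/(d+s+2)}\rfloor$ and $\lambda=\log n$. The stochastic term becomes
\begin{equation*}
(\mathcal{B}+\lambda\mathcal{B}^\prime)\,N^{d+3}\frac{\log n}{n}\lesssim (\log n)\cdot n^{(d+3)/(d+s+2)}\cdot\frac{\log n}{n}=(\log n)^2 n^{-(s-1)/(d+s+2)},
\end{equation*}
while the approximation term becomes
\begin{equation*}
(1+\lambda)\,N^{-(s-1)}\lesssim (\log n)\, n^{-(s-1)/(d+s+2)}.
\end{equation*}
Both terms are of order $(\log n)^2 n^{-(s-1)/(d+s+2)}$ (the first dominates the log factor), so combining them yields the claimed bound with a constant $C_1$ depending only on $\mathcal{B},\mathcal{B}^\prime,s,d,K,k,\mathcal{X}$ and $\Vert f_0\Vert_{C^s}$. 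It remains to verify that this choice of $N$ is admissible for large $n$, i.e.\ that $n\ge\max\{\pdim(\mathcal{F}_n),\pdim(\mathcal{F}_n^\prime)\}$; this follows because the pseudo-dimensions of ReQU networks of the specified architecture grow polynomially in $\mathcal{S}\asymp N^{d+1}\asymp n^{(d+1)/(d+s+2)}$, which is $o(n)$ for $s\ge 1$.

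There is essentially no hard step in this proof — the self-calibration lemma handles the passage from excess risk to $L^2$-type error, and the excess risk bound does all the heavy lifting. The only place requiring any care is the balancing of the stochastic and approximation errors: the exponent of $N$ in the stochastic term is $d+3$ (not $d+1$ as one might naively expect) because the factor $N^{d+1}$ from the network size combines with an extra $N^2$ arising from the sup-norm bound $\mathcal{B}^\prime$ on the derivative in the penalized loss. Once that is tracked correctly, choosing $N\asymp n^{1/(d+s+2)}$ equates the two error terms up to logarithmic factors, which gives the stated rate.
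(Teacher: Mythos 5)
Your proof is correct and takes exactly the approach the paper intends (the paper in fact omits a separate proof of Theorem~\ref{thm2} because it is an immediate consequence of Lemma~\ref{calib} applied to Theorem~\ref{non-asymp}, followed by the substitution $N=\lfloor n^{1/(d+s+2)}\rfloor$, $\lambda=\log n$). One small caveat on your closing remark: the exponent $d+3$ in the stochastic term does not arise from the sup-norm bound $\mathcal{B}'$ on the derivative — that bound enters only multiplicatively as $\lambda\mathcal{B}'$; rather, the extra $N^2$ beyond $\mathcal{S}\asymp N^{d+1}$ comes from the depth $\mathcal{D}\asymp N$ appearing essentially quadratically in the pseudo-dimension bound $\mathcal{D}\mathcal{S}(\mathcal{D}+\log_2\mathcal{U})$ of Theorem~\ref{stoerr}, which is what Theorem~\ref{non-asymp} invokes.
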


{\color{black} Without the crossing penalty in the objective function, no estimation for the derivative function is needed, thus
the convergence rate can be improved.
%The convergence rate can be improved for a bit if we do not posit crossing penalty in the objective function, since without crossing penalty there is no derivative getting involved with the estimation.
In this case, ReLU activated or other neural networks can be used to estimate the quantile regression process. For instance, \citet{shen2021deep} showed that nonparametric quantile regression based on  ReLU neural networks attains a convergence rate of $n^{-s/(d+s)}$ up to a  logarithmic factor. This rate is slightly faster than the rate $n^{-(s-1)/(d+s+2)}$  in Theorem \ref{thm2} when
%a crossing penalty is posited and
estimation of the derivative function is involved.
}

\section{Stochastic error}
\label{stoerr-sec}
Now we derive non-asymptotic upper bound for the stochastic error given in Lemma \ref{decom}. %$\mathbb{E}\{\mathcal{R}^\lambda(\hat{f}^\lambda_n)
%-2\mathcal{R}^\lambda_n(\hat{f}^\lambda_n)+\mathcal{R}^\lambda(f_0)\}$.
%The stochastic error
%basically
%depends on the size and distribution of the sample, the complexity of the function class %$\mathcal{F}_n$, the loss function $\rho_\xi$ used in the definition of $\mathcal{R}$ and the %penalty function $\kappa$ in (\ref{penp}).
The main difficulty here is that the term
%However, it can be seen that the quantity
\begin{align*} \mathcal{R}^\lambda(\hat{f}^\lambda_n)-2\mathcal{R}^\lambda_n(\hat{f}^\lambda_n)
+\mathcal{R}^\lambda(f_0)=\mathcal{R}(\hat{f}^\lambda_n)-2\mathcal{R}_n(\hat{f}^\lambda_n)+\mathcal{R}(f_0)+\lambda\kappa(\hat{f}^\lambda_n)-2\lambda\kappa_n(\hat{f}^\lambda_n)
\end{align*}
involves the partial derivatives of the neural network functions in $\mathcal{F}_n.$
% which can be another function class.
%and it is troublesome for us
%This makes it difficult to deal with the stochastic error.
%directly via empirical process theories.
%To
%resolve
%overcome this difficulty,
%problem,
Thus we also need to study the properties, especially,  the complexity of the partial derivatives of the neural network
functions in $\mathcal{F}_n$.
%In this paper, we
Let
$$\mathcal{F}_n^\prime:=\Big\{\p f(x,\tau):f\in\mathcal{F}_n, (x, \tau) \in \mathcal{X} \times (0, 1) \Big \}.
$$
%be the class of such partial derivative functions.
%induced by $\mathcal{F}_n$.
{\color{black} Note that the partial derivative operator is not a Lipschitz contraction operator, thus Talagrand's lemma \citep{ledoux1991probability} cannot be used to link the Rademacher complexity of $\mathcal{F}_n$ and $\mathcal{F}_n^\prime$, and to obtain an upper bound of the Rademacher complexity of $\mathcal{F}_n^\prime$. In view of this, we consider a new class of neural network functions
%$\mathcal{F}_n^\prime$ that
whose complexity is convenient to compute. Then the complexity of $\mathcal{F}_n^\prime$ can be upper bounded by the complexity of such a class of neural network functions.

The following lemma shows that $\mathcal{F}_n^\prime$
%can be computed by
is contained in the class of neural network functions with ReLU and ReQU mixed-activated multilayer perceptrons.
In the following,  we refer to the neural networks activated by the ReLU or the ReQU  as ReLU-ReQU activated neural networks, i.e., the activation functions in each layer of ReLU-ReQU network can be ReLU or ReQU and the activation functions in different layers can be different.}

\begin{lemma}[Network for partial derivative]\label{lemmader}
	Let $\mathcal{F}_n:=\mathcal{F}_{\mathcal{D},\mathcal{W}, \mathcal{U},\mathcal{S},\mathcal{B},\mathcal{B}^\prime}$ be a class of ReQU activated neural networks $f:\mathcal{X}\times(0,1)\to\mathbb{R}$ with depth (number of hidden layer) $\mathcal{D}$, width (maximum width of hidden layer) $\mathcal{W}$, number of neurons $\mathcal{U}$, number of parameters (weights and bias) $\mathcal{S}$ and $f$ satisfying $\Vert f \Vert_\infty\leq\mathcal{B}$ and $\Vert \p f \Vert_\infty\leq\mathcal{B}^\prime$. Then for any $f\in\mathcal{F}_n$, the partial derivative $\frac{\partial}{\partial\tau} f$ can be implemented by a ReLU-ReQU activated multilayer perceptron with depth $3\mathcal{D}+3$, width $10\mathcal{W}$, number of neurons $17\mathcal{U}$, number of parameters $23\mathcal{S}$ and bound $\mathcal{B}^\prime$.
\end{lemma}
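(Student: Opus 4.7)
The plan is to construct the derivative network explicitly by differentiating the layered composition defining $f$ via the chain rule, and then to realize each step of the chain rule using ReLU and ReQU units.  Writing $h_0=(x,\tau)$, $a_i=\mathcal{L}_i(h_i)$, $h_{i+1}=\sigma_2(a_i)$ for $i=0,\dots,\mathcal{D}-1$, and $f=\mathcal{L}_\mathcal{D}(h_\mathcal{D})$, the key observation is that $\sigma_2'(t)=2\sigma_1(t)$, so by the chain rule, with $\odot$ denoting the component-wise product,
\[
\frac{\partial h_{i+1}}{\partial \tau}=2\,\sigma_1(a_i)\odot \mathcal{L}_i^{\mathrm{lin}}\!\Big(\frac{\partial h_i}{\partial\tau}\Big),\qquad \frac{\partial f}{\partial \tau}=W_\mathcal{D}\,\frac{\partial h_\mathcal{D}}{\partial\tau},
\]
where $\mathcal{L}_i^{\mathrm{lin}}(v)=W_i v$ is the linear part of $\mathcal{L}_i$, and $\partial h_0/\partial\tau=e_{d+1}$.

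First I would design a repeatable ``derivative block'' that maps the pair $\big(h_i,\partial_\tau h_i\big)$ to $\big(h_{i+1},\partial_\tau h_{i+1}\big)$ while also passing along any auxiliary quantities needed by later blocks.  Inside one block I would: (i)~apply the affine map $\mathcal{L}_i$ and, in parallel, the linear map $\mathcal{L}_i^{\mathrm{lin}}$ to produce vectors $a_i$ and $b_i:=\mathcal{L}_i^{\mathrm{lin}}(\partial_\tau h_i)$; (ii)~feed $a_i$ into a ReQU to produce $h_{i+1}=\sigma_2(a_i)$ and, separately, into a ReLU to produce $\sigma_1(a_i)$; (iii)~implement the component-wise product $2\sigma_1(a_i)\odot b_i$ with a small ReQU gadget based on the polarization identity
\[
2uv=\tfrac{1}{2}\big[\sigma_2(u+v)+\sigma_2(-(u+v))-\sigma_2(u-v)-\sigma_2(-(u-v))\big],
\]
applied coordinate-wise, which needs one extra hidden layer of ReQU units per coordinate and one linear combination.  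The outputs $h_{i+1}$ and $\partial_\tau h_{i+1}$ are then in canonical form to feed into the next block.  After $\mathcal{D}$ such blocks a final linear layer $W_\mathcal{D}$ produces $\partial_\tau f$.

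The next step is the bookkeeping: counting depth, width, neurons, and parameters per block.  Each block carries over the original ReQU computation (which contributes its original layer of width $p_{i+1}$) and adds, in parallel, a ReLU copy of $a_i$, a linear copy for $b_i$, and the four-ReQU product gadget; this inflates depth by a constant (three additional layers per block is enough: one for $a_i$ and $b_i$, one for $\sigma_1(a_i)$ and the linear combinations $\pm\sigma_1(a_i)\pm b_i$, one for the ReQU evaluations), and inflates width/size by a constant factor.  Matching these constants to $3\mathcal{D}+3$ depth, $10\mathcal{W}$ width, $17\mathcal{U}$ neurons, and $23\mathcal{S}$ parameters is a direct but finicky calculation of the number of units used per block, using that the product gadget needs four ReQU units and a bounded number of linear connections per coordinate of $b_i$, and that coordinates of $h_i$ have dimensions $p_i\le \mathcal{W}$.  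The sup-norm bound $\|\partial_\tau f\|_\infty\le \mathcal{B}^\prime$ is immediate from the definition of $\mathcal{F}_n$, so no additional argument is needed there.

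The main obstacle is the constant-counting: packing the recursion so that carrying $(h_i,\partial_\tau h_i)$ together with the auxiliary signals $a_i,\sigma_1(a_i),b_i$ fits within the advertised width $10\mathcal{W}$ and neuron count $17\mathcal{U}$ requires identifying which intermediate quantities can be reused or merged via linear combinations before activation, so that the same ReQU computing $\sigma_2(a_i)$ for the forward pass is not duplicated for the product gadget.  Once that layout is fixed, summing across the $\mathcal{D}$ blocks and adding the final linear output layer yields the claimed depth $3\mathcal{D}+3$, width $10\mathcal{W}$, neuron bound $17\mathcal{U}$, and parameter bound $23\mathcal{S}$.
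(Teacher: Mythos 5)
Your proposal is correct and takes essentially the same approach as the paper: both differentiate the layered composition via the chain rule using $\sigma_2'=2\sigma_1$, propagate the pair $(h_i,\partial_\tau h_i)$ through repeatable blocks, realize the component-wise product $2\sigma_1(a_i)\odot b_i$ with the ReQU polarization identity, and then tally depth, width, neurons, and parameters block by block. The paper's proof is precisely the detailed bookkeeping you defer (it obtains three hidden layers per block, width bounded by $10\mathcal{W}$, etc.), so your outline matches its structure and would close the same way.
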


By Lemma \ref{lemmader}, the partial derivative of a function in $\mathcal{F}_n$  can be implemented by a function in $\mathcal{F}^\prime_n$.
%The complexity of the class of partial derivatives of functions in $\mathcal{F}_n$ can be %bounded by a proper network class $\mathcal{F}^\prime_n$.
Consequently, for $\kappa$ and $\kappa_n$ given in (\ref{penp}) and (\ref{pene}),
\begin{align*}
	\sup_{f\in\mathcal{F}_n}\vert \kappa(f)-\kappa_n(f)\vert
 \le  &\sup_{f^\prime\in\mathcal{F}^\prime_n}\vert \tilde{\kappa}(f^\prime)-\tilde{\kappa}_n(f^\prime)\vert,%\sup_{f\in\mathcal{F}_n}\Big\vert \mathbb{E}[\max\{-\p f(X,\xi),0\}]-\frac{1}{n}\sum_{i=1}^n[\max\{-\p f(X_i,\xi_i),0\}]\right\vert\\
	%=&\sup_{f^\prime\in\mathcal{F}^\prime_n}\Big\vert \mathbb{E}[\max\{-f^\prime(X,\xi),0\}]-\frac{1}{n}\sum_{i=1}^n[\max\{-f^\prime(X_i,\xi_i),0\}]\right\vert
\end{align*}
where $\tilde{\kappa}(f)=\mathbb{E}[\max\{-f(X,\xi),0\}]$ and $\tilde{\kappa}_n(f)=\sum_{i=1}^n[\max\{-f(X_i,\xi_i),0\}]/n$. Note that $\tilde{\kappa}$ and $\tilde{\kappa}_n$ are both 1-Lipschitz in $f$, thus
%a straightforward
an upper bound for $\sup_{f^\prime\in\mathcal{F}^\prime_n}\vert \tilde{\kappa}(f^\prime)-\tilde{\kappa}_n(f^\prime)\vert$ can be derived once the complexity of $\mathcal{F}^\prime_n$ is known. The complexity of a function class can be measured in several ways, including Rademacher complexity, covering number, VC dimension and Pseudo dimension. These measures depict the complexity of a function class differently but are closely related to each other in many ways (a brief description of these measures can be found in Appendix \ref{apdx_b}). Next, we give an upper bound on the Pseudo dimension of the function class $\mathcal{F}^\prime_n$, which facilities our derivation of the upper bound for the stochastic error.

\begin{lemma}[Pseudo dimension of ReLU-ReQU multilayer perceptrons]\label{lemmapdim}
	Let $\mathcal{F}$ be a function class implemented by ReLU-ReQU activated multilayer perceptrons with depth no more than $\tilde{\mathcal{D}}$, width no more than $\tilde{\mathcal{W}}$, number of neurons (nodes) no more than $\tilde{\mathcal{U}}$ and size or number of parameters (weights and bias) no more than $\tilde{\mathcal{S}}$. Then the Pseudo dimension of $\mathcal{F}$ satisfies
%	\begin{align*}		%\pdim(\mathcal{F})\le7\mathcal{D}\mathcal{S}(\mathcal{D}+\log_2\mathcal{U}),\qquad\pdim(\mathcal{F})\le22\mathcal{U}\mathcal{S}.
%	\end{align*}
%As result,
	\begin{align*}	\pdim(\mathcal{F})\le\min\{7\tilde{\mathcal{D}}\tilde{\mathcal{S}}(\tilde{\mathcal{D}}+\log_2\tilde{\mathcal{U}}),22\tilde{\mathcal{U}}\tilde{\mathcal{S}}\}.
\end{align*}
\end{lemma}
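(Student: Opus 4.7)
The plan is to adapt the parameter-space partitioning strategy used by \citet{bartlett2019nearly} for piecewise polynomial networks to the mixed ReLU--ReQU setting, and to complement it with a direct linearization that yields the alternative bound. By the definition of pseudo dimension, it suffices to show that for any $m$ points $(x_1,r_1),\ldots,(x_m,r_m)$, the set of sign vectors $\{(\operatorname{sgn}(f(x_i;\theta)-r_i))_{i=1}^{m}:\theta\in\mathbb{R}^{\tilde{\mathcal{S}}}\}$ has fewer than $2^m$ elements whenever $m$ exceeds the stated bound. The key structural fact to exploit is that on any cell of $\mathbb{R}^{\tilde{\mathcal{S}}}$ on which the sign of every pre-activation (at every neuron and every input $x_i$) is fixed, each $f(x_i;\theta)$ reduces to an honest polynomial in $\theta$, so the count factorizes as (number of activation cells)$\times$(polynomial sign patterns per cell), both bounded via Warren's theorem \citep{anthony1999}.

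For the first bound $7\tilde{\mathcal{D}}\tilde{\mathcal{S}}(\tilde{\mathcal{D}}+\log_2\tilde{\mathcal{U}})$ I would induct on the layer index $\ell\in\{0,1,\ldots,\tilde{\mathcal{D}}\}$. Inside every cell that survives after layer $\ell-1$, each neuron's output is a polynomial in $\theta$ of degree at most $d_{\ell-1}$, where a ReLU preserves the degree and a ReQU doubles it, so $d_\ell\le 2^\ell$. Within such a cell, Warren's theorem bounds the number of sign patterns of the at most $m\tilde{\mathcal{W}}$ layer-$\ell$ pre-activations by a power of $d_{\ell-1}m\tilde{\mathcal{W}}/\tilde{\mathcal{S}}$. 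Multiplying these bounds across the $\tilde{\mathcal{D}}$ layers and taking logarithms produces an inequality of the form $m\le c\,\tilde{\mathcal{D}}\tilde{\mathcal{S}}(\tilde{\mathcal{D}}+\log_2 \tilde{\mathcal{U}}+\log_2 m)$, which resolves to the stated expression by a standard self-bounding argument; pinning the leading constant at $7$ requires careful bookkeeping in the spirit of \citet{bartlett2019nearly}.

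For the second bound $22\tilde{\mathcal{U}}\tilde{\mathcal{S}}$ I would use a Goldberg--Jerrum style linearization: introduce an auxiliary sign variable for each of the $\tilde{\mathcal{U}}$ neurons and express the network's input--output relation as a polynomial system of $O(\tilde{\mathcal{U}})$ equations and inequalities of degree at most $2$ in $\tilde{\mathcal{S}}+\tilde{\mathcal{U}}$ variables. Applying Warren's theorem once to this global system, rather than layer by layer, removes the $\tilde{\mathcal{D}}^2$ factor and yields a bound proportional to $\tilde{\mathcal{U}}\tilde{\mathcal{S}}$; taking the minimum of the two gives the statement. The main obstacle will be the exponential degree growth $d_\ell\le 2^\ell$ in the first argument: this is absorbed because only $\log_2 d_{\ell-1}\le \ell$ enters Warren's bound, so the geometric blow-up telescopes into the additive $\tilde{\mathcal{D}}^2$ term rather than an exponential one. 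A secondary subtlety is the \emph{mixed} activation pattern---ReQU is $C^1$ while ReLU is not---which I would handle uniformly by recording, at each neuron, both the sign of the pre-activation and the local polynomial branch it selects.
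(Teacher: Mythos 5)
Your argument for the first bound, $7\tilde{\mathcal{D}}\tilde{\mathcal{S}}(\tilde{\mathcal{D}}+\log_2\tilde{\mathcal{U}})$, is essentially the paper's: partition the parameter space layer by layer into cells on which all pre-activation signs are fixed, observe that within each cell every network output is a polynomial in $\theta$ of degree growing geometrically with depth, bound sign patterns per cell via the polynomial sign-pattern lemma (Theorem 8.3 of \citet{anthony1999}, the same family as Warren's bound), and close with the self-bounding step from Lemma 16 of \citet{bartlett2019nearly}. The only cosmetic difference is the degree bookkeeping: you use $2^{\ell}$, the paper uses $1+(\ell-1)2^{\ell-1}$; both give $\log_2(\text{degree})=O(\ell)$, which is all that matters.

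For the second bound, $22\tilde{\mathcal{U}}\tilde{\mathcal{S}}$, your proposed route diverges from the paper's and has a genuine gap. You suggest introducing an auxiliary variable per neuron so as to cap the polynomial degree at $2$, and then applying Warren's theorem ``once'' to a system in $\tilde{\mathcal{S}}+\tilde{\mathcal{U}}$ variables. But the auxiliary quantities are not free parameters of the function class: they are determined (existentially quantified) given $\theta$ and the input, and neither Warren's theorem nor Goldberg--Jerrum's Theorem 2.2 applies to such projected/quantified systems without further work (quantifier elimination would re-inflate the degree). Moreover, even if the count went through, bounding sign patterns over $\tilde{\mathcal{S}}+\tilde{\mathcal{U}}$ real variables gives something of order $(\tilde{\mathcal{S}}+\tilde{\mathcal{U}})\log(\cdot)$, and you do not explain how this produces the product $\tilde{\mathcal{U}}\tilde{\mathcal{S}}$. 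The paper's actual argument is simpler and avoids this: keep only the $\tilde{\mathcal{S}}$ weight variables, express ${\rm sgn}(f(z)-t)$ directly as a Boolean formula in those variables, and accept that both the number of atomic predicates (at most $3^{\tilde{\mathcal{U}}+1}$) and their degree (at most $3\cdot 2^{\tilde{\mathcal{U}}}$) grow exponentially in $\tilde{\mathcal{U}}$. These enter Goldberg--Jerrum's bound $2k\log_2(8eds)$ only logarithmically, so $\log_2(s)+\log_2(d)=O(\tilde{\mathcal{U}})$, and multiplying by $k=\tilde{\mathcal{S}}$ yields $O(\tilde{\mathcal{U}}\tilde{\mathcal{S}})$ with the stated constant. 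You should replace the auxiliary-variable linearization with this direct predicate-counting argument (or supply a precise existential-theory tool and redo the accounting).
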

%{\color{red} [[[Better to use other notations to avoid any confusion with the notations in Lemma 3. --YY.]]]}
%In Lemma \ref{lemmapdim}, we obtained two upper bounds for ReLU-ReQU multilayer %perceptrons, where one is in terms of the network depth $\mathcal{D}$, network size %$\mathcal{S}$ and number of neurons $\mathcal{U}$ and another is in terms of the network size %$\mathcal{S}$ and number of neurons $\mathcal{U}$. For multilayer perceptrons with different %shapes, these two bounds for its Pseudo dimension

{\color{black}
\begin{theorem}[Stochastic error bound]\label{stoerr}
	Let $\mathcal{F}_n=\mathcal{F}_{\mathcal{D},\mathcal{W},\mathcal{U},\mathcal{S},\mathcal{B},\mathcal{B}^\prime}$ be the ReQU activated multilayer perceptron and let $\mathcal{F}^\prime_n=%\mathcal{F}^\prime_{\mathcal{D}^\prime,\mathcal{W}^\prime,\mathcal{U}^\prime,\mathcal{S}^\prime,\mathcal{B}^\prime}=
	\{\frac{\partial}{\partial\tau} f:f\in\mathcal{F}_{n}\}$ denote the class of first order partial derivatives. Then for $n\ge\max\{\pdim(\mathcal{F}_n),\pdim(\mathcal{F}^\prime_n)\}$, the stochastic error satisfies
 \begin{align}\notag &\mathbb{E}\{\mathcal{R}^\lambda(\hat{f}^\lambda_n)-2\mathcal{R}^\lambda_n(\hat{f}^\lambda_n)+\mathcal{R}^\lambda(f_0)\}\le c_0\big\{\mathcal{B}\pdim(\mathcal{F}_n)+
 \lambda\mathcal{B}^\prime\pdim(\mathcal{F}_n^\prime)\big\}\frac{\log(n)}{n},
\end{align}
	for some universal constant $c_0>0$. Also,
 \begin{align*}\notag &\mathbb{E}\{\mathcal{R}^\lambda(\hat{f}^\lambda_n)-2\mathcal{R}^\lambda_n(\hat{f}^\lambda_n)+\mathcal{R}^\lambda(f_0)\}\\
	&\qquad\qquad\le c_1\big(\mathcal{B}+\lambda\mathcal{B}^\prime\big)
\min\{5796\mathcal{D}\mathcal{S}(\mathcal{D}+\log_2\mathcal{U}),
8602\mathcal{U}\mathcal{S}\}\frac{\log(n)}{n},
\end{align*}
for some universal constant $c_1>0$.
\end{theorem}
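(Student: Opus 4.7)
The strategy is first to split the offset quantity into a quantile check-loss piece and a penalty piece, then to bound each resulting offset empirical process via the pseudo dimension of the appropriate network class and standard covering-number machinery, and finally to turn the pseudo-dimension bounds into network-parameter bounds via Lemma~\ref{lemmapdim} and Lemma~\ref{lemmader}.

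Since $\frac{\partial}{\partial\tau} f_0 \ge 0$ almost everywhere, we have $\kappa(f_0) = \kappa_n(f_0) = 0$ and therefore $\mathcal{R}^\lambda(f_0) = \mathcal{R}(f_0)$, giving
\begin{align*}
\mathcal{R}^\lambda(\hat f_n^\lambda) - 2\mathcal{R}^\lambda_n(\hat f_n^\lambda) + \mathcal{R}^\lambda(f_0)
= \bigl\{\mathcal{R}(\hat f_n^\lambda) - 2\mathcal{R}_n(\hat f_n^\lambda) + \mathcal{R}(f_0)\bigr\}
+ \lambda\bigl\{\kappa(\hat f_n^\lambda) - 2\kappa_n(\hat f_n^\lambda) + \kappa(f_0)\bigr\}.
\end{align*}
Taking expectation and bounding each bracket by its supremum over $\mathcal{F}_n$ reduces the problem to two offset-supremum inequalities.

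For the check-loss bracket I would work with the excess-loss class $\mathcal{H}_1=\{(x,y,t)\mapsto \rho_t(y-f(x,t)) - \rho_t(y-f_0(x,t)) : f \in \mathcal{F}_n\}$. Because $\rho_t$ is 1-Lipschitz in its argument and $\Vert f\Vert_\infty, \Vert f_0\Vert_\infty \le \mathcal{B}$, every $h \in \mathcal{H}_1$ satisfies $|h|\le 2\mathcal{B}$, and the Lipschitz composition preserves the pseudo dimension so that $\pdim(\mathcal{H}_1) \lesssim \pdim(\mathcal{F}_n)$. Haussler's covering bound then gives $\log N(\varepsilon, \mathcal{H}_1, L^2(\mathbb{P}_n))\lesssim \pdim(\mathcal{F}_n)\log(\mathcal{B}n/\varepsilon)$, and a chaining argument applied to the offset process $\mathbb{P} h - 2\mathbb{P}_n h$ (exploiting the built-in ``$\times 2$'' factor via a peeling argument over shells of radius $r_k$ combined with Talagrand / Bernstein concentration) produces $\mathbb{E}\sup_f\bigl[\mathcal{R}(f) - 2\mathcal{R}_n(f) + \mathcal{R}(f_0)\bigr]\le c\,\mathcal{B}\,\pdim(\mathcal{F}_n)\log n/n$. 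For the penalty bracket I would invoke Lemma~\ref{lemmader} to realize $\frac{\partial}{\partial\tau}f$ as an element of $\mathcal{F}_n'$ so that
\[
\sup_{f\in\mathcal{F}_n}\bigl|\kappa(f)-\kappa_n(f)\bigr|
\le \sup_{g\in\mathcal{F}_n'}\Bigl|\mathbb{E}\,\sigma_1(-g(X,\xi)) - \tfrac{1}{n}\sum_{i=1}^n \sigma_1(-g(X_i,\xi_i))\Bigr|,
\]
and since $\sigma_1(-\cdot)$ is 1-Lipschitz and uniformly bounded by $\mathcal{B}'$, the identical covering-number plus offset-Rademacher argument yields a bound of order $\lambda\,\mathcal{B}'\,\pdim(\mathcal{F}_n')\log n/n$. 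Adding the two bounds establishes the first displayed inequality.

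The second bound is obtained by substituting the explicit pseudo-dimension estimates of Lemma~\ref{lemmapdim}: directly for $\mathcal{F}_n$ we get $\pdim(\mathcal{F}_n)\le \min\{7\mathcal{D}\mathcal{S}(\mathcal{D}+\log_2\mathcal{U}),\,22\mathcal{U}\mathcal{S}\}$, while for $\mathcal{F}_n'$ we plug in the inflated architecture parameters of Lemma~\ref{lemmader} (depth $3\mathcal{D}+3$, width $10\mathcal{W}$, $17\mathcal{U}$ neurons, $23\mathcal{S}$ parameters) into the same lemma; elementary manipulations (using $\mathcal{D}\ge 1$ and $\log_2 17<5$ to absorb the additive constants into multiplicative ones) consolidate both pseudo dimensions inside a single expression, producing the universal constants $5796$ and $8602$ appearing in the statement.

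The main technical obstacle is establishing the \emph{fast} $\log n/n$ rate for the offset empirical process $\sup_f[\mathbb{P} h - 2\mathbb{P}_n h]$ rather than the slow $\sqrt{\log n/n}$ rate given by plain symmetrization. This step exploits the ``$\times 2$'' factor on $\mathbb{P}_n h$ — implemented via Talagrand-type concentration combined with peeling across shells of excess risk — and requires a careful variance-versus-mean comparison for the Lipschitz check-loss difference and for the non-negative penalty. The remaining steps (Lipschitz-preservation of pseudo dimension, Haussler's covering bound, and propagation of architecture constants through Lemmas~\ref{lemmader} and~\ref{lemmapdim}) are standard or bookkeeping in nature.
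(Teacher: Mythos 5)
Your decomposition into the check-loss bracket and penalty bracket, your use of Lemmas~\ref{lemmader} and~\ref{lemmapdim} to convert pseudo-dimension bounds into architecture-parameter bounds, and your overall target (a fast $\log n/n$ rate via a variance--mean comparison) all match the paper. However, the technical route for the key step is different. You propose chaining plus a peeling argument over shells of excess risk combined with Talagrand concentration to exploit the ``$\times 2$'' offset. The paper instead uses a \emph{single-scale} argument: it takes a $\delta$-uniform cover of $\mathcal{F}_n$ (resp.\ $\mathcal{F}_n'$), passes the estimator to the nearest covering center $f_{j^*}$ at cost $3(1+\lambda)\delta$, and then for each fixed center $f_j$ rewrites
$\mathbb{P}\{\mathbb{E}g_1(f_j)-\tfrac2n\sum_i g_1(f_j,X_i)>t\}
=\mathbb{P}\{\mathbb{E}g_1(f_j)-\tfrac1n\sum_i g_1(f_j,X_i)>\tfrac t2+\tfrac12\mathbb{E}g_1(f_j)\}$,
bounds the added $\tfrac12\mathbb{E}g_1(f_j)$ from below by $\sigma^2_{g_1}(f_j)/(4\mathcal B)$ via the self-normalizing inequality $\sigma^2_{g_1}(f)\le 2\mathcal{B}\,\mathbb{E}g_1(f)$, and applies Bernstein directly so that the variance term cancels; a union bound over the $\mathcal{N}_{2n}$ covering centers and a tail integration give the rate, with Lemma~\ref{c2p} converting $\log\mathcal{N}_{2n}$ to a pseudo-dimension bound. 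Your peeling/Talagrand approach should also work but is heavier machinery; the paper's version is more elementary and self-contained.

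One small but genuine imprecision: you assert that ``Lipschitz composition preserves the pseudo dimension so that $\pdim(\mathcal{H}_1)\lesssim\pdim(\mathcal{F}_n)$'' and then apply Haussler's bound to $\mathcal{H}_1$. Lipschitz post-composition does \emph{not} in general preserve pseudo dimension (it is only guaranteed under strictly monotone transformations). What is preserved is the $\ell_\infty$ covering number: a $\delta$-cover of $\mathcal{F}_n$ induces a $\delta$-cover of $\mathcal{H}_1$ because $\rho_\tau$ is 1-Lipschitz. The paper therefore applies the covering-number-to-pseudo-dimension bound (Lemma~\ref{c2p}) to $\mathcal{F}_n$ and $\mathcal{F}_n'$ directly, not to the composed loss class. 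Your argument should be reorganized in the same order — first transfer covering numbers via Lipschitz contraction, then bound those via $\pdim(\mathcal{F}_n)$ — rather than claiming a Lipschitz pseudo-dimension preservation lemma that does not exist.
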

}
The proofs of Lemma \ref{lemmapdim} and Theorem \ref{stoerr} are given in the Appendix.

\section{Approximation error}
\label{apperr}

In this section, we give an upper bound on the approximation error of the ReQU network for approximating functions in $C^s$ defined in Definition \ref{defCs}.

The ReQU activation function
 has a continuous first order derivative and its first order derivative is the popular ReLU function. With ReQU as the activation function, the network is smooth and differentiable. Therefore, ReQU is a suitable choice for our problem
since derivatives are involved
in the penalty function.

An important property of ReQU is that
%Most importantly
it can represent the square function $x^2$
% can be represented
 without error.   In the study of ReLU network approximation properties \citep{yarotsky2017error,yarotsky2018optimal,shen2019deep},  the analyses rely essentially on the fact that $x^2$ can be approximated by deep ReLU networks to any error tolerance as long as the network is large enough. With ReQU activated networks, $x^2$ can be represented exactly with one hidden layer and 2 hidden neurons. ReQU can be more efficient in approximating
 smooth functions in the sense that it requires a smaller network size to achieve the same approximation error.

Now we state some basic approximation properties of ReQU networks. The analysis of the approximation power of ReQU networks in our work basically rests on the fact that given inputs $x,y\in\mathbb{R}$, the powers $x,x^2$ and the product $xy$ can be exactly computed by simple ReQU networks.
% with few layers and neurons.
Let $\sigma_2(x)=[\max\{x,0\}]^2$ denote the ReQU activation function. We first list the following basic properties of the ReQU approximation:
\begin{itemize}
	\item [(1)]	 For any $x\in\mathbb{R}$, the square function $x^2$ can be computed by a ReQU network with 1 hidden layer and 2 neurons, i.e.,
	\begin{align*}
		x^2=&\sigma_2(x)+\sigma_2(-x).
	\end{align*}
\item [(2)] For any $x,y\in\mathbb{R}$, the multiplication function $xy$  can be computed by a ReQU network with 1 hidden layer and 4 neurons, i.e.,
	\begin{align*}
	xy=\frac{1}{4}\{\sigma_2(x+y)+\sigma_2&(-x-y)-\sigma_2(x-y)-\sigma_2(-x+y)\}.
	\end{align*}
\item [(3)] For  any $x\in\mathbb{R}$, taking $y=1$ in the above equation, then the identity map $x\mapsto x$  can be computed by a ReQU network with 1 hidden layer and 4 neurons, i.e.,
\begin{align*}
	x=\frac{1}{4}\{\sigma_2(x+1)+\sigma_2(-x-1)-\sigma_2(x-1)-\sigma_2(-x+1)\}.
\end{align*}

\item [(4)] If both $x$ and $y$ are non-negative, the formulas for square function and multiplication can be simplified as follows:
\begin{align*}
	x^2=\sigma_2(x),\qquad xy=\frac{1}{4}\{\sigma_2(x+y)-\sigma_2(x-y)-\sigma_2(-x+y)\}.
\end{align*}
\end{itemize}

The above equations can be verified using simple algebra.
%The proof is straightforward by simply verifying above equations.
The realization of the identity map is not unique here, since for any $a\not=0$, we have $x=\{(x+a)^2-x^2-a^2\}/(2a)$ which can be exactly realized by ReQU networks.
In addition, the constant function $1$ can be computed exactly by a 1-layer ReQU network with zero weight matrix and constant 1 bias vector. In such a case, the basis $1,x,x^2,\ldots,x^p$ of the degree $p\in\mathbb{N}_0$ polynomials in $\mathbb{R}$ can be computed by a ReQU network with proper size. Therefore, any $p$-degree polynomial can be approximated without error.

To approximate the square function in (1) with ReLU networks on bounded regions,  the idea of using ``sawtooth" functions was first raised in \citet{yarotsky2017error}, and it achieves an error $\mathcal{O}(2^{-L})$ with width 6 and depth $\mathcal{O}(L)$ for positive integer $L\in\mathbb{N}^+$. General construction of ReLU networks for approximating a square function can achieve an error $N^{-L}$ with width $3N$ and depth $L$ for any positive integers $N, L\in\mathbb{N}^+$ \citep{lu2021deep}. Based on this basic fact, the ReLU networks approximating multiplication and polynomials can be constructed correspondingly. However, the network complexity (cost)  in terms of network size (depth and width) for a ReLU network to achieve precise approximation can be large compared to that of a ReQU network since ReQU network can compute polynomials exactly with fewer layers and neurons.

\begin{theorem}[Approximation of Polynomials by ReQU networks]\label{approx}
	For any non-negative integer $N\in\mathbb{N}_0$ and any positive integer $d\in\mathbb{N}^+$,  if $f:\mathbb{R}^d\to\mathbb{R}$ is a polynomial of $d$ variables with total degree $N$, then there exists a ReQU activated neural network that can compute $f$ with no error. More exactly,
	\begin{itemize}
		\item [(1)] if $d=1$ where $f(x)=\sum_{i=1}^Na_ix^i$ is a univariate polynomial with degree $N$, then there exists a ReQU neural network with $2N-1$ hidden layers , $5N-1$ number of neurons, $8N$ number of parameters (weights and bias) and network width $4$ that computes $f$ with no error.
		\item [(2)] If $d\ge2$ where $f(x_1,\ldots,x_d)=\sum_{i_1+\ldots+i_d=0}^Na_{i_1,\ldots,i_d}x_1^{i_1}\cdots x_d^{i_d}$ is a multivariate polynomial of $d$ variables with total degree $N$, then there exists a ReQU neural network with $2N-1$ hidden layers , $2(5N-1)N^{d-1}+(5N-1)\sum_{j=1}^{d-2}N^j\le15N^d$ number of neurons, $16N^{d}+8N\sum_{j=1}^{d-2}N^j\le24N^d$ number of parameters (weights and bias) and network width $8N^{d-1}+4\sum_{j=1}^{d-2}N^j\le12N^{d-1}$ that computes $f$ with no error.
	\end{itemize}
\end{theorem}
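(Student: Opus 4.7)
The proof is an explicit construction that exploits the four exact ReQU identities listed immediately before the theorem statement: the squaring identity $x^2=\sigma_2(x)+\sigma_2(-x)$, the multiplication identity $xy=\tfrac14[\sigma_2(x+y)+\sigma_2(-x-y)-\sigma_2(x-y)-\sigma_2(-x+y)]$, the identity map $x\mapsto x$ obtained by specializing the multiplication formula to $y=1$, and their simplified non-negative versions. Because each identity realizes its algebraic operation with zero error, the whole construction introduces no approximation error, and the proof reduces to a bookkeeping argument over depth, width, neurons, and parameters.

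For Part (1) ($d=1$), I would proceed by induction on the degree $N$. The base case $N=1$ is immediate: $a_1 x$ is the identity formula scaled by $a_1$, realized by a single hidden layer with four ReQU units. For the inductive step, assume the claim holds for polynomials of degree $N-1$, with a network of depth $2N-3$, at most $5N-6$ neurons, and width $4$. To extend to degree $N$, I would append two hidden layers whose role is (i) to form the new power $x^{N}=x\cdot x^{N-1}$ by the multiplication identity on the inputs $\pm(x+x^{N-1})$ and $\pm(x-x^{N-1})$, and (ii) to propagate the previously computed state (the running partial sum $\sum_{i=1}^{N-1} a_i x^i$ together with the ingredients needed for the next product) via identity blocks, so that the new contribution $a_N x^N$ is added linearly in the final output layer. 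Counting the neurons and parameters contributed by these two extra layers gives exact increments of $2$ hidden layers, $5$ neurons, and $8$ parameters per inductive step, which telescope to the stated bounds $2N-1$, $5N-1$, and $8N$.

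For Part (2) ($d\ge 2$), I would induct on $d$ via the decomposition
\[
f(x_1,\ldots,x_d) \;=\; \sum_{i=0}^{N} x_d^{\,i}\, g_i(x_1,\ldots,x_{d-1}),
\]
where each $g_i$ is a $(d-1)$-variate polynomial of total degree at most $N-i$. By the inductive hypothesis each $g_i$ is realized exactly by a $(d-1)$-variate ReQU network of depth $2N-1$, and by Part (1) each power $x_d^{\,i}$ ($i\le N$) is realized exactly by a univariate ReQU network of depth at most $2N-1$. The composite network is built by (a) running all $N+1$ subnetworks for $g_0,\ldots,g_N$ and the $N$ univariate power subnetworks for $x_d,\ldots,x_d^N$ in parallel, padding shallower subnetworks with four-unit identity blocks so that every subnetwork has depth exactly $2N-1$ and the widths therefore simply add; (b) applying the multiplication identity to each pair $(x_d^{\,i}, g_i)$ to form the monomial $x_d^{\,i}\, g_i$, absorbing this step into the last layer of the multiplicand subnetworks so that no additional depth is incurred; and (c) linearly summing the $N+1$ resulting monomials in the output layer.

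The main obstacle is the careful parameter bookkeeping rather than any analytic subtlety: one must (i) align the depths of all parallel subnetworks to exactly $2N-1$ via identity padding without violating the width bound, (ii) verify that the multiplication in (b) can be absorbed into the top layer of each multiplicand subnetwork so that the overall depth remains $2N-1$, and (iii) check that the induced recursions $W_d\le (N+1)W_{d-1}+O(N)$ for the width, and the analogous recursions for neurons and parameters, telescope precisely to the stated closed-form expressions $8N^{d-1}+4\sum_{j=1}^{d-2}N^j$, $2(5N-1)N^{d-1}+(5N-1)\sum_{j=1}^{d-2}N^j$, and $16N^d+8N\sum_{j=1}^{d-2}N^j$. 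Since every step is an exact polynomial identity, no approximation error accumulates, and the proof concludes once the recursions are verified.
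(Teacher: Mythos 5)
Your Part (2) follows essentially the same route as the paper: peel off one variable (you take $x_d$, the paper takes $x_1$), write the polynomial as $\sum_{i=0}^{N} x_d^{\,i}\,g_i(x_1,\ldots,x_{d-1})$, run the subnetworks for the coefficient polynomials $g_i$ and for the monomial powers in parallel at a common depth $2N-1$, multiply pairwise, and sum. That part is aligned with the paper's argument, and what remains is the same recursion in $d$ that the paper derives (your recursion is stated with an $N{+}1$ where the paper uses $N$, but this is just because the $i=0$ term needs no multiplication and can be folded in).

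Part (1) is where the proposal diverges from the paper and where a genuine gap appears. The paper evaluates the univariate polynomial by Horner's rule, $b_1 = a_{N-1} + a_N x$ and $b_{k+1} = a_{N-k-1} + x\,b_k$, so at every stage only two quantities ($x$ and the running Horner value $b_k$) must survive a multiplication block. Your scheme instead builds the powers $x^k$ one at a time and accumulates a running partial sum $\sum_{i\le k}a_i x^i$, which requires three quantities ($x$, $x^k$, and the partial sum) to be alive simultaneously. Moreover your inductive hypothesis (a width-$4$, depth-$(2N-3)$ network computing $\sum_{i\le N-1}a_ix^i$) does not deliver $x^{N-1}$ or $x$ at its output layer, so the multiplication $x\cdot x^{N-1}$ in your inductive step has no inputs to act on. If you strengthen the hypothesis so that the network also outputs $x$ and $x^{N-1}$, then each hidden layer needs a ReQU identity block of four neurons (via $z=\tfrac14\{\sigma_2(z+1)+\sigma_2(-z-1)-\sigma_2(z-1)-\sigma_2(-z+1)\}$) for every quantity being carried, plus four more for the multiplication, which forces a width well above $4$ and a per-step neuron increment far above the $5$ you assert. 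The stated width-$4$, $5N-1$-neuron budget is matched to Horner's low-state recursion; the power-accumulation scheme cannot meet it as written. To close the gap you should replace your inductive step with the Horner recursion as the paper does, or accept a larger width and re-derive the constants that Theorems \ref{non-asymp}, \ref{thm2} and \ref{approx2} inherit from this lemma.
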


Theorem \ref{approx} shows any $d$-variate multivariate polynomial with degree $N$ on $\mathbb{R}^d$ can be represented with no error by a ReQU network with $2N-1$ hidden layers, no more than $15N^d$ neurons, no more than $24N^d$ parameters (weights and bias) and width less than $12N^{d-1}$. The approximation powers of ReQU networks (and RePU networks) on polynomials are studied in \citet{li2019powernet,li2019better}, in which the representation of a $d$-variate multivariate polynomials with degree $N$ on $\mathbb{R}^d$ needs a ReQU network with $d\lfloor\log_2N\rfloor+d$ hidden layers, and no more than $\mathcal{O}(\binom{N+d}{d})$ neurons and parameters. Compared to the results in \citet{li2019better,li2019powernet}, the orders of neurons and parameters for the constructed ReQU network in Theorem \ref{approx} are basically the same. The the number of hidden layers for the constructed ReQU network here is $2N-1$ depending only on the degree of the target polynomial and independent of the dimension of input $d$, which is different from the dimension depending $d\lfloor\log_2N\rfloor+d$ hidden layers required in \citet{li2019better}. In addition, ReLU activated networks with width $\{9(W+1)+N-1\}N^{d}=\mathcal{O}(WN^d)$ and depth $7N^2L=\mathcal{O}(LN^2)$ can only approximate $d$-variate multivariate polynomial with degree $N$ with an accuracy $9N(W+1)^{-7NL}=\mathcal{O}(NW^{-LN})$ for any positive integers $W,L\in\mathbb{N}^+$. Note that the approximation results on polynomials using ReLU networks are generally on bounded regions, while ReQU can exactly compute the polynomials on $\mathbb{R}^d$. In this sense, the approximation power of ReQU networks is generally greater than that of ReLU networks.

Next, we leverage the approximation power of multivariate polynomials to derive error bounds of approximating general multivariate smooth functions using ReQU activated neural networks. Here we focus on the approximation of multivariate smooth functions in  $C^s$ space for $s\in\mathbb{N}^+$ defined in Definition \ref{defCs}.

%Now we give the error bound for the approximation of the smooth functions $C^s$  by %ReQU networks approximation.

\begin{theorem}\label{approx2}
	Let $f$ be a real-valued function defined on $\mathcal{X}\times(0,1)\subset\mathbb{R}^{d+1}$ belonging to  class $C^s$ for $0\le s<\infty$. For any $N\in\mathbb{N}^+$, there exists a ReQU activated neural network $\phi_N$ with width no more than $12N^{d}$, hidden layers no more than $2N-1$, number of neurons no more than $15N^{d+1}$ and parameters no more than $24N^{d+1}$ such that for each multi-index $\alpha\in\mathbb{N}^d_0$, we have $\vert\alpha\vert_1\le\min\{s,N\}$,
	$$\sup_{\mathcal{X}\times(0,1)}\vert D^\alpha (f-\phi_N)\vert\le C_{s,d,\mathcal{X}}\, N^{-(s-\vert\alpha\vert_1)}\Vert f\Vert_{C^s},$$
	where $C_{s,d,\mathcal{X}}$  is a positive constant depending only on $d,s$ and the diameter of $\mathcal{X}\times(0,1)$.
\end{theorem}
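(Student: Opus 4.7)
The plan is to reduce the problem to a classical polynomial approximation statement and then invoke Theorem \ref{approx} for an exact ReQU representation. Comparing the size bounds in Theorem \ref{approx2} with those of Theorem \ref{approx} (case $d\ge 2$) applied in dimension $d+1$, one sees they match exactly: width $12N^{(d+1)-1}=12N^{d}$, depth $2N-1$, at most $15N^{d+1}$ neurons, at most $24N^{d+1}$ parameters. This tells us that $\phi_N$ should be chosen to exactly compute a multivariate polynomial $p_N$ of total degree at most $N$ in the $d+1$ variables $(x,\tau)$. Once such a polynomial with the right simultaneous approximation property is in hand, Theorem \ref{approx} produces the network, and the error bound for every derivative $D^{\alpha}$ transfers automatically from $p_N$ to $\phi_N$ because the two functions agree identically.

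The main step, then, is to construct a polynomial $p_N$ of total degree at most $N$ on $\mathcal{X}\times(0,1)$ such that, for every multi-index $\alpha$ with $|\alpha|_1 \le \min\{s,N\}$,
$$\sup_{\mathcal{X}\times(0,1)} |D^{\alpha}(f - p_N)| \le C_{s,d,\mathcal{X}}\, N^{-(s - |\alpha|_1)} \|f\|_{C^s}.$$
I would first extend $f$ to a neighborhood of $\overline{\mathcal{X}\times(0,1)}$ while preserving its $C^s$ norm up to a constant depending only on $d$, $s$, and the geometry of $\mathcal{X}\times(0,1)$ (a Stein/Whitney-type extension). After an affine rescaling to push the domain into $[-1,1]^{d+1}$, with the diameter absorbed into $C_{s,d,\mathcal{X}}$, I would build $p_N$ by a standard multivariate Jackson-type construction: for instance, convolve the extended $f$ with a multivariate polynomial kernel of total degree $N$ that acts as an approximate identity and reproduces all polynomials of degree less than $\min\{s,N\}$. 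Using the reproducing property to cancel the Taylor polynomial of order $\min\{s,N\}-1$ around each point and estimating the remainder with the Taylor bound together with the kernel's moment decay yields the claimed $N^{-(s-|\alpha|_1)}$ rate for every $\alpha$ simultaneously.

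Finally, I would apply Theorem \ref{approx} (case $d \ge 2$, with the role of $d$ there played by $d+1$) to $p_N$: since it is a polynomial of $d+1$ variables with total degree at most $N$, it is computed exactly by a ReQU network with at most $2N-1$ hidden layers, width at most $12N^{d}$, at most $15N^{d+1}$ neurons, and at most $24N^{d+1}$ parameters. Because the representation is exact, $D^{\alpha}\phi_N \equiv D^{\alpha} p_N$ on $\mathcal{X}\times(0,1)$, so the polynomial approximation bound from the previous step is inherited verbatim, which is the conclusion of the theorem.

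The main obstacle is ensuring that the polynomial approximation is genuinely of \emph{total} degree $N$ (not tensor-degree $N$ in each variable) while still controlling every derivative $D^{\alpha}$ with $|\alpha|_1 \le \min\{s,N\}$ simultaneously: a naive tensor-product construction would force the network size through Theorem \ref{approx} to scale with degree $(d+1)N$, inflating it by a dimension-dependent factor. A secondary technicality is boundary behavior of $\mathcal{X}\times(0,1)$ for the Jackson/kernel construction, which is why the preliminary smooth extension is needed. Once these two points are handled, the assembly via Theorem \ref{approx} is essentially mechanical.
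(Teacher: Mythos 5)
Your proposal is structurally identical to the paper's proof: obtain a polynomial $p_N$ of total degree at most $N$ on $\mathcal{X}\times(0,1)$ that approximates $f$ and all of its partial derivatives up to order $\min\{s,N\}$ at rate $N^{-(s-|\alpha|_1)}$, then apply Theorem \ref{approx} in dimension $d+1$ to represent $p_N$ exactly by a ReQU network, inheriting the derivative bounds verbatim. The only difference is that where you would re-derive the polynomial step from scratch via a Stein/Whitney extension plus a Jackson-type kernel construction, the paper simply invokes Theorem~2 of Bagby, Bos and Levenberg (2002), stated as Lemma \ref{approx_poly} in the Appendix, which delivers exactly this simultaneous total-degree polynomial approximation and is itself proved via the Whitney extension theorem — so your instinct that extension is the underlying tool, and your concern about total versus tensor degree, both track the substance of the cited lemma.
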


{\color{black}
In \citet{li2019powernet, li2019better}, a similar rate of convergence $\mathcal{O}(N^{-(s-\alpha)})$ under the Jacobi-weighted $L^2$ norm was obtained for the approximation of $\alpha$-th derivative of a univariate target function, where $\alpha\le s\le N+1$ and $s$ denotes the smoothness of the target function belonging to Jacobi-weighted Sobolev space.  The ReQU network in \citet{li2019better} has a different shape from
ours specified in Theorem \ref{thm2}. The results of  \citet{li2019better} achieved a $\mathcal{O}(N^{-(s-\alpha)})$ rate using a ReQU network with $\mathcal{O}(\log_2(N))$ hidden layers, $\mathcal{O}(N)$ neurons and nonzero weights and width $\mathcal{O}(N)$. Simultaneous approximation
to the target function and its derivatives by a ReQU network was also considered in \citet{duan2021convergence} for solving partial differential equations for $d$-dimensional smooth target functions in $C^2$.
  %For deep Ritz method using deep ReQU networks, \citet{duan2021convergence} showed that  can be approximated with accuracy $\epsilon$ under $C^1$ norm by fixed-depth ReQU network with depth $\lceil\log_2(d)\rceil+2$ and width $\mathcal{O}(\epsilon^{-d})$.
}

Now we assume that the target function $f_0:\mathcal{X}\times(0,1)\to\mathbb{R}$ in our QRP estimation problem belongs to the smooth function class $C^s$ for some $s\in\mathbb{N}^+$. The approximation error $\inf_{f\in\mathcal{F}_n}\Big[\mathcal{R}(f)-\mathcal{R}(f_0)+\lambda\{\kappa(f)-\kappa(f_0)\}\Big]$ given in Lemma \ref{decom} can be handled correspondingly.

\begin{corollary}[Approximation error bound]\label{cor1}
Suppose that the target function $f_0$ defined in (\ref{qmodel}) belongs to $C^s$ for some $s\in\mathbb{N}^+$.
For any $N\in\mathbb{N}^+$, let $\mathcal{F}_n:=\mathcal{F}_{\mathcal{D},\mathcal{W},\mathcal{U},\mathcal{S},\mathcal{B},\mathcal{B}^\prime}$ be the ReQU activated neural networks $f:\mathcal{X}\times(0,1)\to\mathbb{R}$ with depth (number of hidden layer) $\mathcal{D}\le2N-1$, width $\mathcal{W}\le12N^d$, number of neurons $\mathcal{U}\le15N^{d+1}$, number of parameters (weights and bias) $\mathcal{S}\le24N^{d+1}$, satisfying $\mathcal{B}\ge\Vert f_0\Vert_{C^0}$ and $\mathcal{B}^\prime\ge\Vert f_0\Vert_{C^1}$. Then the approximation error given in Lemma \ref{decom} satisfies
$$\inf_{f\in\mathcal{F}_n}\Big[\mathcal{R}(f)-\mathcal{R}(f_0)+\lambda\{\kappa(f)-\kappa(f_0)\}\Big]\le C_{s,d,\mathcal{X}} (1+\lambda) N^{-(s-1)}\Vert f_0\Vert_{C^s},$$
where $C_{s,d,\mathcal{X}}$  is a positive constant depending only on $d,s$ and the diameter of the support $\mathcal{X}\times(0,1)$.
\end{corollary}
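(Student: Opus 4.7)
The plan is to construct an explicit ReQU approximant $\phi_N\in\mathcal{F}_n$ via Theorem \ref{approx2} and control the risk gap $\mathcal{R}(\phi_N)-\mathcal{R}(f_0)$ and the penalty gap $\kappa(\phi_N)-\kappa(f_0)$ separately. Applying Theorem \ref{approx2} to $f_0\in C^s(\mathcal{X}\times(0,1))$ with the choices $\alpha=0$ and $\alpha=(0,\ldots,0,1)$ (both admissible since $s\ge 1$ and $N\ge 1$) yields a single ReQU network $\phi_N$ with the size parameters declared in the statement such that
\begin{align*}
\|f_0-\phi_N\|_\infty&\le C_{s,d,\mathcal{X}}\,N^{-s}\|f_0\|_{C^s},\\
\Big\|\tfrac{\partial}{\partial\tau}(f_0-\phi_N)\Big\|_\infty&\le C_{s,d,\mathcal{X}}\,N^{-(s-1)}\|f_0\|_{C^s}.
\end{align*}

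Next I would bound the risk gap via the $1$-Lipschitz continuity of the check loss $\rho_\tau(x)=x\{\tau-I(x\le 0)\}$ in its argument $x$: since $|\tau-I(x\le 0)|\le 1$ for every $\tau\in(0,1)$, $\rho_\tau$ is piecewise linear with slopes bounded by one, so $|\rho_\tau(a)-\rho_\tau(b)|\le|a-b|$ for all $a,b\in\mathbb{R}$. Consequently
$$\mathcal{R}(\phi_N)-\mathcal{R}(f_0)\le \mathbb{E}_{X,Y,\xi}|\phi_N(X,\xi)-f_0(X,\xi)|\le\|f_0-\phi_N\|_\infty\le C_{s,d,\mathcal{X}}\,N^{-s}\|f_0\|_{C^s}.$$

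For the penalty gap I first observe that $\kappa(f_0)=0$: because $f_0(x,\cdot)$ is a conditional quantile function it is nondecreasing in its second argument, hence $\partial f_0/\partial\tau\ge 0$ almost everywhere, and so $\sigma_1(-\partial f_0/\partial\tau)\equiv 0$. Writing $-\tfrac{\partial}{\partial\tau}\phi_N = \tfrac{\partial}{\partial\tau}(f_0-\phi_N) - \tfrac{\partial}{\partial\tau}f_0$ and using $\partial f_0/\partial\tau\ge 0$ together with the elementary inequality $\sigma_1(z)\le |z|$, I obtain $\sigma_1(-\partial\phi_N/\partial\tau)\le|\partial(f_0-\phi_N)/\partial\tau|$ pointwise, whence
$$\kappa(\phi_N)-\kappa(f_0)=\kappa(\phi_N)\le\Big\|\tfrac{\partial}{\partial\tau}(f_0-\phi_N)\Big\|_\infty\le C_{s,d,\mathcal{X}}\,N^{-(s-1)}\|f_0\|_{C^s}.$$
Combining the two displays, multiplying the second by $\lambda$, and using $N^{-s}\le N^{-(s-1)}$ for $N\ge 1$ yields the claimed bound, since $\inf_{f\in\mathcal{F}_n}[\cdots]$ is dominated by the value at $\phi_N$.

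The main technical point that requires care is verifying that $\phi_N$ genuinely lies in $\mathcal{F}_n$, i.e.\ that $\|\phi_N\|_\infty\le\mathcal{B}$ and $\|\partial\phi_N/\partial\tau\|_\infty\le\mathcal{B}'$; the triangle inequality combined with the sup-norm approximation bounds only gives $\|\phi_N\|_\infty\le\|f_0\|_{C^0}+o(1)$ and $\|\partial\phi_N/\partial\tau\|_\infty\le\|f_0\|_{C^1}+o(1)$ as $N\to\infty$. The hypotheses $\mathcal{B}\ge\|f_0\|_{C^0}$ and $\mathcal{B}'\ge\|f_0\|_{C^1}$ then suffice for $N$ large enough, and any finite collection of small-$N$ cases can be absorbed into the constant $C_{s,d,\mathcal{X}}$; alternatively, composing $\phi_N$ with a smooth ReQU-implementable clipping function delivers the required sup-norm bounds at the cost of inflating the constant by a fixed factor without affecting the rate.
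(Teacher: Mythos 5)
Your proof is essentially the same as the paper's. The paper takes $\phi_N$ from Theorem~\ref{approx2} as the approximant and bounds both the risk gap and the penalty gap by using that the check loss is $1$-Lipschitz and that $\max\{\cdot,0\}$ is $1$-Lipschitz, giving
$$\mathcal{R}(\phi_N)-\mathcal{R}(f_0)+\lambda\{\kappa(\phi_N)-\kappa(f_0)\}\le \|\phi_N-f_0\|_\infty+\lambda\Big\|\tfrac{\partial}{\partial\tau}(\phi_N-f_0)\Big\|_\infty,$$
and then concludes via the two sup-norm approximation bounds. You replace the second Lipschitz step by noting $\kappa(f_0)=0$ and using the monotonicity of $\partial f_0/\partial\tau$ to bound $\sigma_1(-\partial\phi_N/\partial\tau)\le |\partial(f_0-\phi_N)/\partial\tau|$; this yields exactly the same bound and is no shorter, so it is a cosmetic rather than substantive departure.

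Your closing remark about membership of $\phi_N$ in $\mathcal{F}_n$ deserves comment. The paper's proof implicitly asserts $\phi_N\in\mathcal{F}_n$ but, as you point out, Theorem~\ref{approx2} only controls $\|\phi_N-f_0\|_\infty$ and $\|\partial(\phi_N-f_0)/\partial\tau\|_\infty$, so the triangle inequality gives $\|\phi_N\|_\infty\le \mathcal{B}+o(1)$ rather than $\le\mathcal{B}$. Your two proposed remedies (absorb small $N$ into $C_{s,d,\mathcal{X}}$, or compose with a ReQU-implementable clipping) are both sound; the second is the cleaner fix since it works uniformly in $N$ without a case distinction. This is a genuine, if minor, gap in the paper's argument that your write-up handles more carefully.
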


\section{Computation}
\label{computation}
In this section, we
%illustrate
describe the training algorithms for  the proposed
%deep quantile regression process
penalized DQRP estimator, including a generic
%learning
algorithm and an improved
%learning
algorithm.

\begin{algorithm}[H]
	\caption{%Minibatch
An stochastic gradient descent algorithm for
% training of
the penalized DQRP estimator}\label{alg:1}
	\begin{algorithmic}
	\Require Sample data $\{(X_i,Y_i)\}_{i=1}^n$ with $n\ge1$; Minibatch size $ m\le n$.
	\State Generate $n$ random values $\{\xi_i\}_{i=1}^n$ uniformly from $(0,1)$
			\For{number of training iterations}
			\State Sample minibatch of $m$ data $\{(X^{(j)},Y^{(j)},\xi^{(j)})\}_{j=1}^m$ form the data $\{(X_i,Y_i,\xi_i)\}_{i=1}^n$
			\State Update the ReQU network $f$ parametrized by $\theta$ by descending its stochastic gradient:
			$$\nabla_\theta\frac{1}{m} \sum_{j=1}^{m}\Big[ \rho_{\xi^{(j)}}(Y^{(j)}-f(X^{(j)},\xi^{(j)}))+\lambda\max
\Big\{-\frac{\partial}{\partial\tau}f(X^{(j)},\xi^{(j)}),0\Big\}\Big]$$
		\EndFor
		\State The gradient-based updates can use any standard gradient-based algorithm. We used Adam in our experiments.
	\end{algorithmic}
\end{algorithm}

In Algorithm \ref{alg:1}, the number of random values $\{\xi_i\}_{i=1}^n$ is set to be the same as the sample size $n$ and each $\xi_i$ is coupled with the sample $(X_i,Y_i)$ for $i=1,\ldots,n$ during the training process. This may degrade the efficiency of the learning DQRP $\hat{f}^\lambda_n$ since each data $(X_i,Y_i)$ has only been used to train the ReQU network $f(\cdot,\xi_i)$ at a single value (quantile) $\xi_i$. Hence, we proposed an improved algorithm.

\begin{algorithm}[H]
	\caption{An improved
%Minibatch
stochastic gradient descent algorithm for
%training of
the penalized DQRP estimator}\label{alg:2}
	\begin{algorithmic}
		\Require Sample data $\{(X_i,Y_i)\}_{i=1}^n$ with $n\ge1$; Minibatch size $ m\le n$.
		\For{number of training iterations}
		\State Sample minibatch of $m$ data $\{(X^{(j)},Y^{(j)})\}_{j=1}^m$ form the data $\{(X_i,Y_i)\}_{i=1}^n$
		\State Generate $m$ random values $\{\xi_j\}_{j=1}^m$ uniformly from $(0,1)$
		\State Update the ReQU network $f$ parametrized by $\theta$ by descending its stochastic gradient:
		$$\nabla_\theta\frac{1}{m} \sum_{j=1}^{m}\Big[ \rho_{\xi_j}(Y^{(j)}-f(X^{(j)},\xi_j))+\lambda\max
\Big\{-\frac{\partial}{\partial\tau}f(X^{(j)},\xi_j),0\Big\}\Big]$$
		\EndFor
		\State The gradient-based updates can use any standard gradient-based algorithm. We used Adam in our experiments.
	\end{algorithmic}
\end{algorithm}
In Algorithm \ref{alg:2}, at each minibatch training iteration, $m$ random values $\{\xi_j\}_{j=1}^m$ are generated uniformly from $(0,1)$ and coupled with the minibatch sample $\{(X^{(j)},Y^{(j)})\}_{j=1}^m$ for the gradient-based updates. In this case, each sample $(X_i,Y_i)$ gets involved in the training of ReQU network $f(\cdot,\xi)$ at multiple values (quantiles) of $\xi=\xi^{(1)}_i,\ldots,\xi^{(t)}_i$ where $t$ denotes the number of minibatch iterations and $\xi^{(j)}_i,j=1,\ldots,t$ denotes the random value generated at iteration $t$ that is coupled with the sample $(X_i,Y_i)$. In such a way, the utilization of each sample $(X_i,Y_i)$ is greatly improved while the computation complexity does not increase compared to the generic Algorithm \ref{alg:1}.

{\color{black} We use an example to  demonstrate the advantage of  Algorithm \ref{alg:2} over Algorithm \ref{alg:1}.
Figure \ref{fig:algo} displays a comparison between Algorithm \ref{alg:1} and Algorithm \ref{alg:2} with the same simulated dataset generated from the ``Wave" model (see section \ref{sec_num} for detailed introduction of the simulated model). The sample size  $n=512$, and the tuning parameter is chosen as $\lambda=\log(n)$. Two ReQU neural networks with  same architecture (width of hidden layers $(256,256,256)$) are trained for 200 epochs by Algorithm \ref{alg:1} and Algorithm \ref{alg:2}, respectively.
The example and the simulation studies in section \ref{sec_num} show that Algorithm \ref{alg:2} has a better and more stable performance than Algorithm \ref{alg:1} without additional computational complexity.
}

\begin{figure}[H]
	\centering
	\begin{minipage}[H]{0.40\textwidth}
		\centering
		\includegraphics*[width=\textwidth,height=1.5 in]{./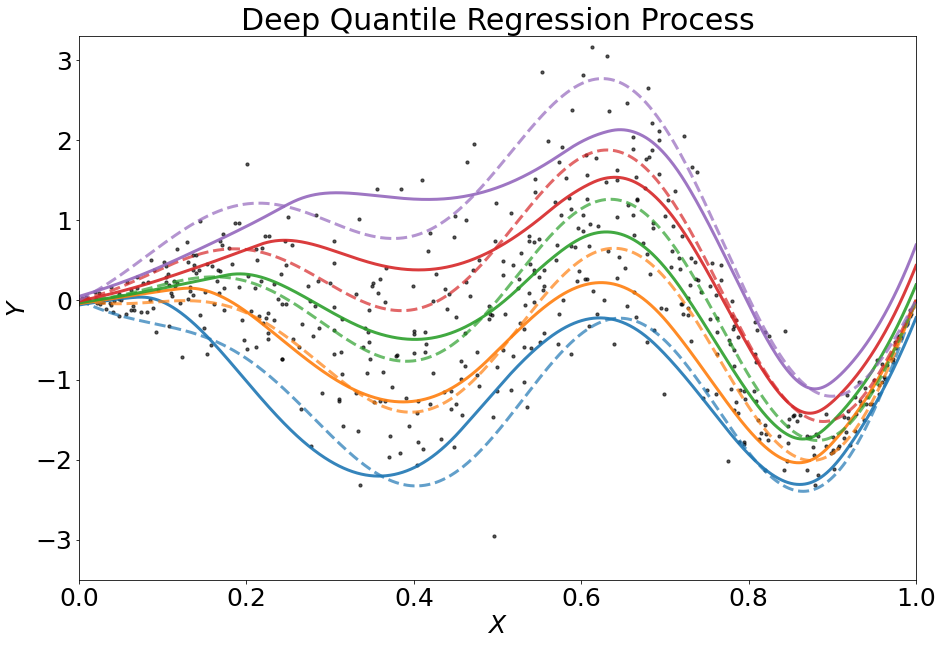}
		\centerline{(a) Trained by Algorithm \ref{alg:1}}
	\end{minipage}
	\begin{minipage}[H]{0.40\textwidth}
		\centering
		\includegraphics*[width=\textwidth,height=1.5 in]{./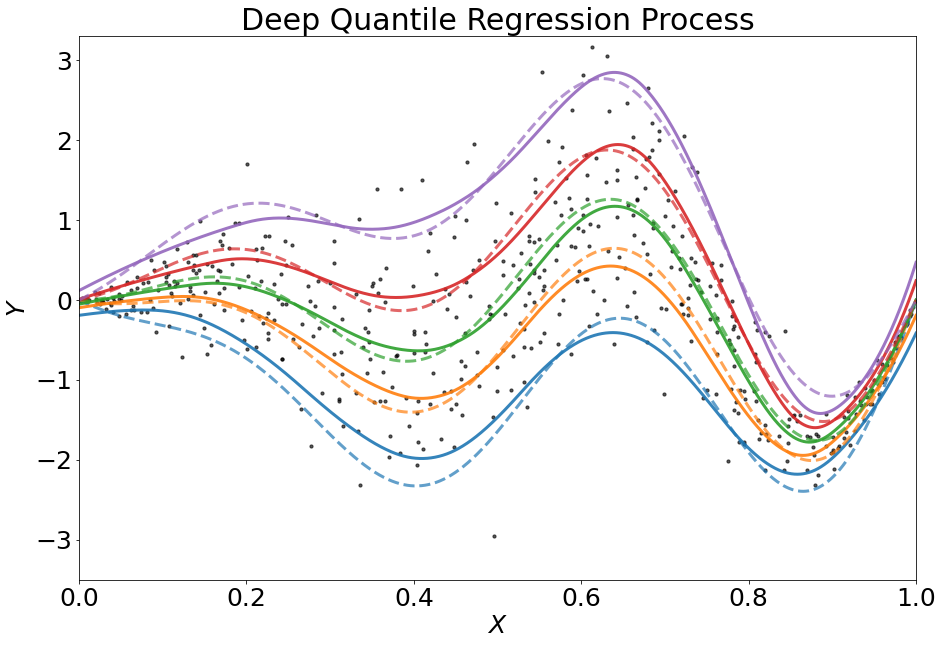}
		\centerline{(b) Trained by Algorithm \ref{alg:2}}
	\end{minipage}
	\caption{A comparison of Algorithms \ref{alg:1} and \ref{alg:2}. The 512 training data generated from the ``Wave" model are depicted as black dots. The target quantile functions at  quantile levels 0.05 (blue), 0.25 (orange), 0.5 (green), 0.75 (red), 0.95 (purple) are depicted as dashed curves, and the estimated quantile functions are the solid curves with the same color. In the left panel, the estimator is trained by  Algorithm \ref{alg:1}. In the right panel, the estimator is trained by the improved Algorithm \ref{alg:2}. Both trainings stop after 200 epochs.}
	\label{fig:algo}
\end{figure}

\section{Numerical studies}\label{sec_num}

In this section, we compare the proposed penalized deep quantile regression with the following nonparametric quantile regression methods:
% reproducing kernel methods and random forest methods using simulated data. To be %specific, we compare the following  methods of quantile regression:
\begin{itemize}
	%\item The traditional linear quantile regression as described in \cite{koenker1978}, denoted by \textit{linear QR}. Without regularization, the empirical risk is minimized over the parameter space (intercept included) $\mathbb{R}^{d+1}$ to give an linear estimator. These estimation are implemented on Python via package \textit{statsmodels}.
	\item  Kernel-based nonparametric quantile regression  \citep{sangnier2016joint}, denoted by \textit{kernel QR}.
	This is a joint quantile regression method based on a vector-valued reproducing kernel Hilbert space (RKHS), which enjoys fewer quantile crossings and enhanced performance compared to the estimation of the quantile functions separately.
% and hard non-crossing constraints.
In our implementation, the radial basis function (RBF) kernel is chosen and a coordinate descent primal-dual algorithm \citep{fercoq2019coordinate} is used via the Python package \textit{qreg}.
	
\item Quantile regression forests \citep{meinshausen2006quantile}, denoted by \textit{QR Forest}. Conditional quantiles can be estimated using  quantile regression forests, a method based on random forests. Quantile regression forests can nonparametrically
 estimate quantile regression functions with high-dimensional predictor variables. This method is shown to be consistent in \citet{meinshausen2006quantile}.
	
	\item Penalized
%deep quantile regression process estimation
DQRP estimator as described in Section \ref{sec2}, denoted by \textit{DQRP}. We implement it in Python via \textit{Pytorch} and use \textit{Adam} \citep{kingma2014adam} as the optimization algorithm with default learning rate 0.01 and default $\beta=(0.9,0.99)$ (coefficients used for computing running averages of gradients and their squares).

\end{itemize}

\subsection{Estimation and Evaluation}
For the proposed penalized DQRP estimator, we set the tuning parameter $\lambda=\log(n)$ across the simulations. Since the \textit{Kernel QR}  and \textit{QR Forest} can only estimate the curves at a given quantile level,
%one or finitely many quantile levels,
we consider using \textit{Kernel QR} and \textit{QR Forest} to estimate the quantile curves at 5 different levels for each simulated model, i.e., we estimate quantile curves for $\tau\in\{0.05,0.25,0.5,0.75,0.95\}$. For each target $f_0$, according to model (\ref{model}) we generate the training data $(X_i^{train},Y_i^{train})_{i=1}^n$ with sample size $n$ to train the empirical risk minimizer at $\tau\in\{0.05,0.25,0.5,0.75,0.95\}$
using Kernel QR and QR Forest, i.e.
\begin{align*}
	\hat{f}^\tau_n\in\arg\min_{f\in\mathcal{F}} \frac{1}{n}\sum_{i=1}^n\rho_\tau(Y_i^{train}-f(X_i^{train})),
\end{align*}
where $\mathcal{F}$ is the class of RKHS,  the class of functions for \textit{QR forest},
% based on some kernel
or the class of
%functions implemented by
ReQU neural network functions.

For each $f_0$, we also generate the testing data $(X_t^{test},Y_t^{test})_{t=1}^T$ with sample size $T$ from the same distribution of the training data. For the proposed method and for each obtained estimate $\hat{f}_n$, we denote $\hat{f}_n^\tau(\cdot)=\hat{f}_n(\cdot,\tau)$ for notational simplicity. For DQRP, Kernel QR and QR Forest, we calculate the testing error on $(X_t^{test},Y_t^{test})_{t=1}^T$ at different quantile levels $\tau$. For quantile level $\tau\in(0,1)$, we calculate the $L_1$ distance between $\hat{f}_n^\tau$ and the corresponding risk minimizer $f_0^\tau(\cdot):=f_0(\cdot,\tau)$ by
\begin{align*}
	\Vert\hat{f}_n^\tau-f_0^\tau\Vert_{L^1(\nu)}=\frac{1}{T}\sum_{t=1}^T \Big\vert\hat{f}_n(X_t^{test},\tau)-f_0^\tau(X_t^{test},\tau)\Big\vert,
\end{align*}
and we also calculate the $L_2^2$ distance between $\hat{f}^\tau_n$ and the $f_0^\tau$, i.e.
\begin{align*}
	\Vert\hat{f}^\tau_n-f_0^\tau\Vert^2_{L^2(\nu)}=\frac{1}{T}\sum_{t=1}^T \Big\vert\hat{f}_n(X_t^{test},\tau)-f_0^\tau(X_t^{test},\tau)\Big\vert^2.
\end{align*}
%All the $L_2$ test error results are provided in the supplementary material.
The specific forms of $f_0$ are given in the  data generation models below.

In the simulation studies, the size of testing data $T=10^5$  for each data generation model. We report the mean and standard deviation of
the $L_1$  and $L^2_2$ distances  over $R = 100$ replications under different scenarios.  %For \textit{DLS}, the testing risk and the excess risk are calculated in terms of mean squares loss function other than the check loss $\rho_\tau$.

\subsection{Univariate models}
%In our simulation,
We consider three basic  univariate models, including ``Linear'', ``Wave'' and ``Triangle'', which corresponds to different specifications of the target function $f_0$. The formulae are given below.
\begin{enumerate}[(a)]
	\setlength\itemsep{-0.05 cm}
	\item Linear:
$f_0(x,\tau)=2x+F_t^{-1}(\tau),$
			\item Wave:
	$f_0(x,\tau)=2x\sin(4\pi x) +\vert \sin(\pi x)\vert \Phi^{-1}(\tau), $
		\item Triangle:
	$f_0(x,\tau)=4(1-\vert x-0.5\vert)+\exp(4x-2)  \Phi^{-1}(\tau), $
\end{enumerate}
where  where $F_t(\cdot)$ is the cumulative distribution function of the standard Student's t random variable,   $\Phi(\cdot)$ is the cumulative distribution function of the standard normal random variable.
 We use the linear model as a baseline model in our simulations and expect all the methods
perform well under the linear model. The ``Wave'' is a nonlinear smooth model and the ``Triangle'' is a
nonlinear, continuous but non-differentiable model. These models
are chosen so that we can evaluate the
performance of \textit{DQRP},  \textit{kernel QR} and \textit{QR Forest} under different types of  models.
%we hypothesize that \textit{DQR} and $kernel QR$ perform reasonably well, but $linear QR$ will have %difficulty in fitting this nonlinear smooth model. The ``Triangle'' is a nonlinear continuous but %non-differentiable model, we hypothesize that \textit{DQR} can still perform well, but $kernel QR$ will not do %as well in  capturing the non-differentiable feature of the curve.
%The simulation results largely confirm our hypotheses.

\begin{figure}[H]
	\centering
	\includegraphics[width=5.0 in, height=1.4 in]{./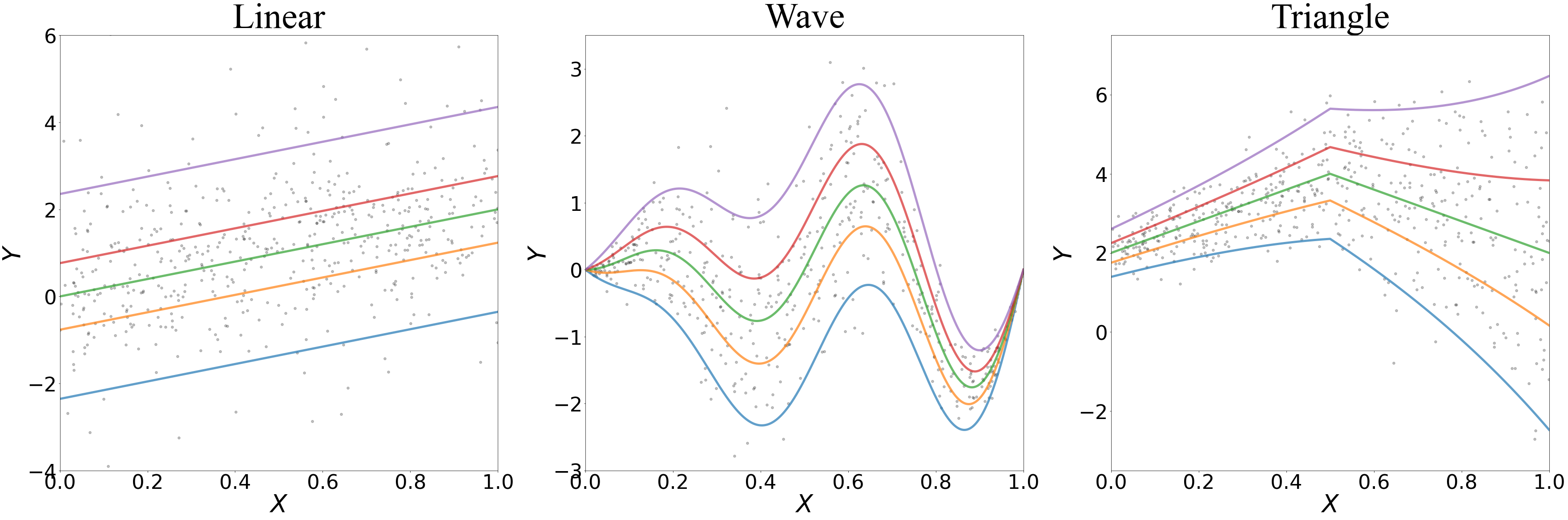}
	\caption{The target quantiles curves.
% at different quantile levels under different models.
From the left to the right, each column corresponds a data generation model, ``Linear'', ``Wave'' and ``Triangle''. The sample data with size $n=512$ is depicted as grey dots.The target quantile functions at the quantile levels $\tau=$0.05 (blue), 0.25 (orange), 0.5 (green), 0.75 (red), 0.95 (purple) are depicted as solid curves.}
	\label{fig:target}
\end{figure}

%Throughout our univariate model simulations,
For these models, we generate $X$ uniformly from the unit interval $[0,1]$.
The $\tau$-th conditional quantile of the response $Y$ given $X=x$ can be calculated directly based on the expression of $f_0(x,\tau)$.
%where $F^{-1}_{\eta\mid X=x}(\cdot)$ is the inverse of the conditional cumulated distribution function of $\eta$ given $X=x$. For $t(3)$ error, $\eta$ is independent with $X$, then $F^{-1}_{\eta\mid X=x}(\cdot)$ is simply the inverse of distributional function of the $2 t(3)$. For the \textit{Sine} error, $F^{-1}_{\eta\mid X=x}(\tau)=0.5\times\sin(\pi x)\times\Phi^{-1}(\tau)$ where $\Phi^{-1}(\cdot)$ is the inverse of the CDF of a standard normal random variable. Similarly, for the \textit{Exp} error, $F^{-1}_{\eta\mid X=x}(\tau)=0.5\times\exp(2x-1)\times\Phi^{-1}(\tau)$.
Figure \ref{fig:target} shows all these univariate data generation models and their corresponding conditional quantile curves at $\tau=0.05,0.25, 0.50, 0.75,0.95$.

Figures \ref{fig:2} to \ref{fig:3} show an instance of the estimated quantile curves for the ``Wave'' and
``Triangle'' models. The plot for the ``Linear'' model is included in the Appendix.
%We also visualize the estimates of these three methods under different models in Figure %\ref{fig:1}-\ref{fig:3}.
In these plots, the training data is depicted as grey dots. The target quantile functions at the quantile levels $\tau=$0.05 (blue), 0.25 (orange), 0.5 (green), 0.75 (red), 0.95 (purple) are depicted as dashed curves, and the estimated quantile functions are represented by solid curves with the same color. For each figure, from the top to the bottom, the rows correspond to the sample size
 $n=512, 2048$. %($n=128,512,2048$).
From the left to the right, the columns correspond to the methods DQRP, kernel QR and QR Forest.

\begin{figure}[H]
	\centering
	\includegraphics[width=5.0 in, height=2.8 in]{./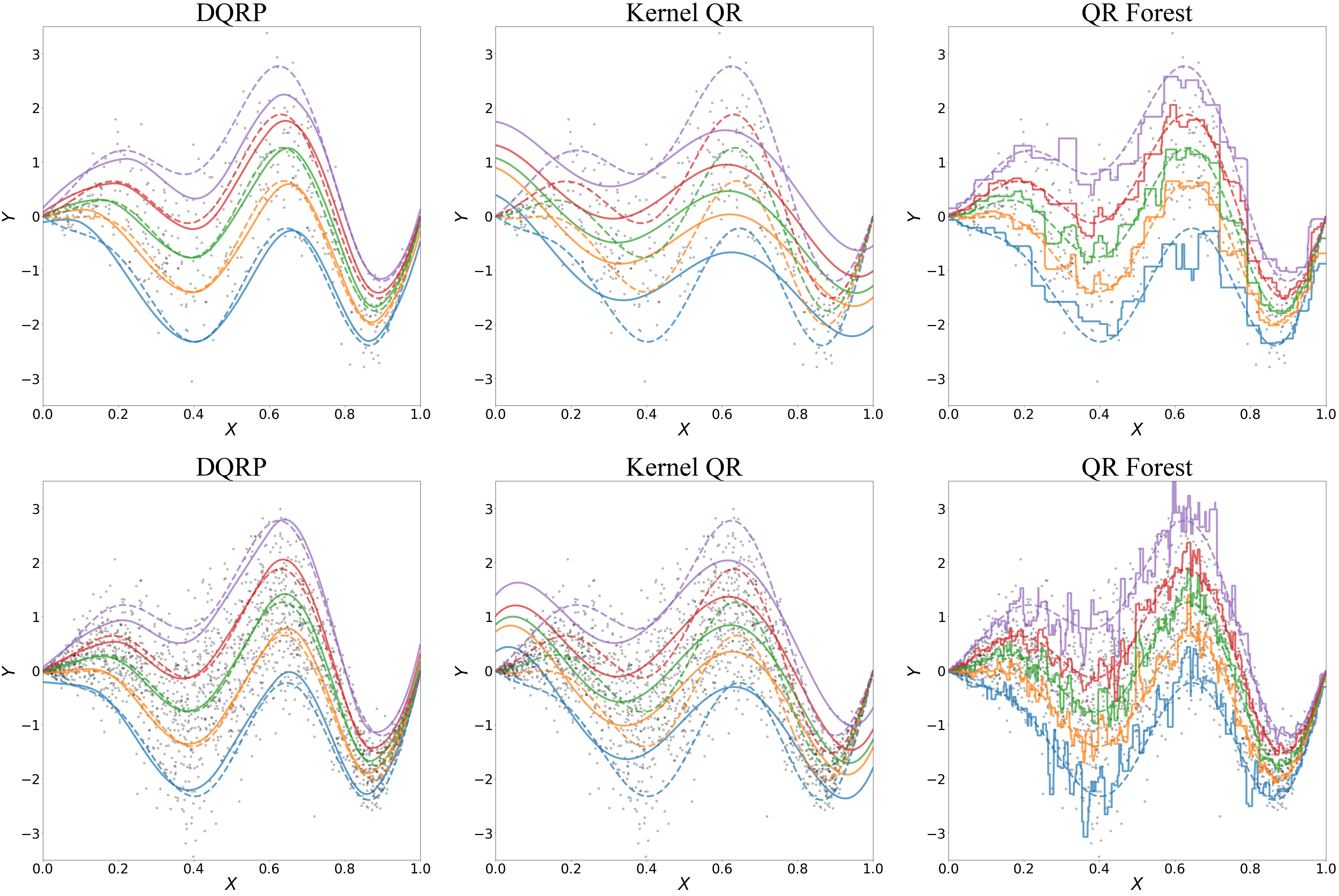}
	\caption{The fitted quantile curves
%by different methods
under the univariate ``Wave" model.
% when $n=512,2048$.
The training data is depicted as grey dots.The target quantile functions at the quantile levels $\tau=$0.05 (blue), 0.25 (orange), 0.5 (green), 0.75 (red), 0.95 (purple) are depicted as dashed curves, and the estimated quantile functions are represented by solid curves with the same color. From the top to the bottom, the rows correspond to the sample size $n=512,2048$. From the left to the right, the columns correspond to the methods DQRP, kernel QR and QR Forest.}
	\label{fig:2}
\end{figure}

\begin{figure}[H]
	\centering
	\includegraphics[width=5.0 in, height=2.8 in]{./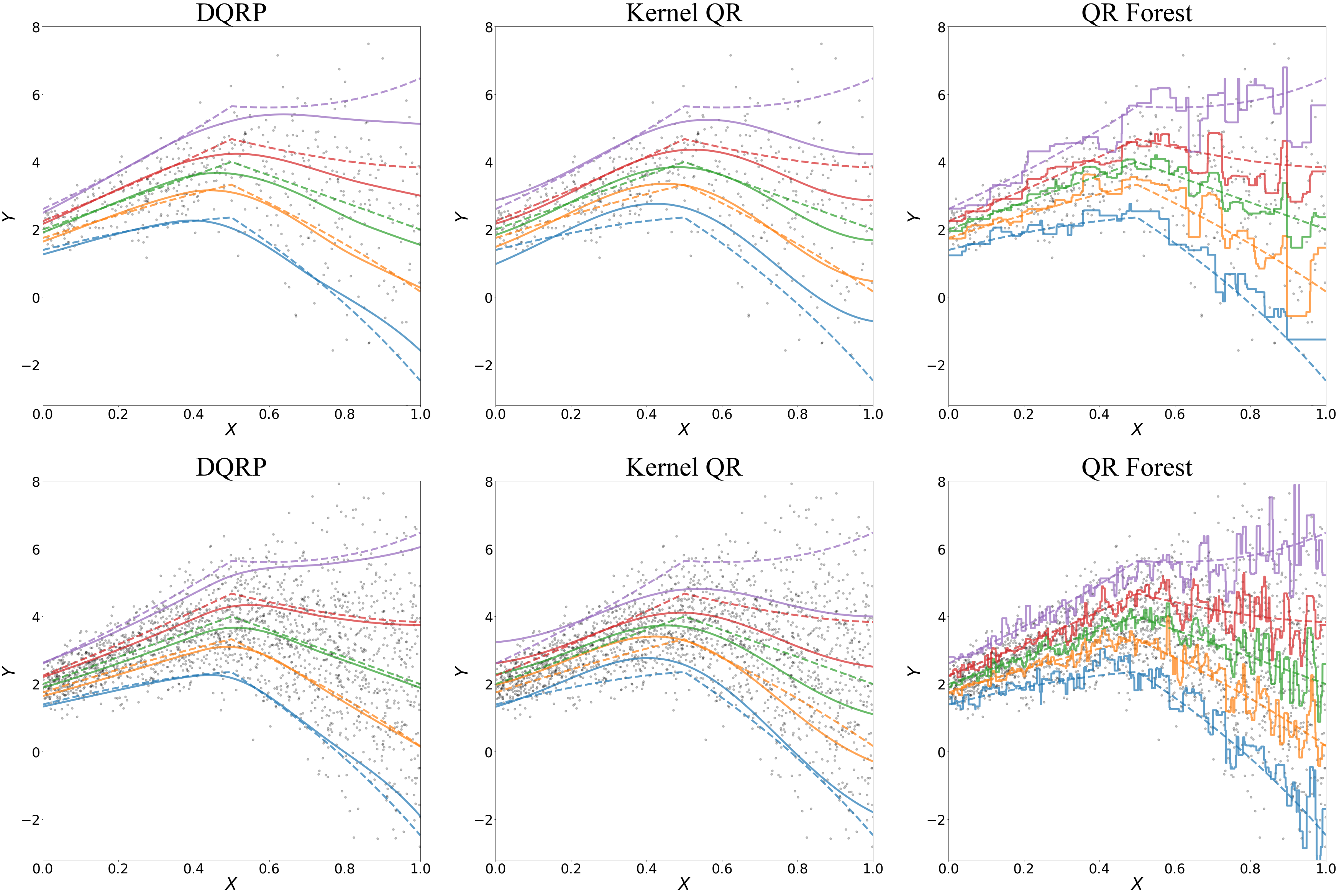}
	\caption{The fitted quantile curves
% by different methods
under the univariate ``Triangle" model.
% when $n=512,2048$.
The training data is depicted as grey dots.The target quantile functions at the quantile levels $\tau=$0.05 (blue), 0.25 (orange), 0.5 (green), 0.75 (red), 0.95 (purple) are depicted as dashed curves, and the estimated quantile functions are represented by solid curves with the same color. From the top to the bottom, the rows correspond to the sample sizes $n=512,2048$. From the left to the right, the columns correspond to the methods DQRP, kernel QR and QR Forest.}
	\label{fig:3}
\end{figure}

\begin{table}[H]
	\caption{Data is generated from the ``Wave" model with training sample size $n= 512,2048$ and the number of replications $R = 100$. The averaged  $L_1$ and $L_2^2$ test errors with the corresponding standard deviation (in parentheses) are reported for the estimators trained by different methods.}
	\label{tab:2}
%	\resizebox{\textwidth}{!}{%
		\begin{tabular}{@{}c|c|cc|cc@{}}
			\toprule
			& Sample size    & \multicolumn{2}{c|}{$n=512$}                     & \multicolumn{2}{c}{$n=2048$}                    \\ \midrule
			$\tau$                & Method               & $L_1$                  & $L_2^2$                & $L_1$                  & $L_2^2$                \\\midrule
			\multirow{3}{*}{0.05} & DQRP    & \textbf{0.184(﻿0.072)} & \textbf{0.065(﻿0.061)} & \textbf{0.127(﻿0.055)} & \textbf{0.029(﻿0.026)} \\
			& Kernel QR         & 0.461(﻿0.072)          & 0.377(﻿0.125)          & 0.599(﻿0.224)          & 0.600(﻿0.470)          \\
			& QR Forest      & 0.228(﻿0.030)          & 0.092(﻿0.024)          & 0.195(﻿0.017)          & 0.071(﻿0.013)          \\\midrule
			\multirow{3}{*}{0.25} & DQRP   & \textbf{0.124(0.041)}  & \textbf{0.030(0.023)}  & \textbf{0.086(0.034)}  & \textbf{0.013(0.011)}  \\
			& Kernel QR    & 0.441(0.064)           & 0.298(0.109)           & 0.571(0.225)           & 0.545(0.460)           \\
			& QR Forest      & 0.166(0.024)           & 0.051(0.015)           & 0.143(0.012)           & 0.039(0.007)           \\\midrule
			\multirow{3}{*}{0.5}  & DQRP     & \textbf{0.112(0.030)}  & \textbf{0.022(0.012)}  & \textbf{0.076(0.024)}  & \textbf{0.010(0.006)}  \\
			& Kernel QR    & 0.440(0.058)           & 0.289(0.105)           & 0.555(0.226)           & 0.530(0.461)           \\
			& QR Forest    & 0.157(0.024)           & 0.045(0.014)           & 0.137(0.010)           & 0.036(0.005)           \\\midrule
			\multirow{3}{*}{0.75} & DQRP   & \textbf{0.131(0.047)}  & \textbf{0.030(0.023)}  & \textbf{0.087(0.030)}  & \textbf{0.013(0.009)}  \\
			& Kernel QR    & 0.462(0.055)           & 0.322(0.107)           & 0.560(0.219)           & 0.546(0.462)           \\
			& QR Forest   & 0.168(0.022)           & 0.050(0.014)           & 0.146(0.013)           & 0.041(0.008)           \\\midrule
			\multirow{3}{*}{0.95} & DQRP     & \textbf{0.192(0.072)}  & \textbf{0.064(0.049)}  & \textbf{0.127(0.042)}  & \textbf{0.027(0.018)}  \\
			& Kernel QR    & 0.552(0.064)           & 0.469(0.120)           & 0.615(0.200)           & 0.648(0.462)           \\
			& QR Forest      & 0.224(0.030)           & 0.090(0.026)           & 0.198(0.018)           & 0.074(0.014)           \\ \bottomrule
		\end{tabular}%
%	}
\end{table}

\begin{table}[H]
	\caption{Data is generated from the ``Triangle" model with training sample size $n= 512,2048$ and the number of replications $R = 100$. The averaged  $L_1$ and $L_2^2$ test errors with the corresponding standard deviation (in parentheses) are reported for the estimators trained by different methods.}
	\label{tab:3}
%	\resizebox{\textwidth}{!}{%
		\begin{tabular}{@{}c|c|cc|cc@{}}
			\toprule
			& Sample size    & \multicolumn{2}{c|}{$n=512$}                     & \multicolumn{2}{c}{$n=2048$}                    \\ \midrule
			$\tau$                & Method         & $L_1$                  & $L_2^2$                & $L_1$                  & $L_2^2$                \\ \midrule
			\multirow{3}{*}{0.05} & DQRP   & \textbf{0.263(﻿0.103)} & \textbf{0.152(﻿0.135)} & \textbf{0.174(﻿0.081)} & \textbf{0.069(﻿0.074)} \\
			& Kernel QR   & 0.533(﻿0.147)          & 0.520(﻿0.268)          & 0.515(﻿0.200)          & 0.490(﻿0.459)          \\
			& QR Forest   & 0.364(﻿0.061)          & 0.282(﻿0.115)          & 0.359(﻿0.031)          & 0.264(﻿0.053)          \\ \midrule
			\multirow{3}{*}{0.25} & DQRP      & \textbf{0.181(0.079)}  & \textbf{0.058(0.054)}  & \textbf{0.112(0.047)}  & \textbf{0.023(0.018)}  \\
			& Kernel QR   & 0.249(0.084)           & 0.110(0.084)           & 0.308(0.138)           & 0.179(0.217)           \\
			& QR Forest      & 0.272(0.044)           & 0.155(0.061)           & 0.259(0.021)           & 0.140(0.026)           \\\midrule
			\multirow{3}{*}{0.5}  & DQRP       & 0.187(0.078)           & 0.061(0.057)           & \textbf{0.118(0.060)}  & \textbf{0.025(0.023)}  \\
			& Kernel QR   & \textbf{0.164(0.063)}  & \textbf{0.052(0.044)}  & 0.231(0.134)           & 0.125(0.158)           \\
			& QR Forest     & 0.251(0.040)           & 0.131(0.053)           & 0.252(0.021)           & 0.133(0.026)           \\ \midrule
			\multirow{3}{*}{0.75} & DQRP            & \textbf{0.238(0.110)}  & \textbf{0.097(0.093)}  & \textbf{0.150(0.900)}  & \textbf{0.041(0.049)}  \\
			& Kernel QR      & 0.252(0.087)           & 0.109(0.081)           & 0.334(0.123)           & 0.192(0.166)           \\
			& QR Forest    & 0.261(0.043)           & 0.142(0.059)           & 0.266(0.024)           & 0.149(0.031)           \\ \midrule
			\multirow{3}{*}{0.95} & DQRP      & \textbf{0.343(0.197)}  & \textbf{0.216(0.259)}  & \textbf{0.224(0.135)}  & \textbf{0.097(0.118)}  \\
			& Kernel QR        & 0.540(0.123)           & 0.522(0.242)           & 0.541(0.175)           & 0.527(0.363)           \\
			& QR Forest   & 0.359(0.057)           & 0.256(0.090)           & 0.357(0.033)           & 0.259(0.053)           \\ \bottomrule
		\end{tabular}%
%	}
\end{table}

Tables \ref{tab:2} and \ref{tab:3} summarize the results for the models ``Wave'' and ``Triangle'',
respectively.
%The simulation results under ``Linear", ``Wave" and ``Triangle" models are summarized in Table %\ref{tab:1}-\ref{tab:3} respectively.
For Kernel QR, QR Forest and our proposed DQRP estimator, the corresponding $L_1$ and $L_2^2$ errors %distances
(standard deviation in parentheses) between the estimates and the target are reported at different quantile levels $\tau=0.05,0.25,0.50,0.75,0.95$. For each column, using bold text we highlight the best method which produces the smallest risk among these three methods.
For the ``Wave'' model, the proposed DQRP outperforms Kernel QR  and QR Forest in all the scenarios. For the nonlinear ``Triangle'' model, DQRP also tends to perform better than Kernel QR and QR Forest.  For the ``Linear'' model, the results from the three methods are comparable, but Kernel QR tends to have better performance. The results for the ``Linear'' model are given in Table \ref{tab:1} in the Appendix.

%It can be seen that our proposed DQRP performs very well across different models, especially under %the nonlinear models ``Wave" and ``Triangle".

\subsection{Multivariate models}
We consider three basic multivariate models, including linear model (``Linear''), single index model (``SIM'') and additive model (``Additive''), which correspond to different specifications of the target function $f_0$. The formulae are given below.
\begin{enumerate}[(a)]
	\setlength\itemsep{-0.05 cm}
	\item Linear:
	\begin{align*} f_0(x,\tau)=2A^\top x+F_t^{-1}(\tau),
	\end{align*}
	\item Single index model:
	$$f_0(x,\tau)=\exp(0.1\times A^\top x) +\vert \sin(\pi B^\top x)\vert \Phi^{-1}(\tau),$$
	\item Additive model:
	$$f_0(x,\tau)= 3x_1+4(x_2-0.5)^2+2\sin(\pi x_3)-5\vert x_4-0.5\vert+\exp\{0.1(B^\top x-0.5)\}  \Phi^{-1}(\tau),$$
\end{enumerate}
where $F_t(\cdot)$ denotes the cumulative distribution function of the standard Student's t random variable, $\Phi(\cdot)$ denotes the cumulative distribution function of the standard normal random variable and the parameters ($d$-dimensional vectors)
\begin{align*}
	A=&(0.409,  0.908, 0,  0, -2.061,  0.254,  3.024,  1.280)^\top, \\
	B=&(1.386, -0.902,  5.437, 0,  0, -0.482,  4.611,  0)^\top.
\end{align*}

\begin{table}[H]
	\caption{Data is generated from the ``Single index model" with training sample size $n= 512,2048$ and the number of replications $R = 100$. The averaged  $L_1$ and $L_2^2$ test errors with the corresponding standard deviation (in parentheses) are reported for the estimators trained by different methods.}
	\label{tab:5}
%		\resizebox{\textwidth}{!}{%
	\begin{tabular}{@{}c|c|cc|cc|cc@{}}
		\toprule
		& Sample size            & \multicolumn{2}{c|}{$n=512$}                     & \multicolumn{2}{c}{$n=1024$}                    \\ \midrule
		$\tau$                & Method               & $L_1$                  & $L_2^2$                & $L_1$                  & $L_2^2$                \\\midrule
		\multirow{3}{*}{0.05} & DQRP       & 0.487(﻿0.034)          & 0.422(﻿0.065)          & \textbf{0.391(﻿0.017)} & \textbf{0.277(﻿0.034)} \\
		& Kernel QR   & 0.641(﻿0.043)          & 0.596(﻿0.078)          & 0.620(﻿0.030)       & 0.561(﻿0.059)   \\
		& QR Forest  & \textbf{0.460(﻿0.012)} & \textbf{0.318(﻿0.031)} & 0.450(﻿0.004)          & 0.305(﻿0.013)          \\\midrule
		\multirow{3}{*}{0.25} & DQRP  & 0.241(0.027)           & 0.126(0.034)           & \textbf{0.188(0.013)}  & 0.068(0.010)           \\
		& Kernel QR     & 0.462(0.043)           & 0.330(0.054)           & 0.486(0.043)           & 0.361(0.062)           \\
		& QR Forest    & \textbf{0.214(0.010)}  & \textbf{0.065(0.006)}  & 0.207(0.005)           & \textbf{0.058(0.003)}  \\ \midrule
		\multirow{3}{*}{0.5}  & DQRP      & 0.198(0.026)           & 0.096(0.029)           & 0.112(0.016)           & 0.029(0.008)           \\
		& Kernel QR     & 0.339(0.035)           & 0.188(0.036)           & 0.346(0.048)           & 0.193(0.053)           \\
		& QR Forest   & \textbf{0.081(0.010)}  & \textbf{0.010(0.003)}  & \textbf{0.058(0.005)}  & \textbf{0.005(0.001)}  \\ \midrule
		\multirow{3}{*}{0.75} & DQRP            & 0.279(0.025)           & 0.168(0.036)           & \textbf{0.202(0.016)}  & 0.080(0.013)           \\
		& Kernel QR          & 0.453(0.047)           & 0.317(0.059)           & 0.492(0.044)           & 0.368(0.063)           \\
		& QR Forest    & \textbf{0.213(0.011)}  & \textbf{0.064(0.007)}  & 0.207(0.005)           & \textbf{0.058(0.003)}  \\ \midrule
		\multirow{3}{*}{0.95} & DQRP        & 0.488(0.033)           & 0.443(0.057)           & \textbf{0.416(0.025)}  & \textbf{0.303(0.035)}  \\
		& Kernel QR         & 0.637(0.044)           & 0.589(0.080)           & 0.627(0.033)           & 0.572(0.066)           \\
		& QR Forest   & \textbf{0.459(0.010)}  & \textbf{0.317(0.028)}  & 0.451(0.005)           & 0.306(0.014)           \\ \bottomrule
	\end{tabular}
%}
\end{table}

\begin{table}[H]
	\caption{Data is generated from the ``Additive" model with training sample size $n= 512,2048$ and the number of replications $R = 100$. The averaged  $L_1$ and $L_2^2$ test errors with the corresponding standard deviation (in parentheses) are reported for the estimators trained by different methods.}
	\label{tab:6}
%		\resizebox{\textwidth}{!}{%
	\begin{tabular}{@{}c|c|cc|cc@{}}
		\toprule
		& Sample size   & \multicolumn{2}{c|}{$n=512$}                     & \multicolumn{2}{c}{$n=1024$}                    \\ \midrule
		$\tau$                & Method         & $L_1$                  & $L_2^2$                & $L_1$                  & $L_2^2$                \\\midrule
		\multirow{3}{*}{0.05} & DQRP        & \textbf{0.263(﻿0.103)} & \textbf{0.152(﻿0.135)} & \textbf{0.174(﻿0.081)} & \textbf{0.069(﻿0.074)} \\
		& Kernel QR        & 0.533(﻿0.147)          & 0.520(﻿0.268)          & 0.515(﻿0.200)          & 0.490(﻿0.459)          \\
		& QR Forest  & 0.364(﻿0.061)          & 0.282(﻿0.115)          & 0.359(﻿0.031)          & 0.264(﻿0.053)          \\\midrule
		\multirow{3}{*}{0.25} & DQRP      & \textbf{0.181(0.079)}  & \textbf{0.058(0.054)}  & \textbf{0.112(0.047)}  & \textbf{0.023(0.018)}  \\
		& Kernel QR    & 0.249(0.084)           & 0.110(0.084)           & 0.308(0.138)           & 0.179(0.217)           \\
		& QR Forest        & 0.272(0.044)           & 0.155(0.061)           & 0.259(0.021)           & 0.140(0.026)           \\\midrule
		\multirow{3}{*}{0.5}  & DQRP         & 0.187(0.078)           & 0.061(0.057)           & \textbf{0.118(0.060)}  & \textbf{0.025(0.023)}  \\
		& Kernel QR    & \textbf{0.164(0.063)}  & \textbf{0.052(0.044)}  & 0.231(0.134)           & 0.125(0.158)           \\
		& QR Forest       & 0.251(0.040)           & 0.131(0.053)           & 0.252(0.021)           & 0.133(0.026)           \\\midrule
		\multirow{3}{*}{0.75} & DQRP        & \textbf{0.238(0.110)}  & \textbf{0.097(0.093)}  & \textbf{0.150(0.900)}  & \textbf{0.041(0.049)}  \\
		& Kernel QR         & 0.252(0.087)           & 0.109(0.081)           & 0.334(0.123)           & 0.192(0.166)           \\
		& QR Forest   & 0.261(0.043)           & 0.142(0.059)           & 0.266(0.024)           & 0.149(0.031)           \\\midrule
		\multirow{3}{*}{0.95} & DQRP        & \textbf{0.343(0.197)}  & \textbf{0.216(0.259)}  & \textbf{0.224(0.135)}  & \textbf{0.097(0.118)}  \\
		& Kernel QR           & 0.540(0.123)           & 0.522(0.242)           & 0.541(0.175)           & 0.527(0.363)           \\
		& QR Forest    & 0.359(0.057)           & 0.256(0.090)           & 0.357(0.033)           & 0.259(0.053)           \\ \bottomrule
	\end{tabular}
%}
\end{table}

The simulation results under multivariate
%``Linear",
``SIM" and ``Additive" models are summarized in Tables \ref{tab:5}-\ref{tab:6} respectively. For Kernel QR, QR Forest and our proposed DQRP, the corresponding $L_1$ and $L_2^2$ distances (standard deviation in parentheses) between the estimates and the target are reported at different quantile levels $\tau=0.05,0.25,0.50,0.75,0.95$. For each column, using bold text we highlight the best method which produces the smallest risk among these three methods.

\subsection{Tuning Parameter}
In this subsection, we study the effects of the tuning parameter $\lambda$ on the proposed method.
First, we demonstrate that the ``quantile crossing" phenomenon can be mitigated.
 We
apply our method  to the bone mineral density (BMD) dataset.
% for estimating the quantile regression process.
This dataset is originally reported in \citet{bachrach1999bone} and analyzed in \citet{takeuchi06a, %nonparametric,
hastie2009elements}\footnote{The data is also available from the website \href{http://www-stat.stanford.edu/ElemStatlearn}{http://www-stat.stanford.edu/ElemStatlearn.}}.
The dataset collects the bone mineral density data of 485 North
American adolescents ranging from 9.4 years old to 25.55 years old. Each response value is the difference of the bone mineral density taken on two consecutive visits, divided by the average. The predictor age is the averaged age over the two visits.

In Figure \ref{dem}, we present the estimated quantile regression processes with ($\lambda=\log(n)$) or without ($\lambda=0$) the proposed non-crossing penalty. With or without the penalty, we use the Adam optimizer with the same parameters (for the optimization process) to train a fixed-shape ReQU network with  three hidden layers and width $(128,128,128)$. The estimated quantile curves at $\tau=0.1,0.2,\ldots,0.9$ and the observations are depicted in Figure \ref{dem}. It can be seen that the proposed non-crossing penalty is effective to avoid quantile crossing, even in the area outside the range of the training data.

\begin{figure}[H]
	\centering
	\begin{minipage}[H]{0.48\textwidth}
		\centering
		\includegraphics*[width=2.4 in, height=1.5 in ]{./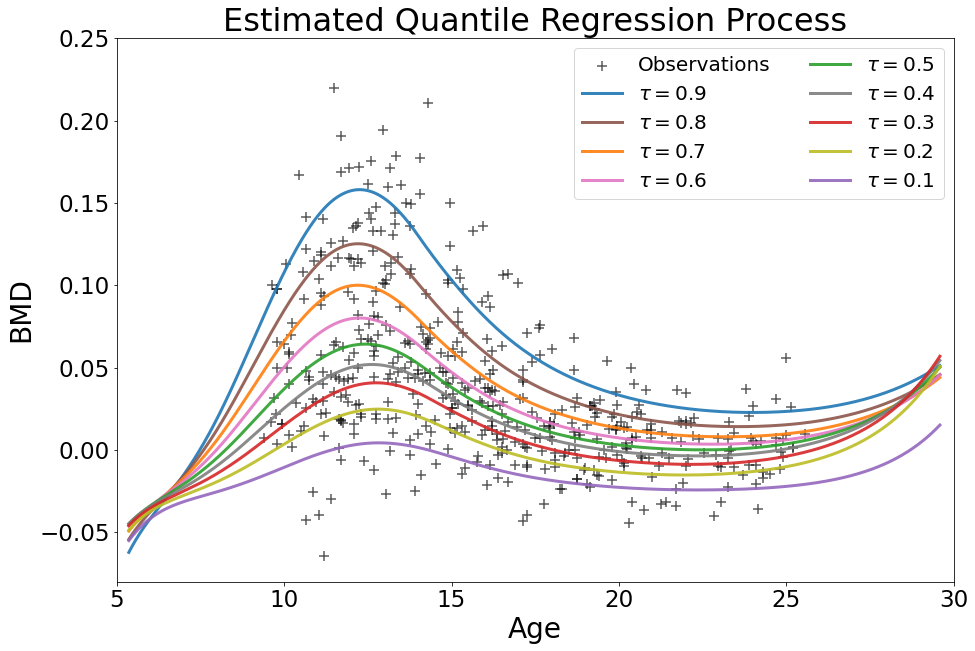}
		\centerline{(a) Without non-crossing penalty}
	\end{minipage}
	\begin{minipage}[H]{0.48\textwidth}
		\centering
		\includegraphics*[width=2.4 in, height=1.5 in]{./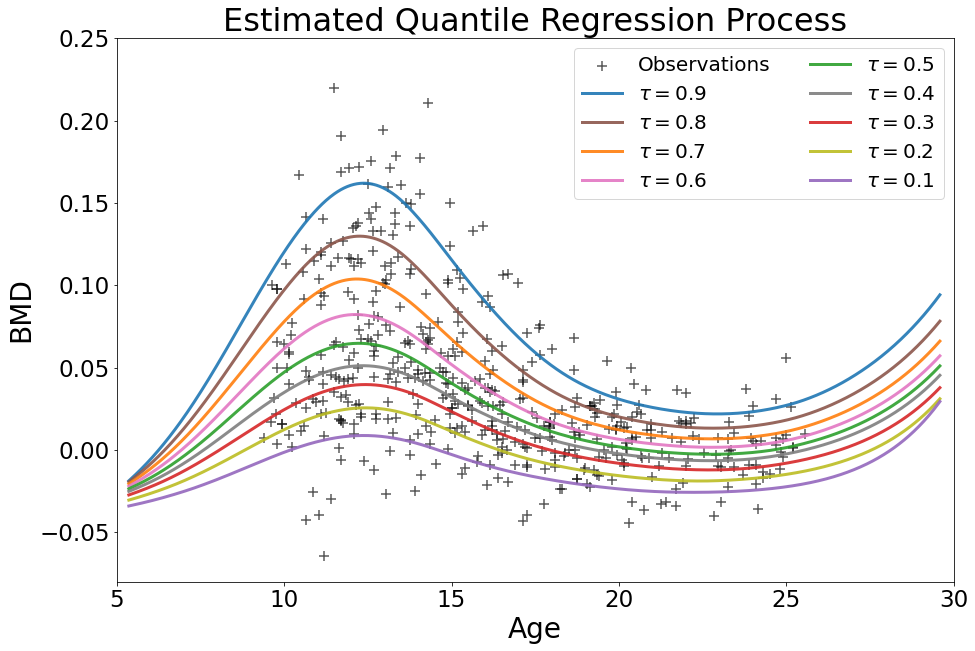}
		\centerline{(b) With non-crossing penalty}
	\end{minipage}
\caption{An example of quantile crossing problem in BMD data set. The estimated quantile curves at $\tau=0.1,0.2,\ldots,0.9$ and the observations are depicted. In the left panel, the estimation is conducted without non-crossing penalty and there are crossings at both edges of the graph. In the right figure, the estimation is conducted with non-crossing penalty. There is no quantile crossing even in the area outside the range of the training data.}
\label{dem}
\end{figure}

Second, we study how the value of tuning parameter $\lambda$ affects the risk of the estimated quantile regression process and how it helps avoiding crossing. Given a sample with size $n$,
% and a certain model,
we train a series of the DQRP estimators at different values of the tuning parameter $\lambda$. For each DQRP estimator, we record its risk and penalty values and the track of these values are plotted in Figures \ref{fig:5}-\ref{fig:7}. For each obtained DQRP estimator $\hat{f}^\lambda_n$, the statistics ``Risk" is calculated according to the formula
$$\mathcal{R}(\hat{f}^\lambda_n)=\mathbb{E}_{X,Y,\xi}\{\rho_{\xi}(Y-f(X,\xi))\},$$
and the statistics ``Penalty" is calculated according to
$$\kappa(\hat{f}^\lambda_n)=\mathbb{E}_{X,\xi}[\max\{-\p \hat{f}^\lambda_n(X,\xi),0\}].$$
In practice, we generate $T=10,000$ testing data $(X_t^{test},Y_t^{test},\xi_t^{tesr})_{t=1}^T$ to empirically calculate risks and penalty values.

In each figure, a vertical dashed line is also depicted at the value $\lambda=\log(n)$. It can be seen that crossing seldom happens when we choose a tiny value of the tuning parameter $\lambda$. And the loss caused by penalty can be negligible compared to the total risk, since the penalty values are generally of order $O(10^{-3})$ instead of $O(10^{3})$ for the total risk. For large value of tuning parameter $\lambda$, the crossing nearly disappears which is intuitive and encouraged by the formulation of our penalty. However, the risk could be very large resulting a poor estimation of the target function. As shown by the dashed vertical line in each figure, numerically the choice of $\lambda=\log(n)$ can lead to a reasonable estimation of the target function with tiny risk (blue lines) and little crossing (red lines) across different model settings. Empirically, we choose $\lambda=\log(n)$ in general for the simulations. By Theorem \ref{non-asymp}, such choice of tuning parameter can lead to a consistent estimator with reasonable fast rate of convergence.

\begin{figure}[H]
	\centering
	\includegraphics[width=0.315\textwidth]{./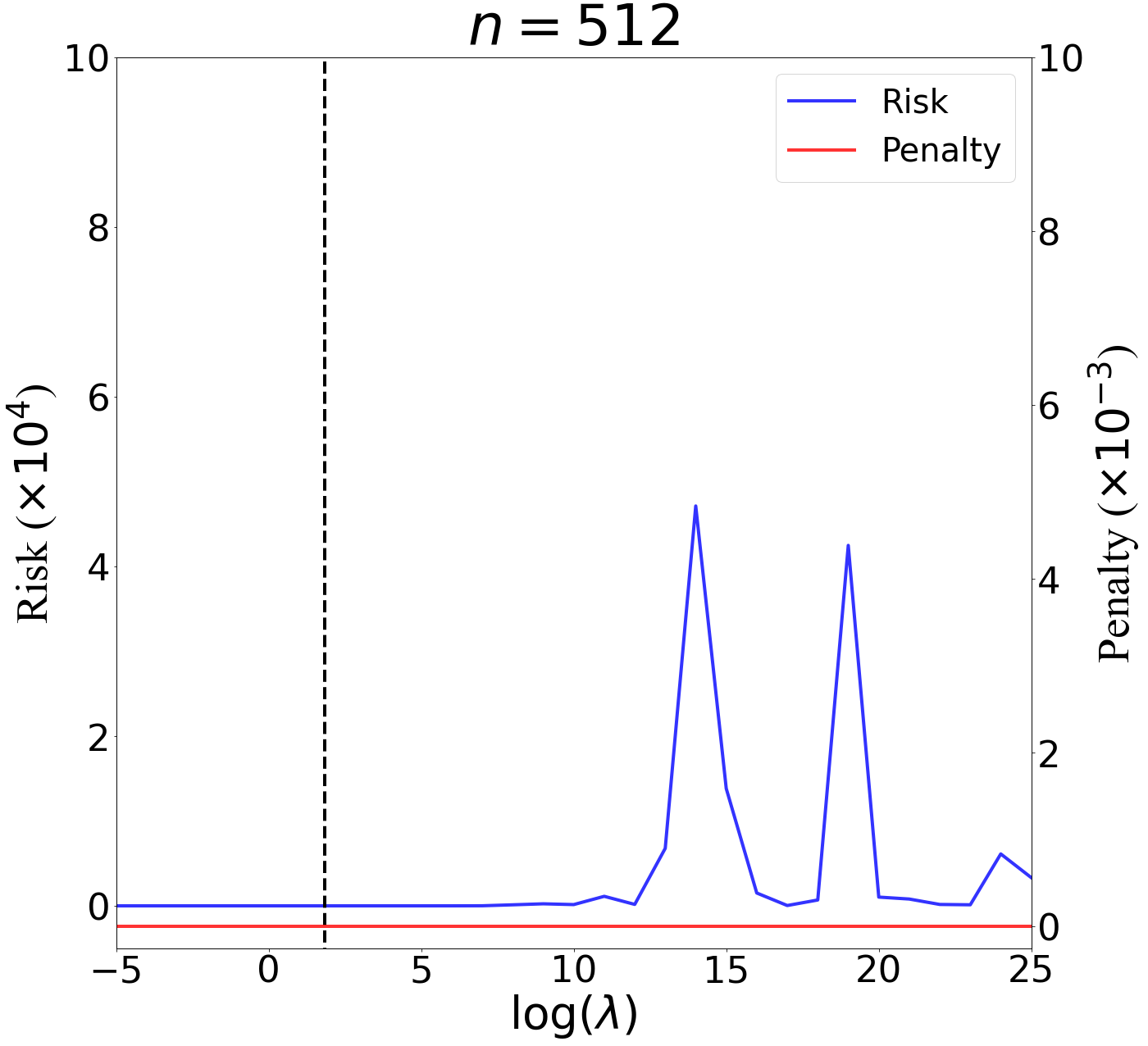}
	\includegraphics[width=0.315\textwidth]{./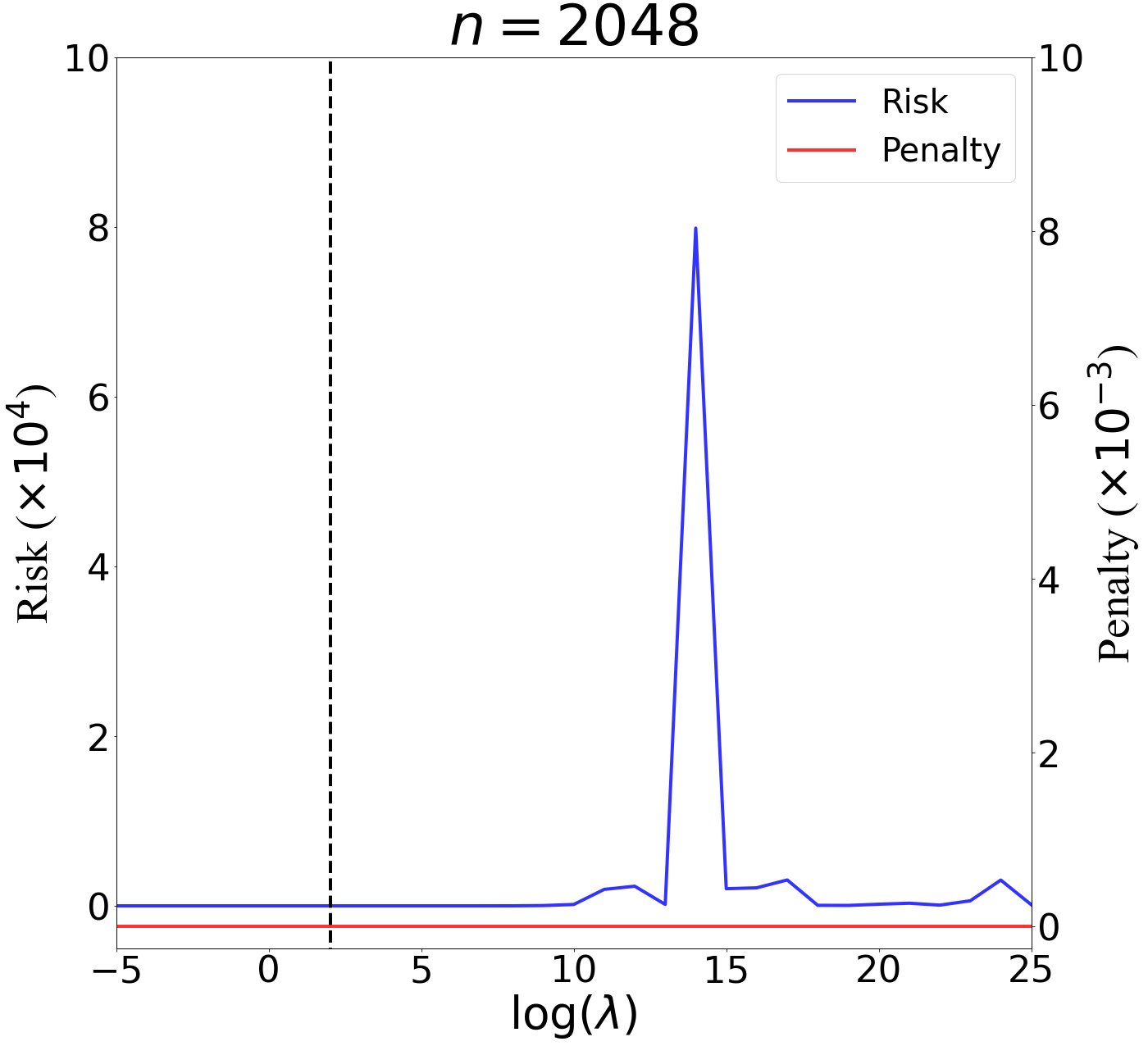}
	\caption{The value of risks and penalties under the univariate ``Triangle" model when $n=512,2048$. A vertical dashed line is depicted at the value $\lambda=\log(n)$ on x-axis in each figure.}
	\label{fig:5}
\end{figure}

\begin{figure}[H]
	\centering
	\includegraphics[width=0.315\textwidth]{./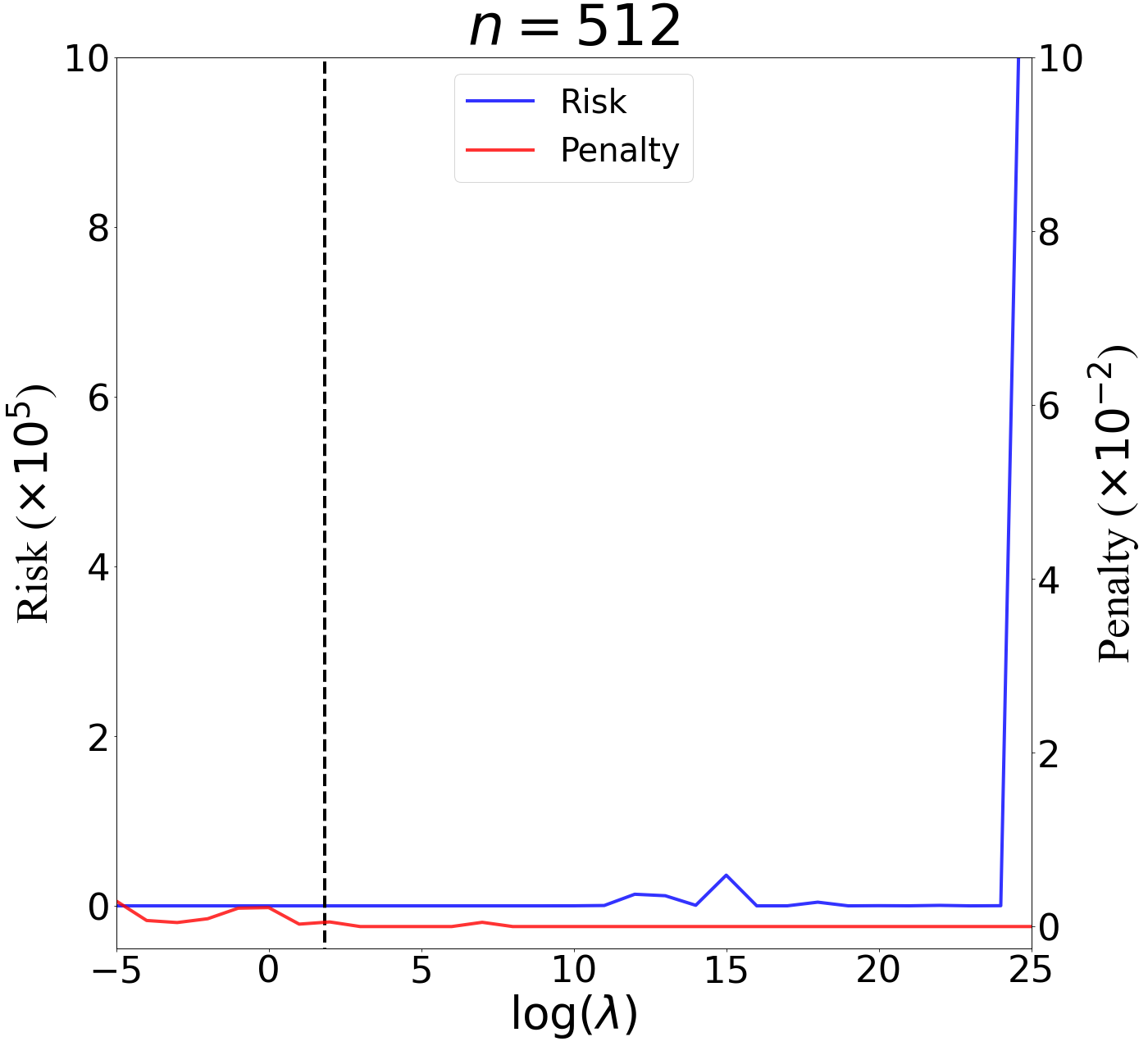}
	\includegraphics[width=0.315\textwidth]{./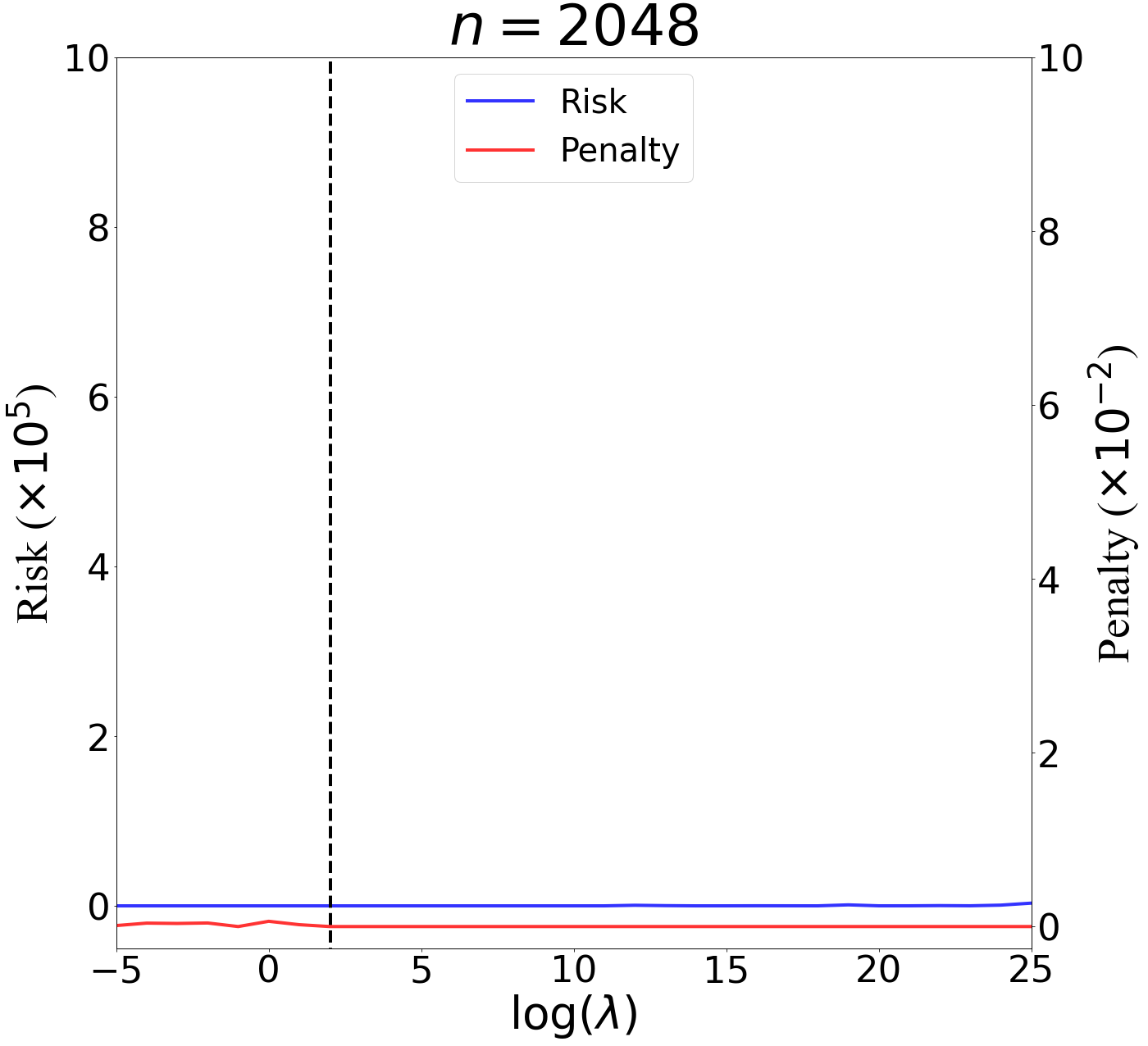}
	\caption{The value of risks and penalties under the multivariate additive model when $n=512,2048$ and $d=8$. A vertical dashed line is depicted at the value $\lambda=\log(n)$ on x-axis in each figure.}
	\label{fig:7}
\end{figure}

\section{Conclusion}
\label{conclusion}
We have proposed a penalized nonparametric approach to estimating the nonseparable model (\ref{model}) using  ReQU activated deep neural networks and introduced a novel penalty function to enforcing non-crossing quantile curves. We have established  non-asymptotic excess risk bounds for the estimated QRP
and derived the mean integrated squared error for the estimated QRP under mild smoothness and regularity conditions.
We have also developed a new approximation error bound for  $C^s$ smooth functions  with
smoothness index $s > 0$  using ReQU activated neural networks.
%This is a new approximation result for ReQU networks.
Our numerical experiments demonstrate that the proposed method is competitive with or outperforms two existing methods, including methods using reproducing kernels and random forests, for nonparmetric quantile regression. Therefore, the proposed approach can be a useful addition to the methods for multivariate nonparametric regression analysis.

The results and methods of this work are expected to be
 useful in other settings.
 %nonparametric estimation  problems.
In particular, our approximation results on ReQU activated networks are of independent interest.
It would be interesting to take advantage of the smoothness of  ReQU activated networks
and use them in other nonparametric estimation problems, such as the estimation of a regression function and its derivative.
% density functions and their derivatives.

\iffalse
\section*{Acknowledgements}
%The authors wish to thank the editor, the associate editor and three anonymous reviewers %for their insightful comments and constructive suggestions that helped improve the paper %significantly.
The work of Y. Jiao is supported in part by the National Science Foundation of China grant 11871474 and by the research fund of KLATASDSMOE of China.
The work of Y. Lin is supported by the Hong Kong Research Grants Council (Grant No.
14306219 and 14306620), the National Natural Science Foundation of China (Grant No.
11961028) and Direct Grants for Research, The Chinese University of Hong Kong.
%The work of J. Huang is partially supported by the U.S. NSF grant DMS-1916199.
\fi

%\clearpage
%\bibliographystyle{apalike}
\bibliographystyle{imsart-nameyear}
\bibliography{bib_dqr.bib}    % Bibliography file (usually '*.bib')

%\end{document}
\clearpage

\appendix

\section{Proof of Theorems, Corollaries and Lemmas}
In 
%this section of 
the appendix, we include the proofs for the results stated in Section \ref{sec3} and the technical details needed in the proofs.

{\color{black}
\subsection*{Proof of Proposition \ref{prop1}}
	For any random variable $\xi$ supported on $(0,1)$, the risk
	\begin{align*}
		\mathcal{R}(f)=&\mathbb{E}_{X,Y,\xi}\{\rho_{\xi}(Y-f(X,\xi))\}\\
		=&\int_{0}^1\mathbb{E}_{X,Y}\{\rho_{\xi}(Y-f(X,\tau))\}\pi_\xi(\tau)d\tau
	\end{align*}
	where $\pi_\xi(\cdot)\ge0$ is the density function of $\xi$. By the definition of $f_0$ and the property of quantile loss function, it is known $f_0$ minimizes $\mathbb{E}_{X,Y}\{\rho_{\xi}(Y-f(X,\tau))\}$ as well as $\mathbb{E}_{X,Y}\{\rho_{\xi}(Y-f(X,\tau))\}\pi_\xi(\tau)$ for each $\tau\in(0,1)$. Thus $f_0$ minimizes the integral or the risk $\mathcal{R}(\cdot)$ over measurable functions.
	
	Note that if $\pi_\xi(\tau)=0$ for some $\tau\in T$ where $T$ is a subset of $(0,1)$, then any function $\tilde{f}_0$ defined on $\mathcal{X}\times(0,1)$ that is different from $f_0$ only on $\mathcal{X}\times T$ will also be a minimizer of $\mathcal{R}(\cdot)$. To be exact,
	$$\tilde{f}_0\in\arg\min_{f}\mathcal{R}(f)\qquad{\rm if\ and\ only\ if}\qquad \tilde{f}_0=f_0{\rm\ on\  }\mathcal{X}\times T.$$
	
	Further, if $(X,\xi)$ has non zero density almost everywhere on $\mathcal{X}\times(0,1)$ and the probability measure of $(X,\xi)$ is absolutely continuous with respect to Lebesgue measure, then above defined set $\mathcal{X}\times T$ is measure-zero and $f_0$ is the unique minimizer of $\mathcal{R}(\cdot)$ over all measurable functions in the sense of almost everywhere(almost surely), i.e.,
	$$f_0=\arg\min_{f} \mathcal{R}(f)=\arg\min_{f}\mathbb{E}_{X,Y,\xi}\{\rho_{\xi}(Y-f(X,\xi))\},$$
	up to a negligible set with respect to the probability measure of $(X,\eta)$ on $\mathcal{X}\times(0,1)$. $\hfill \Box$
}

\subsection*{Proof of Lemma \ref{decom}}
Recall that $\hat{f}^\lambda_n$ is the penalized empirical risk minimizer. Then, for any $f\in\mathcal{F}_n$ we have
\begin{comment}
\begin{align*}
\mathcal{R}_n(\hat{f}^\lambda_n)+\lambda\kappa_n(\hat{f}^\lambda_n)=\mathcal{R}^\lambda_n(\hat{f}_n^\lambda)\le\mathcal{R}^\lambda_n(f)=\mathcal{R}_n(f)+\lambda\kappa_n(f),
\end{align*}
\end{comment}
\begin{align*}
	\mathcal{R}^\lambda_n(\hat{f}^\lambda_n)\le\mathcal{R}^\lambda_n(f).
\end{align*}
Besides, for any $f\in\mathcal{F}$ we have $\kappa(f)\ge0$ and  $\kappa_n(f)\ge0$ since $\kappa$ and $\kappa_n$ are nonnegative functions. Note that $\kappa(f_0)=\kappa_n(f_0)=0$ by the assumption that $f_0$ is increasing in its second argument. Then,
\begin{align*}
	\mathcal{R}(\hat{f}^\lambda_n)-\mathcal{R}(f_0)\le&\mathcal{R}(\hat{f}^\lambda_n)-\mathcal{R}(f_0)+\lambda\{\kappa(\hat{f}^\lambda_n)-\kappa(f_0)\}=\mathcal{R}^\lambda(\hat{f}^\lambda_n)-\mathcal{R}^\lambda(f_0).
\end{align*}
We can then give upper bounds for the excess risk $\mathcal{R}(\hat{f}^\lambda_n)-\mathcal{R}(f_0)$. For any $f\in\mathcal{F}_n$,
\begin{align*}
	&\mathbb{E}\{\mathcal{R}(\hat{f}^\lambda_n)-\mathcal{R}(f_0)\}\\
	&\le\mathbb{E}\{\mathcal{R}^\lambda(\hat{f}^\lambda_n)-\mathcal{R}^\lambda(f_0)\}\\
	&\le\mathbb{E}\{\mathcal{R}^\lambda(\hat{f}^\lambda_n)-\mathcal{R}^\lambda(f_0)\}+2\mathbb{E}\{\mathcal{R}_n^\lambda(f)-\mathcal{R}_n^\lambda(\hat{f}^\lambda_n)\}\\
	&=\mathbb{E}\{\mathcal{R}^\lambda(\hat{f}^\lambda_n)-\mathcal{R}^\lambda(f_0)\}+2\mathbb{E}[\{\mathcal{R}_n^\lambda(f)-\mathcal{R}_n^\lambda(f_0)\}-\{\mathcal{R}_n^\lambda(\hat{f}^\lambda_n)-\mathcal{R}_n^\lambda(f_0)\}]\\
	&=\mathbb{E}\{\mathcal{R}^\lambda(\hat{f}^\lambda_n)-2\mathcal{R}^\lambda_n(\hat{f}^\lambda_n)+\mathcal{R}^\lambda(f_0)\}+2\mathbb{E}\{\mathcal{R}_n^\lambda(f)-\mathcal{R}_n^\lambda(f_0)\}
%	&=\mathcal{R}^\lambda(\hat{f}^\lambda_n)-\mathcal{R}^\lambda_n(\hat{f}^\lambda_n)%-\{\mathcal{R}^\lambda(f_0)-\mathcal{R}^\lambda_n(f_0)\}
%	\\\label{ine2}
%	&+\mathcal{R}^\lambda_n(\hat{f}^\lambda_n)-\mathcal{R}^\lambda_n(f)\\\label{ine3}
%	&+\mathcal{R}^\lambda_n(f)-\mathcal{R}^\lambda(f)%-\{\mathcal{R}^\lambda_n(f_0)-\mathcal{R}^\lambda(f_0)\}
%	\\\label{ine4}
%	&+\mathcal{R}^\lambda(f)-\mathcal{R}^\lambda(f_0)\\\notag
%	&\le 2\sup_{f\in\mathcal{F}_n}\vert \mathcal{R}^\lambda(f)-\mathcal{R}^\lambda_n(f)\vert+\mathcal{R}^\lambda(f)-\mathcal{R}^\lambda(f_0)\\\notag
	%&\le 2\sup_{f\in\mathcal{F}_n}\vert \{\mathcal{R}^\lambda(f)-\mathcal{R}^\lambda(f_0)\}-\{\mathcal{R}^\lambda_n(f)-\mathcal{R}^\lambda_n(f_0)\}\vert+\mathcal{R}^\lambda(f)-\mathcal{R}^\lambda(f_0)\\\notag
	%&= 2\sup_{f\in\mathcal{F}_n}\vert \{\mathcal{R}(f)-\mathcal{R}(f_0)\}-\{\mathcal{R}_n(f)-\mathcal{R}_n(f_0)\}+\lambda\{\kappa(f)-\kappa_n(f)\}\vert\\\notag
%	&= 2\sup_{f\in\mathcal{F}_n}\vert \mathcal{R}(f)-\mathcal{R}_n(f)+\lambda\{\kappa(f)-\kappa_n(f)\}\vert\\\notag
%	&+\mathcal{R}(f)-\mathcal{R}(f_0)+\lambda\{\kappa(f)-\kappa(f_0)\},
\end{align*}
where the second inequality holds by the the fact that $\hat{f}^\lambda_n$ satisfies $\mathcal{R}_n^\lambda(f)\ge\mathcal{R}_n^\lambda(\hat{f}^\lambda_n)$ for any $f\in\mathcal{F}_n$. Since the inequality holds for any $f\in\mathcal{F}_n$, we have
\begin{align*}
	\mathbb{E}\{\mathcal{R}(\hat{f}^\lambda_n)-\mathcal{R}(f_0)\}&\le\mathbb{E}\{\mathcal{R}^\lambda(\hat{f}^\lambda_n)-2\mathcal{R}^\lambda_n(\hat{f}^\lambda_n)+\mathcal{R}^\lambda(f_0)\}+2\inf_{f\in\mathcal{F}_n}\{\mathcal{R}^\lambda(f)-\mathcal{R}^\lambda(f_0)\}.
\end{align*}
This completes the proof. $\hfill \Box$

\subsection*{Proof of Theorem  \ref{non-asymp}}
The proof is straightforward by consequences of Theorem \ref{stoerr} and Corollary \ref{cor1}.

For any $N\in\mathbb{N}^+$, let $\mathcal{F}_n:=\mathcal{F}_{\mathcal{D},\mathcal{W},\mathcal{U},\mathcal{S},\mathcal{B},\mathcal{B}^\prime}$ be the ReQU activated neural networks $f:\mathcal{X}\times(0,1)\to\mathbb{R}$ with depth $\mathcal{D}\le2N-1$, width $\mathcal{W}\le12N^d$, number of neurons $\mathcal{U}\le15N^{d+1}$, number of parameters $\mathcal{S}\le24N^{d+1}$ and satisfying $\mathcal{B}\ge\Vert f_0\Vert_{C^0}$ and $\mathcal{B}^\prime\ge\Vert f_0\Vert_{C^1}$.  Then we would compare the stochastic error bounds $8602{\mathcal{U}\mathcal{S}}$ and $5796{\mathcal{D}\mathcal{S}(\mathcal{D}+\log_2U)}$. By simple math it can be shown that ${\mathcal{D}\mathcal{S}}(\mathcal{D}+\log_2U)=\mathcal{O}(dN^{d+3})$ and $\mathcal{U}\mathcal{S}=\mathcal{O}(N^{2d+2})$. Since $d\ge1$, then we choose apply the upper bound $\mathcal{D}\mathcal{S}(\mathcal{D}+\log_2U)$ in Theorem \ref{stoerr} to get a excess risk bound with lower order in terms of $N$. This completes the proof. $\hfill \Box$

\subsection*{Proof of Lemma \ref{lemmader}}

	Let $\sigma_1(x)=\max\{0,x\}$ and $\sigma_2(x)=\max\{0,x\}^2$ denote the ReLU and ReQU  activation functions respectively. Let $(d_0,d_1,\ldots,d_{\mathcal{D}+1})$ be vector of the width (number of neurons) of each layer in the original ReQU network where $d_0=d+1$ and $d_{\mathcal{D}+1}=1$ in our problem. We let $f^{(i)}_j$ be the function (subnetwork of the ReQU network) from $\mathcal{X}\times(0,1)\subset\mathbb{R}^{d+1}$ to $\mathbb{R}$ which takes $(X,\xi)=(x_1,\ldots,x_d,x_{d+1})$ as input and outputs the $j$-th neuron of the $i$-th layer for  $j=1,\ldots,d_i$ and $i=1,\ldots,\mathcal{D}+1$.
	
	We next construct iteratively ReLU-ReQU activated subnetworks to compute $(\p f^{(i)}_1,\ldots,f^{(i)}_{d_i})$ for $i=1,\ldots,\mathcal{D}+1$, i.e., the partial derivatives of the original ReQU subnetworks step by step. We illustrate the details of the construction of the  ReLU-ReQU subnetworks for the first two layers ($i=1,2$) and the last layer $(\i=\mathcal{D}+1)$ and apply induction for layers $i=3,\ldots,\mathcal{D}$.
	Note that the derivative of ReQU activation function is $\sigma^\prime_2(x)=2\sigma_1(x)$, then when $i=1$ for any $j=1,\ldots,d_1$,
	\begin{align}\label{pderlayer1}
		\p f^{(1)}_j=\p \sigma_2\Big(\sum_{i=1}^{d+1}w^{(1)}_{ji}x_i+b_j^{(1)}\Big)=2\sigma_1\Big(\sum_{i=1}^{d+1}w^{(1)}_{ji}x_i+b_j^{(1)}\Big)\cdot w_{j,d+1}^{(1)},
	\end{align}
	where we denote $w^{(1)}_{ji}$ and $b_j^{(1)}$ by the corresponding weights and bias in $1$-th layer of the original ReQU network and with a little bit abuse of notation we view $x_{d+1}$ as the argument $\tau$ and calculate its partial derivative. Now we intend to construct a 4 layer (2 hidden layers) ReLU-ReQU network with width $(d_0,3d_1,10d_1,2d_1)$ which takes $(X,\xi)=(x_1,\ldots,x_d,x_{d+1})$ as input and outputs
	$$(f^{(1)}_1,\ldots,f^{(1)}_{d_1},\p f^{(1)}_1,\ldots,\p f^{(1)}_{d_1})\in\mathbb{R}^{2d_1}.$$
	Note that the output of such network contains all the quantities needed to calculated  $(\p f^{(2)}_1,\ldots,\p f^{(2)}_{d_2})$, and the process of construction  can be continued iteratively and the induction proceeds. In the firstly hidden layer, we can obtain $3d_1$ neurons $$(f^{(1)}_1,\ldots,f^{(1)}_{d_1},\vert w^{(1)}_{1,d_0}\vert ,\ldots,\vert w^{(1)}_{d_1,d_0}\vert ,\sigma_1(\sum_{i=1}^{d_0}w^{(1)}_{1i}x_i+b^{(1)}_1),\ldots,\sigma_1(\sum_{i=1}^{d_0}w^{(1)}_{d_1i}x_i+b^{(1)}_{d_1})),$$
	with weight matrix $A^{(1)}_1$ having $2d_0d_1$ parameters, bias vector $B^{(1)}_1$ and activation function vector $\Sigma_1$ being
	\begin{align*}A^{(1)}_1=\left[
		\begin{array}{ccccc}
			w^{(1)}_{1,1} &w^{(1)}_{1,2} & \cdots& \cdots&w^{(1)}_{1,d_0} \\
			w^{(1)}_{2,1} &w^{(1)}_{2,2} &\cdots& \cdots&w^{(1)}_{2,d_0} \\
			\ldots& \ldots&\ldots& \ldots&\ldots\\
			w^{(1)}_{d_1,1} &w^{(1)}_{d_1,2} &\cdots&\cdots&w^{(1)}_{d_1,d_0} \\
			0 & 0& 0&0 & 0\\
			\ldots& \ldots&\ldots& \ldots&\ldots\\
			0 & 0& 0&0 & 0\\
			w^{(1)}_{1,1} &w^{(1)}_{1,2} & \cdots& \cdots&w^{(1)}_{1,d_0} \\
			w^{(1)}_{2,1} &w^{(1)}_{2,2} &\cdots& \cdots&w^{(1)}_{2,d_0} \\
			\ldots& \ldots&\ldots& \ldots&\ldots\\
			w^{(1)}_{d_1,1} &w^{(1)}_{d_1,2} &\cdots&\cdots&w^{(1)}_{d_1,d_0} \\
		\end{array}\right]\in\mathbb{R}^{3d_1\times d_0},\quad B^{(1)}_1=\left[\begin{array}{c}
	b^{(1)}_1\\
	b^{(1)}_2\\
	\ldots\\
	b^{(1)}_{d_1}\\
	\vert w^{(1)}_{1,d_0}\vert \\
	\vert w^{(1)}_{2,d_0}\vert \\
	\ldots\\
	\vert w^{(1)}_{d_1,d_0}\vert \\
	b^{(1)}_1\\
	b^{(1)}_2\\
	\ldots\\
	b^{(1)}_{d_1}\\
	\end{array}\right]\in\mathbb{R}^{3d_1},\quad\Sigma^{(1)}_1=\left[\begin{array}{c}
	\sigma_2\\
	\ldots\\
	\sigma_2\\
	\sigma_1\\
	\ldots\\
	\sigma_1\\
	\sigma_1\\
	\ldots\\
	\sigma_1\\
\end{array}\right],
	\end{align*}
	where the first $d_1$ activation functions of $\Sigma_1$ are chosen to be $\sigma_2$ and others $\sigma_1$.
	In the second hidden layer, we can obtain $10d_1$ neurons. The first $2d_1$ neurons of the second hidden layer (or the third layer) are
	$$(\sigma_1(f^{(1)}_1),\sigma_1(-f^{(1)}_1)),\ldots,\sigma_1(f^{(1)}_{d_1}),\sigma_1(f^{(1)}_{d_1})),$$
	which intends to implement identity map such that $(f^{(1)}_1,\ldots,f^{(1)}_{d_1})$ can be kept and outputted in the next layer since identity map can be realized by $x=\sigma_1(x)-\sigma_1(-x)$. The first $8d_1$ neurons of the second hidden layer (or the third layer) are
	\begin{align*}
		\left[\begin{array}{c}
			\sigma_2(w^{(1)}_{1,d_0}+\sigma_1(\sum_{i=1}^{d_0}w^{(1)}_{1i}x_i+b^{(1)}_{1}))\\
			\sigma_2(w^{(1)}_{1,d_0}-\sigma_1(\sum_{i=1}^{d_0}w^{(1)}_{1i}x_i+b^{(1)}_{1}))\\
			\sigma_2(-w^{(1)}_{1,d_0}+\sigma_1(\sum_{i=1}^{d_0}w^{(1)}_{1i}x_i+b^{(1)}_{1})\\
			\sigma_2(-w^{(1)}_{1,d_0}-\sigma_1(\sum_{i=1}^{d_0}w^{(1)}_{1i}x_i+b^{(1)}_{1}))\\
			\ldots\\
			\sigma_2(w^{(1)}_{d_1,d_0}+\sigma_1(\sum_{i=1}^{d_0}w^{(1)}_{d_1i}x_i+b^{(1)}_{d_1}))\\
			\sigma_2(w^{(1)}_{d_1,d_0}-\sigma_1(\sum_{i=1}^{d_0}w^{(1)}_{d_1i}x_i+b^{(1)}_{d_1}))\\
			\sigma_2(-w^{(1)}_{d_1,d_0}+\sigma_1(\sum_{i=1}^{d_0}w^{(1)}_{d_1i}x_i+b^{(1)}_{d_1})\\
			\sigma_2(-w^{(1)}_{d_1,d_0}-\sigma_1(\sum_{i=1}^{d_0}w^{(1)}_{d_1i}x_i+b^{(1)}_{d_1}))\\
		\end{array}\right]\in\mathbb{R}^{8d_1},
	\end{align*}
	which is ready for implementing the multiplications in (\ref{pderlayer1}) to obtain $(\p f^{(1)}_1,\ldots,\p f^{(1)}_{d_1})\in\mathbb{R}^{d_1}$ since
	\begin{align*}
		x\cdot y=\frac{1}{4}\{(x+y)^2-(x-y)^2\}=\frac{1}{4}\{\sigma_2(x+y)+\sigma_2(-x-y)-\sigma_2(x-y)-\sigma_2(-x+y)\}.
	\end{align*}
	In the second hidden layer (the third layer), the bias vector is zero $B^{(1)}_2=(0,\ldots,0)\in\mathbb{R}^{10d_1}$, activation functions vector $$\Sigma^{(1)}_2=(\underbrace{\sigma_1,\ldots,\sigma_1}_{2d_1\ {\rm times}},\underbrace{\sigma_2,\ldots,\sigma_2}_{8d_1\ {\rm times}}),$$
	and the corresponding weight matrix $A^{(1)}_2$  can be formulated correspondingly without difficulty which contains $2d_1+8d_1=10d_1$ non-zero parameters. Then in the last layer, by the identity maps and multiplication operations with weight matrix $A^{(1)}_3$ having $2d_1+4d_1=6d_1$ parameters, bias vector $B^{(1)}_3$ being zeros, we obtain
	$$(f^{(1)}_1,\ldots,f^{(1)}_{d_1},\p f^{(1)}_1,\ldots,\p f^{(1)}_{d_1})\in\mathbb{R}^{2d_1}.$$
	Such ReLU-ReQU neural network has 2 hidden layers (4 layers), $15d_1$ hidden neurons,
	$2d_0d_1+3d_1+10d_1+6d_1=2d_0d_1+19d_1$ parameters and its width is $(d_0,3d_1,10d_1,2d_1)$. It worth noting that the ReLU-ReQU activation functions do not apply to the last layer since the construction here is for a single network.  When we are combining two consecutive subnetworks into one long neural network, the ReLU-ReQU activation functions should apply to the last layer of the first subnetwork. Hence, in the construction of the whole big network, the last layer of the subnetwork here should output $4d_1$ neurons
	\begin{align*}
		&(\sigma_1(f^{(1)}_1),\sigma_1(-f^{(1)}_1)\ldots,\sigma_1(f^{(1)}_{d_1}),\sigma_1(-f^{(1)}_{d_1}),\\
		&\qquad\sigma_1(\p f^{(1)}_1),\sigma_1(-\p f^{(1)}_1)\ldots,\sigma_1(\p f^{(1)}_{d_1}),\sigma_1(-\p f^{(1)}_{d_1}))\in\mathbb{R}^{4d_1},
	\end{align*}
	to keep  $(f^{(1)}_1,\ldots,f^{(1)}_{d_1},\p f^{(1)}_1,\ldots,\p f^{(1)}_{d_1})$ in use in the next subnetwork. Then for this ReLU-ReQU neural network, the weight matrix $A^{(1)}_3$ has $2d_1+8d_1=10d_1$ parameters, the bias vector $B^{(1)}_3$ is zeros and the activation functions vector $\Sigma^{(1)}_3$ has all $\sigma_1$ as elements. And such  ReLU-ReQU neural network has 2 hidden layers (4 layers), $17d_1$ hidden neurons,
	$2d_0d_1+3d_1+10d_1+10d_1=2d_0d_1+23d_1$ parameters and its width is $(d_0,3d_1,10d_1,4d_1)$.

 Now we consider the second step, for any $j=1,\ldots,d_2$,
	\begin{align}\label{pderlayer2}
		\p f^{(2)}_j=\p \sigma_2\Big(\sum_{i=1}^{d_1}w^{(2)}_{ji}f^{(1)}_{i}+b_j^{(2)}\Big)=2\sigma_1\Big(\sum_{i=1}^{d_1}w^{(2)}_{ji}f^{(1)}_i+b_j^{(2)}\Big)\cdot \sum_{i=1}^{d_1}w_{j,i}^{(2)}\p f^{(1)}_i,
	\end{align}
	where $w^{(2)}_{ji}$ and $b_j^{2)}$ are defined correspondingly as the weights and bias in $2$-th layer of the original ReQU network.
	By the previous constructed subnetwork, we can start with its outputs
		\begin{align*}
		&(\sigma_1(f^{(1)}_1),\sigma_1(-f^{(1)}_1)\ldots,\sigma_1(f^{(1)}_{d_1}),\sigma_1(-f^{(1)}_{d_1}),\\
		&\qquad\sigma_1(\p f^{(1)}_1),\sigma_1(-\p f^{(1)}_1)\ldots,\sigma_1(\p f^{(1)}_{d_1}),\sigma_1(-\p f^{(1)}_{d_1}))\in\mathbb{R}^{4d_1},
	\end{align*}
	as the inputs of the second subnetwork we are going to build.  In the firstly hidden layer of the second subnetwork, we can obtain $3d_2$ neurons
	\begin{align*}
		&\Big(f^{(2)}_1,\ldots,f^{(2)}_{d_2},\vert \sum_{i=1}^{d_1}w_{1,i}^{(2)}\p f^{(1)}_i\vert ,\ldots,\vert \sum_{i=1}^{d_1}w_{d_2,i}^{(2)}\p f^{(1)}_i\vert,\\
		&\qquad\qquad\sigma_1(\sum_{i=1}^{d_1}w^{(2)}_{1i}f^{(1)}_i+b^{(1)}_1),\ldots,\sigma_1(\sum_{i=1}^{d_1}w^{(2)}_{d_2i}f^{(1)}_i+b^{(2)}_{d_2})\Big),
	\end{align*}
	with weight matrix $A^{(2)}_1\in\mathbb{R}^{4d_1\times3d_2}$ having $6d_1d_2$ non-zero parameters, bias vector $B^{(2)}_1\in\mathbb{R}^{3d_2}$ and activation functions vector $\Sigma^{(2)}_1=\Sigma^{(1)}_1$.
	Similarly, the second hidden layer can be constructed to have $10d_2$ neurons with  weight matrix $A^{(2)}_2\in\mathbb{R}^{3d_2\times10d_2}$ having $2d_2+8d_2=10d_2$ non-zero parameters, zero bias vector $B^{(2)}_1\in\mathbb{R}^{10d_2}$ and activation functions vector $\Sigma^{(2)}_2=\Sigma^{(1)}_2$. The second hidden layer here serves exactly the same as that in the first subnetwork, which intends to implement the identity map for
	$$(f^{(2)}_1,\ldots,f^{(2)}_{d_2}),$$
	and implement the multiplication in (\ref{pderlayer2}). Similarly, the last layer can also be constructed as that in the first subnetwork, which outputs
		\begin{align*}
			&(\sigma_1(f^{(2)}_1),\sigma_1(-f^{(2)}_1)\ldots,\sigma_1(f^{(2)}_{d_2}),\sigma_1(-f^{(2)}_{d_2}),\\
			&\qquad\sigma_1(\p f^{(2)}_1),\sigma_1(-\p f^{(2)}_1)\ldots,\sigma_1(\p f^{(2)}_{d_2}),\sigma_1(-\p f^{(2)}_{d_2}))\in\mathbb{R}^{4d_2},
		\end{align*}
	with the weight matrix $A^{(2)}_3$ having $2d_2+8d_2=10d_2$ parameters, the bias vector $B^{(2)}_3$ being zeros and the activation functions vector $\Sigma^{(1)}_3$ with elements being  $\sigma_1$. Then the second ReLU-ReQU  subnetwork has 2 hidden layers (4 layers), $17d_2$ hidden neurons,
	$6d_1d_2+3d_2+10d_2+10d_2=6d_1d_2+23d_2$ parameters and its width is $(4d_1,3d_2,10d_2,4d_2)$.

	Then we can continuing this process of construction. For integers $k=3,\ldots,\mathcal{D}$ and for any $j=1,\ldots,d_{k}$,
	\begin{align*}
		\p f^{(k)}_j&=\p \sigma_2\Big(\sum_{i=1}^{d_{k-1}}w^{(k)}_{ji}f^{(k-1)}_{i}+b_j^{(k)}\Big)\\
&=2\sigma_1\Big(\sum_{i=1}^{d_{k-1}}w^{(k)}_{ji}f^{(k-1)}_i+b_j^{(k)}\Big)\cdot \sum_{i=1}^{d_{k-1}}w_{j,i}^{(k)}\p f^{(k-1)}_i,
	\end{align*}
	where $w^{(k)}_{ji}$ and $b_j^{(k)}$ are defined correspondingly as the weights and bias in $k$-th layer of the original ReQU network. We can construct a ReLU-ReQU network taking
	\begin{align*}
		&(\sigma_1(f^{(k-1)}_1),\sigma_1(-f^{(k-1)}_1)\ldots,\sigma_1(f^{(k-1)}_{d_{k-1}}),\sigma_1(-f^{(k-1)}_{d_{k-1}}),\\
		&\qquad\sigma_1(\p f^{(k-1)}_1),\sigma_1(-\p f^{(k-1)}_1)\ldots,\sigma_1(\p f^{(k-1)}_{d_{k-1}}),\sigma_1(-\p f^{(k-1)}_{d_{k-1}}))\in\mathbb{R}^{4d_{k-1}},
	\end{align*}
	as input, and it outputs
		\begin{align*}
		&(\sigma_1(f^{(k)}_1),\sigma_1(-f^{(k)}_1)\ldots,\sigma_1(f^{(k)}_{d_{k}}),\sigma_1(-f^{(k)}_{d_{k}}),\\
		&\qquad\sigma_1(\p f^{(k)}_1),\sigma_1(-\p f^{(k)}_1)\ldots,\sigma_1(\p f^{(k)}_{d_{k}}),\sigma_1(-\p f^{(k)}_{d_{k}}))\in\mathbb{R}^{4d_{k}},
	\end{align*}
	with 2 hidden layers, $17d_k$ hidden neurons, $6d_{k-1}d_k+23d_k$ parameters and its width is $(4d_{k-1},3d_{k},10d_{k},4d_K)$.
	
	Iterate this process until the $k=\mathcal{D}+1$ step, where the last layer of the original ReQU network has only $1$ neurons. That is for the ReQU activated neural network $f\in\mathcal{F}_n=\mathcal{F}_{\mathcal{D},\mathcal{W},\mathcal{U},\mathcal{S},\mathcal{B}}$, the output of the network $f:\mathcal{X}\times(0,1)\to\mathbb{R}$ is a scalar and the partial derivative with respect to $\tau$ is
	\begin{align*}
		\p f=\p \sum_{i=1}^{d_{\mathcal{D}+1}}w^{(\mathcal{D})}_{i}f^{(\mathcal{D})}_i+b^{(\mathcal{D})}=\sum_{i=1}^{d_{\mathcal{D}+1}}w^{(\mathcal{D})}_{i} \p f^{(\mathcal{D})}_i,
	\end{align*}
	where $w^{(\mathcal{D})}_{i}$ and $b^{(\mathcal{D})}$ are the weights and bias parameter in the last layer of the ReQU network.
	The the constructed $\mathcal{D}+1$-th subnetwork taking
		\begin{align*}
		&(\sigma_1(f^{(\mathcal{D})}_1),\sigma_1(-f^{(\mathcal{D})}_1)\ldots,\sigma_1(f^{(\mathcal{D})}_{d_{\mathcal{D}}}),\sigma_1(-f^{(\mathcal{D})}_{d_{\mathcal{D}}}),\\
		&\qquad\sigma_1(\p f^{(\mathcal{D})}_1),\sigma_1(-\p f^{(\mathcal{D})}_1)\ldots,\sigma_1(\p f^{(\mathcal{D})}_{d_{\mathcal{D}}}),\sigma_1(-\p f^{(\mathcal{D})}_{d_{\mathcal{D}}}))\in\mathbb{R}^{4d_{\mathcal{D}}},
	\end{align*}
	as input and  it outputs $\p f^{(\mathcal{D}+1)}=\p f$ which is the partial derivative of the whole ReQU network with respect to its last argument $\tau$ or $x_{d_0}=x_{d+1}$ here. The subnetwork should have 2 hidden layers width $(4d_{\mathcal{D}},2,8,1)$ with $11$ hidden neurons, $4d_{\mathcal{D}}+2+16=4d_{\mathcal{D}}+18$ non-zero parameters.

	Lastly, we combing all the $\mathcal{D}+1$ subnetworks in order to form a big ReLU-ReQU network which takes $(X,\xi)=(x_1,\ldots,x_{d+1})\in\mathbb{R}^{d+1}$ as input and outputs $\p f$ for $f\in\mathcal{F}_n=\mathcal{F}_{\mathcal{D},\mathcal{W}, \mathcal{U},\mathcal{S},\mathcal{B},\mathcal{B}^\prime}$. Recall that here $\mathcal{D},\mathcal{W}, \mathcal{U},\mathcal{S}$ are the depth, width, number of neurons and number of parameters of the ReQU network respectively, and we have $\mathcal{U}=\sum_{i=0}^{\mathcal{D}+1}d_i$ and $\mathcal{S}=\sum_{i=0}^{\mathcal{D}}d_id_{i+1}+d_{i+1}.$ Then the big network has $3\mathcal{D}+3$ hidden layers (totally 3$\mathcal{D}+5$ layers), $d_0+\sum_{i=1}^{\mathcal{D}}17d_{i}+11\le 17\mathcal{U}$ neurons, $2d_0d_1+23d_1+\sum_{i=1}^\mathcal{D}(6d_{i}d_{i+1}+23d_{i+1})+4d_\mathcal{D}+18\le23\mathcal{S}$ parameters and its width is $10\max\{d_1,\ldots,d_\mathcal{D}\}=10\mathcal{W}$. This completes the proof. $\hfill \Box$

\subsection*{Proof of Lemma \ref{lemmapdim}}
Our proof has two parts. In the first part, we follow the idea of the proof of Theorem 6 in \citet{bartlett2019nearly} to prove a somewhat stronger result, where we give the upper bound of the Pseudo dimension of $\mathcal{F}$ in terms of the depth, size and number of neurons of the network. Instead of the VC dimension of ${\rm sign}(\mathcal{F})$ given in \citet{bartlett2019nearly}, our Pseudo dimension  bound is stronger since ${\rm VCdim}({\rm sign}(\mathcal{F}))\le\pdim(\mathcal{F})$. In the second part, based on Theorem 2.2 in \citet{goldberg1995bounding}, we also follow and improve the result in Theorem 8 of \cite{bartlett2019nearly} to give an upper bound of the Pseudo dimension of $\mathcal{F}$ in terms of the size and number of neurons of the network.

\subsubsection*{Part I}
Let $\mathcal{Z}$ denote the domain of the functions $f\in\mathcal{F}$ and let $t\in\mathbb{R}$, we consider a new class of functions
$$\tilde{\mathcal{F}}:=\{\tilde{f}(z,t)=\s(f(z)-t):f\in\mathcal{F}\}.$$
Then it is clear that $\pdim(\mathcal{F})\le\vdim(\tilde{\mathcal{F}})$ and we next bound the VC dimension of $\tilde{\mathcal{F}}$.
Recall that the the total number of parameters (weights and biases) in the neural network implementing functions in $\mathcal{F}$ is $\mathcal{S}$, we let $\theta\in\mathbb{R}^{\mathcal{S}}$ denote the parameters vector of the network $f(\cdot,\theta):\mathcal{Z}\to\mathbb{R}$ implemented in $\mathcal{F}$. And here we intend to derive a bound for
$$K(m):=\Big\vert\{(\s(f(z_1,\theta)-t_1),\ldots,\s(f(z_m,\theta)-t_m)):\theta\in\mathbb{R}^{\mathcal{S}}\}\Big\vert$$
which uniformly hold for all choice of $\{z_i\}_{i=1}^m$ and $\{t_i\}_{i=1}^m$. Note that the maximum of $K(m)$ over all all choice of $\{z_i\}_{i=1}^m$ and $\{t_i\}_{i=1}^m$ is just the growth function of $\tilde{\mathcal{F}}$. To give a uniform bound of $K(m)$, we use the Theorem 8.3 in \citet{anthony1999} as a main tool to deal with the analysis.
\begin{lemma}[Theorem 8.3 in \citet{anthony1999}]\label{polynum}
	Let $p_1,\ldots,p_m$ be polynomials in $n$ variables of degree at most $d$. If $n\le m$, define
	$$K:=\vert\{(\s(p_1(x),\ldots,\s(p_m(x))):x\in\mathbb{R}^n)\}\vert,$$
	i.e. $K$ is the number of possible sign vectors given by the polynomials. Then $K\le2(2emd/n)^n$.
\end{lemma}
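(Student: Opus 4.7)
The plan is to prove this classical bound of Warren (1968) by reducing the counting of sign patterns to a bound on the number of connected regions in an arrangement of algebraic hypersurfaces, and then invoking a Morse-theoretic bound from real algebraic geometry. This result lies outside the ReQU network machinery developed earlier in the paper, so I would not appeal to any previous result in the excerpt.

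First, I would set up the reduction. For each sign vector $s \in \{-1, 0, +1\}^m$, let $R_s = \{x \in \mathbb{R}^n : \s(p_i(x)) = s_i \text{ for all } i\}$. Sign vectors $s$ with no zero entry correspond to nonempty unions of connected components of the open set $\mathbb{R}^n \setminus V$, where $V := \bigcup_{i=1}^m \{p_i = 0\}$. Sign vectors with some zero entries lie on $V$ itself; a generic perturbation $p_i \mapsto p_i + \varepsilon_i$ for sufficiently small $\varepsilon_i$ converts these degenerate patterns into nonzero ones while preserving the originally nonzero patterns. This reduces the task, up to an explicit overall factor of $2$ (absorbing the degenerate patterns), to bounding the number of open connected components of $\mathbb{R}^n \setminus V$.

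Second, I would bound this cell count via the Oleinik--Petrovsky--Milnor--Thom theorem. For a polynomial $P$ of degree $D$ in $n$ variables, the sum of Betti numbers of $\{x : P(x) \neq 0\}$ is at most $D(2D-1)^{n-1}$. Taking $P = \prod_{i=1}^m p_i$, which has degree $md$, yields a bound of order $(2md)^n$ for the total number of connected components, and hence for the number of sign vectors. To sharpen this to the stated form, the key refinement (following Warren) is an inductive dimension-reduction argument: slice $\mathbb{R}^n$ by a generic hyperplane, bound the number of newly created components via the induction hypothesis in dimension $n-1$, and sum the contributions. This produces a combinatorial bound of the shape $\sum_{k=0}^n \binom{m}{k}(2d)^k$, and applying the elementary inequality $\binom{m}{k} \leq (em/k)^k$ at the dominant term $k = n$ (which dominates once $n \leq m$) delivers the claimed $2(2emd/n)^n$ estimate.

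The main obstacle is keeping the constants tight. A direct application of Milnor--Thom yields the correct order in $m, d, n$ but a worse numerical constant; obtaining exactly $2(2emd/n)^n$ requires (i) the inductive Warren refinement so that the bound becomes a sum of binomial coefficients rather than a single Milnor--Thom estimate, (ii) careful use of the hypothesis $n \leq m$ to guarantee that the binomial sum is dominated by its $k = n$ term, and (iii) the factor of $2$ cleanly absorbing the contribution of sign vectors with zero entries. Assembling these three ingredients with sharp constants, rather than establishing them individually, is where the real subtlety lies.
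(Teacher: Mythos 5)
The paper does not prove this lemma at all: it is invoked as a black-box citation of Theorem~8.3 in \citet{anthony1999}, and the authors use it inside the proof of Lemma~\ref{lemmapdim} without any derivation. So there is no ``paper's proof'' to compare against; you are proposing a from-scratch argument where the paper simply defers to the textbook. Your route --- reduce sign-pattern counting to counting connected components of a real algebraic arrangement, bound these via Oleinik--Petrovsky--Milnor--Thom, and then invoke Warren's inductive slicing argument plus $\binom{m}{k}\le(em/k)^k$ --- is indeed the classical line along which this result is obtained, so the high-level outline is sound and genuinely different from the paper (which outsources the whole thing).

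That said, the sketch has one genuine soft spot and is otherwise incomplete where it matters. The ``generic perturbation absorbs the degenerate patterns with a factor of $2$'' step is not correct as stated: perturbing $p_i\mapsto p_i+\varepsilon_i$ shows that sign vectors with \emph{no} zero entry are preserved and in bijection with open cells of a nearby arrangement, but a sign vector with some zeros is supported on a positive-codimension stratum of $V=\bigcup\{p_i=0\}$, and after perturbation such a point falls into \emph{some} open cell whose sign pattern it does not control. So the identity ``\,$\#\{\text{all patterns}\}\le 2\cdot\#\{\text{open components of }\mathbb{R}^n\setminus V\}$'' does not follow from that perturbation; the standard arguments either (a) treat zero entries by induction on strata, or (b) replace $\operatorname{sgn}$ by a two-valued $\{0,1\}$ threshold so that the zero case never arises, and the factor $2$ in Anthony--Bartlett comes from a different place (doubling the polynomial family, not from absorbing degeneracies). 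Beyond that, your step (2) as written only gives a bound of order $(2md)^n$ from a single application of Milnor--Thom, while the stated $2(2emd/n)^n$ really does require carrying through Warren's slicing induction to produce the binomial sum $\sum_{k\le n}\binom{m}{k}(2d)^k$; you acknowledge this, but it is the entire content of the lemma, so the proposal is an outline of a proof rather than a proof.
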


Now if we can find a partition $\mathcal{P}=\{P_1,\ldots,P_N\}$ of the parameter domain $\mathbb{R}^{\mathcal{S}}$ such that within each region $P_i$, the functions $f(z_j,\cdot)$ are all fixed polynomials of bounded degree, then $K(m)$ can be bounded via the following sum
\begin{align}\label{Km}
	K(m)\le\sum_{i=1}^N\Big\vert\{(\s(f(z_1,\theta)-t_1),\ldots,\s(f(z_m,\theta)-t_m)):\theta\in P_i\}\Big\vert,
\end{align}
and each term in this sum can be bounded via Lemma \ref{polynum}.
Next, we construct the partition follows the same way as in \citet{bartlett2019nearly} iteratively layer by layer. We define the a sequence of successive refinements $\mathcal{P}_1,\ldots,\mathcal{P}_{\mathcal{D}}$ satisfying the following properties:
\begin{itemize}
	\item [1.] The cardinality $\vert\mathcal{P}_1\vert=1$ and for each $n\in\{1,\ldots,\mathcal{D}\}$,
	$$\frac{\vert\mathcal{P}_{n+1}\vert}{\vert\mathcal{P}_{n}\vert}\le2\Big(\frac{2emk_n(1+(n-1)2^{n-1})}{\mathcal{S}_n}\Big)^{\mathcal{S}_n},$$
	where $k_n$ denotes the number of neurons in the $n$-th layer and $\mathcal{S}_n$ denotes the total number of parameters (weights and biases) at the inputs to units in all the layers up to layer $n$.
	\item [2.] For each $n\in\{1,\ldots,\mathcal{D}\}$, each element of $P$ of $\mathcal{P}_n$, each $j\in\{1,\ldots,m\}$, and each unit $u$ in the $n$-th layer, when $\theta$ varies in $P$, the net input to $u$ is a fixed polynomial function in $\mathcal{S}_n$ variables of $\theta$, of total degree no more than $1+(n-1)2^{n-1}$ (this polynomial may depend on $P,j$ and $u$.)
\end{itemize}
One can define $\mathcal{P}_1=\mathbb{R}^\mathcal{S}$, and it can be verified that $\mathcal{P}_1$ satisfies property 2 above. Note that in our case, for fixed $z_j$ and $t_j$ and any subset $P\subset\mathbb{R}^{\mathcal{S}}$,  $f(z_j,\theta)-t_j$ is a polynomial with respect to $\theta$ with degree the same as that of $f(z_j,\theta)$, which is no more than $1+(\mathcal{D}-1)2^{\mathcal{D}-1}$. Then the construction of $\mathcal{P}_1,\ldots,\mathcal{P}_{\mathcal{D}}$ and its verification for properties 1 and 2 can follow the same way in \citet{bartlett2019nearly}. Finally we obtain a partition $\mathcal{P}_{\mathcal{D}}$ of $\mathbb{R}^\mathcal{S}$ such that for $P\in\mathcal{P}_\mathcal{D}$, the network output in response to any $z_j$ is a fixed polynomial of $\theta\in P$ of degree no more than $1+(\mathcal{D}-1)2^{\mathcal{D}-1}$ (since the last node just outputs its input). Then by Lemma \ref{polynum}
$$\Big\vert\{(\s(f(z_1,\theta)-t_1),\ldots,\s(f(z_m,\theta)-t_m)):\theta\in P\}\Big\vert\le2\Big(\frac{2em(1+(\mathcal{D}-1)2^{\mathcal{D}-1})}{\mathcal{S}_\mathcal{D}}\Big)^{\mathcal{S}_\mathcal{D}}.$$
Besides, by property 1 we have
\begin{align*}
	\vert\mathcal{P}_\mathcal{D}\vert&\le\Pi_{i=1}^\mathcal{D-1}2\Big(\frac{2emk_i(1+(i-1)2^{i-1})}{\mathcal{S}_i}\Big)^{\mathcal{S}_i}.
\end{align*}
Then using (\ref{Km}), and since the sample $z_1,\ldots,Z_m$ are arbitrarily chosen, we have
\begin{align*}
	K(m)&\le\Pi_{i=1}^\mathcal{D}2\Big(\frac{2emk_i(1+(i-1)2^{i-1})}{\mathcal{S}_i}\Big)^{\mathcal{S}_i}\\
	&\le2^\mathcal{D}\Big(\frac{2em\sum k_i(1+(i-1)2^{i-1})}{\sum\mathcal{S}_i}\Big)^{\sum\mathcal{S}_i}\\
	&\le\Big(\frac{4em(1+(\mathcal{D}-1)2^{\mathcal{D}-1})\sum k_i}{\sum\mathcal{S}_i}\Big)^{\sum\mathcal{S}_i}\\
	&\le\Big(4em(1+(\mathcal{D}-1)2^{\mathcal{D}-1})\Big)^{\sum\mathcal{S}_i},
\end{align*}
where the second inequality follows from weighted arithmetic and geometric means inequality, the third holds since $\mathcal{D}\le\sum\mathcal{S}_i$ and the last holds since $\sum k_i\le\sum \mathcal{S}_i$. Since $K(m)$ is the growth function of $\tilde{\mathcal{F}}$, we have
\begin{align*}
	2^{\pdim(\mathcal{F})}\le2^{\vdim(\tilde{\mathcal{F}})}\le K(\vdim(\tilde{\mathcal{F}}))\le2^\mathcal{D}\Big(\frac{2emR\cdot\vdim(\tilde{\mathcal{F}})}{\sum\mathcal{S}_i}\Big)^{\sum\mathcal{S}_i}
\end{align*}
where $R:=\sum_{i=1}^\mathcal{D} k_i(1+(i-1)2^{i-1})\le \mathcal{U}+\mathcal{U}(\mathcal{D}-1)2^{\mathcal{D}-1}.$
Since $\mathcal{U}>0$ and $2eR\ge16$, then by Lemma 16 in \citet{bartlett2019nearly} we have
$$\pdim(\mathcal{F})\le\mathcal{D}+(\sum_{i=1}^n\mathcal{S}_i)\log_2(4eR\log_2(2eR)).$$
Note that $\sum_{i=1}^\mathcal{D}\mathcal{S}_i\le\mathcal{D}\mathcal{S}$ and $\log_2(R)\le\log_2(\mathcal{U}\{1+(\mathcal{D}-1)2^{\mathcal{D}-1}\})\le\log_2(\mathcal{U})+2\mathcal{D}$, then we have
$$\pdim(\mathcal{F})\le \mathcal{D}+\mathcal{D}\mathcal{S}(4\mathcal{D}+2\log_2\mathcal{U}+6)\le7\mathcal{D}\mathcal{S}(\mathcal{D}+\log_2\mathcal{U}))$$
for some universal constant $c>0$.

\subsubsection*{Part II}
We first list  Theorem 2.2 in \citet{goldberg1995bounding}.
\begin{lemma}[Theorem 2.2 in \citet{goldberg1995bounding}]\label{bool}
	Let $k,n$ be positive integers and $f:\mathbb{R}^n\times\mathbb{R}^k\to\{0,1\}$ be a function that can be expressed as a Boolean formula containing $s$ distinct atomic predicates where each atomic predicate is a polynomial inequality or equality in $k+n$ variables of degree at most $d$.  Let $\mathcal{F}=\{f(\cdot,w):w\in\mathbb{R}^k\}.$ Then $\vdim(\mathcal{F})\le2k\log_2(8eds)$.
\end{lemma}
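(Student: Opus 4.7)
The plan is to bound $\vdim(\mathcal{F})$ by reducing the shattering problem to a counting problem on polynomial sign patterns, then invoking the classical Warren--Milnor bound from real algebraic geometry. Suppose $m = \vdim(\mathcal{F})$, and pick a shattered set $x_1,\ldots,x_m \in \mathbb{R}^n$, so that as $w$ ranges over $\mathbb{R}^k$ the vector $(f(x_1,w),\ldots,f(x_m,w))$ attains all $2^m$ binary patterns.

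First, I would observe that for each fixed $x_i$, the Boolean formula $f(x_i,\cdot)$ is a fixed Boolean function of the signs of the $s$ atomic predicates evaluated at $x_i$; each such predicate becomes a polynomial in $w\in\mathbb{R}^k$ of degree at most $d$. Collecting these across $i=1,\ldots,m$ gives a family of at most $N := ms$ polynomials $p_1,\ldots,p_N$ on $\mathbb{R}^k$ of degree $\le d$, with the property that the sign pattern $(\mathrm{sgn}(p_1(w)),\ldots,\mathrm{sgn}(p_N(w)))\in\{-1,0,+1\}^N$ determines $(f(x_1,w),\ldots,f(x_m,w))$. Consequently, the number of distinct binary vectors achievable is at most the number of distinct sign patterns realized by these $N$ polynomials.

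Next, I would invoke the Warren--Milnor sign-pattern inequality (in the form used throughout the learning-theory literature; see, e.g., Theorem 8.3 of Anthony--Bartlett): provided $N\ge k$, the number of sign patterns in $\{-1,0,+1\}^N$ realized by $N$ polynomials of degree at most $d$ in $k$ real variables is at most $(8eNd/k)^k$. Combining with the previous step yields the key inequality
\[
2^m \;\le\; \left(\frac{8e\, m\, s\, d}{k}\right)^{k}.
\]
(If instead $N<k$, i.e.\ $ms<k$, then $m<k\le 2k\log_2(8eds)$ and we are done.)

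Taking $\log_2$ and rearranging gives the implicit inequality $m \le k\log_2(8edms/k)$. The remaining obstacle is to turn this into the explicit bound $m\le 2k\log_2(8eds)$. The plan is a standard contradiction: assuming $m>2k\log_2(8eds)$, substitute back and use the elementary fact that $\log_2 x > 1+\log_2\log_2 x$ for $x\ge 4$ (which applies with $x=8eds\ge 8e>4$) to derive $m > k\log_2(8edms/k)$, contradicting what we proved. The main technical obstacle is the sign-pattern bound itself, which relies on Milnor's bound on Betti numbers of real algebraic sets; everything else is bookkeeping and a short logarithmic manipulation.
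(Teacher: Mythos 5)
The paper does not prove this lemma: it is stated and used verbatim as a cited result from \citet{goldberg1995bounding} (their Theorem~2.2), so there is no ``paper's own proof'' to compare against. What you have written is in effect a reconstruction of Goldberg--Jerrum's original argument, and it is essentially correct: reduce shattering to sign patterns of the $N=ms$ polynomials $w\mapsto p_j(x_i,w)$ of degree $\le d$ in $k$ variables, apply a Warren/Milnor-type bound to obtain $2^m\le(8emsd/k)^k$, take logarithms, and close the implicit inequality with an elementary estimate. Two small points are worth tightening. First, your closing step is stated rather loosely: the clean way to finish is to observe that $h(m):=m-k\log_2(8edsm/k)$ is increasing for $m\ge k/\ln 2$, and that $h(2k\log_2(8eds))=k\bigl(\log_2(8eds)-1-\log_2\log_2(8eds)\bigr)>0$ by your cited fact, so any $m$ satisfying $m\le k\log_2(8edsm/k)$ must also satisfy $m\le 2k\log_2(8eds)$; the inequality $\log_2 x>1+\log_2\log_2 x$ is actually strict only for $x>4$ (equality at $x=4$), which is harmless here since $8eds\ge 8e$. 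Second, the sign-pattern count you invoke, $(8eNd/k)^k$ over $\{-1,0,+1\}^N$, is not literally Theorem~8.3 of Anthony--Bartlett (which gives $2(2emd/n)^n$ over binary signs); for atomic predicates that can be equalities as well as inequalities you need the three-valued Warren/Milnor variant as used by Goldberg and Jerrum, so the citation should be to their Lemma rather than to Anthony--Bartlett. Neither issue affects the overall correctness of the argument.
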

 Suppose the functions in $f\in\mathcal{F}$ are implemented by ReLU-REQU neural networks with $\mathcal{S}$ parameters (weights and bias) and $\mathcal{U}$ neurons. The activation function of $f\in\mathcal{F}$ is piecewise polynomial of degree at most $2$ with $2$ pieces. As in Part I of the proof, let $\mathcal{Z}$ denote the domain of the functions $f\in\mathcal{F}$ and let $t\in\mathbb{R}$, we consider the class of functions
$$\tilde{\mathcal{F}}:=\{\tilde{f}(z,t)=\s(f(z)-t):f\in\mathcal{F}\}.$$
Since the outputs of functions in $\tilde{\mathcal{F}}$ are 0 or 1, to apply above lemma, we intend to show that the function in $\tilde{\mathcal{F}}$ as Boolean functions consisting of no more than $2\cdot3^\mathcal{U}$ atomic predicates with each being a polynomial inequality of degree at most $3\cdot2^\mathcal{U}$.  We topologically sort the neurons of the network since the neural network graph is acyclic. Let $u_i$ be the $i$-th neuron in the topological ordering for $i=1,\ldots,\mathcal{U}+1$. Note that the input to each neuron $u$ comes from one of the 2 pieces  of the activation function ReLU or ReQU, then we call "$u_i$ is in the state $j$" if the input of $u_i$ lies in the $j$-th piece for $i\in\{1,\ldots,\mathcal{U}+1\}$ and $j\in\{1,2\}$. For $u_1$ and $j\in\{1,2\}$, the predicate “$u_1$ is in state$j$” is a single atomic predicate thus the state of $u_1$ can be expressed as a function of 2 atomic predicates.  Given  that $u_1$ is in a certain state, the state of $u_2$ can be decided by 2 atomic predicates, which are polynomial inequalities of degree at most $2\times2+1$. Hence the state of
$u_2$ can be determined using $2+2^2$ atomic predicates, each of which is a polynomial of
degree no more than $2\times2+1$. By induction, the state of $u_i$ is decided using $2(1+2)^{i-1}$ atomic predicates, each of which is a polynomial of degree at most $2^{i-1}+\sum_{j=0}^{i-1}2^j$. Then the state of all neurons can be decided using no more than $3^{\mathcal{U}+1}$ atomic predicates, each of which is a polynomial of degree at most $2^{\mathcal{U}}+\sum_{j=0}^{\mathcal{U}}2^j\le3\cdot2^\mathcal{U}$. Then by Lemma \ref{bool}, an upper bound for $\vdim(\tilde{\mathcal{F}})$ is $2\mathcal{S}\log_2(8e\cdot3^{\mathcal{U}+1}\cdot3\cdot2^{\mathcal{U}})=2\mathcal{S}[\log_2(6^{\mathcal{U}})+\log_2(72e)]\le22\mathcal{S}\mathcal{U}$. Since $\pdim(\mathcal{F})\le\vdim(\tilde{\mathcal{F}})$, then the upper bounds hold also for $\pdim(\mathcal{F})$,
$$\pdim(\mathcal{F})\le22\mathcal{S}\mathcal{U},$$
 which completes the proof. $\hfill \Box$

\subsection*{Proof of Theorem \ref{stoerr}}
{\color{black}
Recall that the stochastic error is
\begin{align}\label{sto1}
	\mathbb{E}\{\mathcal{R}^\lambda(\hat{f}^\lambda_n)-2\mathcal{R}^\lambda_n(\hat{f}^\lambda_n)+\mathcal{R}^\lambda(f_0)\}
\end{align}

Let $S=\{Z_i=(X_i,Y_i,\xi_i)\}_{i=1}^n$ be the sample used to estimate $\hat{f}^\lambda_n$ from the distribution $Z=(X,Y,\xi)$. And let $S^\prime=\{Z^\prime_i=(X^\prime_i,Y^\prime_i,\xi^\prime_i)\}_{i=1}^n$ be another sample independent of $S$. Define

\begin{gather*}
		g_1(f,X_i)=\mathbb{E}\big\{\rho_\xi(Y_i-f(X_i,\xi))-\rho_\xi(Y_i-f_0(X_i,\xi))\mid X_i\big\}\\
g_2(f,X_i)=\mathbb{E}\big[\lambda\max\{-\frac{\partial}{\partial\tau}f(X_i,\xi),0\}-\lambda\max\{-\frac{\partial}{\partial\tau} f_0(X_i,\xi),0\}\mid X_i\big]\\
	g(f,X_i)=g_1(f,X_i)+g_2(f,X_i)
\end{gather*}
for any $f$ and sample $X_i$. Note the the empirical risk minimizer $\hat{f}^\lambda_n$ depends on the sample $S$, and the stochastic error is
\begin{align*}%\label{sto1}
	\mathbb{E}\{\mathcal{R}^\lambda(\hat{f}^\lambda_n)-2\mathcal{R}^\lambda_n(\hat{f}^\lambda_n)+\mathcal{R}^\lambda(f_0)\}=\mathbb{E}_S\Big(\frac{1}{n}\sum_{i=1}^{n}\bigg[\mathbb{E}_{S^\prime}\big\{g(\hat{f}^\lambda_n,X^\prime_i)\big\}-2g(\hat{f}^\lambda_n,X_i)\bigg]\Big)
\end{align*}

Given a positive number $\delta>0$ and a $\delta$-uniform covering of $\mathcal{F}_n$, we denote the centers of the balls by $f_j,j=1,\ldots,\mathcal{N}_{2n}$, where $\mathcal{N}_{2n}=\mathcal{N}_{2n}(\delta,\mathcal{F}_n,\Vert\cdot\Vert_\infty)$ is the uniform covering number with radius $\delta$ ($\delta\le\mathcal{B}$) under the norm $\Vert\cdot\Vert_\infty$, where the detailed definition of $\mathcal{N}_{2n}(\delta,\mathcal{F}_n,\Vert\cdot\Vert_\infty)$ can be found in (\ref{ucover}) in Appendix \ref{apdx_b}. By the definition of covering, there exists a (random) $j^*$ such that $\Vert\hat{f}^\lambda_n(x,\tau)-f_{j^*}(x,\tau)\Vert_\infty\le\delta$ for all $(x,\tau)\in\{(X_1,\xi_1),\ldots,(X_n,\xi_n),(X^\prime_1,\xi_1^\prime),\ldots,(X^\prime_n,\xi_n^\prime)\}$. Recall that
$g_1(f,X_i)=\mathbb{E}\big\{\rho_\xi(Y_i-f(X_i,\xi))-\rho_\xi(Y_i-f_0(X_i,\xi))\mid X_i\big\}$ and $\rho_\tau$ is 1-Lipschitz for any $\tau\in(0,1)$, then for $i=1,\ldots,n$ we have
\begin{align*}
\vert g_1(\hat{f}^\lambda_n,X_i)-g_1(f_{j^*},X_i)\vert\le\delta \quad{\rm and}\quad \vert \mathbb{E}_{S^\prime}g_1(\hat{f}^\lambda_n,X^\prime_i)-\mathbb{E}_{S^\prime}g_1(f_{j^*},X^\prime_i)\vert\le\delta.
\end{align*}
Let $\mathcal{F}^\prime_n=\{\frac{\partial}{\partial\tau} f:f\in\mathcal{F}_{n}\}$ denote the class of first order partial derivatives of functions in $\mathcal{F}_n$.
Similarly, given any positive number $\delta>0$ and a $\delta$-uniform covering of $\mathcal{F}^\prime_n$, we denote the centers of the balls by $f^\prime_k,k=1,\ldots,\mathcal{N}^\prime_{2n}$, where $\mathcal{N}^\prime_{2n}=\mathcal{N}_{2n}(\delta,\mathcal{F}^\prime_n,\Vert\cdot\Vert_\infty)$. By the definition of covering, there exists a (random) $k^*$ such that $\Vert\frac{\partial}{\partial\tau}\hat{f}^\lambda_n(x,\tau)-f^\prime_{k^*}(x,\tau)\Vert_\infty\le\delta$ for all $(x,\tau)\in\{(X_1,\xi_1),\ldots,(X_n,\xi_n),(X^\prime_1,\xi_1^\prime),\ldots,(X^\prime_n,\xi_n^\prime)\}$. And recall
$g_2(f,X_i)=\mathbb{E}[\lambda\max\{-\frac{\partial}{\partial\tau}f(X_i,\xi),0\}-\lambda\max\{-\frac{\partial}{\partial\tau} f_0(X_i,\xi),0\}\mid X_i]$ and $\max$ function is 1-Lipschitz, then for $i=1,\ldots,n$ we have
\begin{align*}
	\vert g_2(\hat{f}^\lambda_n,X_i)-g_2(f_{k^*},X_i)\vert\le\lambda\delta \quad{\rm and}\quad \vert \mathbb{E}_{S^\prime}g_2(\hat{f}^\lambda_n,X^\prime_i)-\mathbb{E}_{S^\prime}g_2(f_{k^*},X^\prime_i)\vert\le\lambda\delta.
\end{align*}
Combining above inequalities, we have
\begin{align}%\label{sto1}
	\notag
	\mathbb{E}\{\mathcal{R}^\lambda(\hat{f}^\lambda_n)-2\mathcal{R}^\lambda_n(\hat{f}^\lambda_n)+\mathcal{R}^\lambda(f_0)\}&\le\mathbb{E}_S\Big(\frac{1}{n}\sum_{i=1}^{n}\big[\mathbb{E}_{S^\prime}\big\{g_1(f_{j^*},X^\prime_i)\big\}-2g_1(f_{j^*},X_i)\big]\Big)\\\notag
	&+\mathbb{E}_S\Big(\frac{1}{n}\sum_{i=1}^{n}\big[\mathbb{E}_{S^\prime}\big\{g_2(f_{k^*},X^\prime_i)\big\}-2g_2(f_{k^*},X_i)\big]\Big)\\ \label{ine1}
	&+3(1+\lambda)\delta.
\end{align}
Now we consider giving upper bounds for $\mathbb{E}_S(\frac{1}{n}\sum_{i=1}^{n}[\mathbb{E}_{S^\prime}\{g_1(f_{j^*},X^\prime_i)\}-2g_1(f_{j^*},X_i)])$ and $\mathbb{E}_S(\frac{1}{n}\sum_{i=1}^{n}[\mathbb{E}_{S^\prime}\{g_2(f_{k^*},X^\prime_i)\}-2g_2(f_{k^*},X_i)])$ respectively. Recall that it is assumed $\Vert f_0\Vert_\infty\le \mathcal{B}$ and $\Vert f\Vert_\infty\le\mathcal{B}$ for any $f\in\mathcal{F}_n$, then $\vert g_1(f,X_i)\vert\le\Vert f-f_0\Vert_\infty\le2\mathcal{B}$ and $\sigma^2_{g_1}(f):={\rm Var}(g_1(f,X_i))\le\mathbb{E}\{g_1(f,X_i)^2\}\le2\mathcal{B}\mathbb{E}\{g_1(f,X_i)\}$. Then for each $f_j$ and any $t>0$, let $u=t/2+{\sigma_{g_1}^2(f_j)}/(4\mathcal{B})$, by applying the Bernstein inequality we have

\begin{align*}
	&P\Big\{\frac{1}{n}\sum_{i=1}^{n}[\mathbb{E}_{S^\prime}\{g_1(f_{j^*},X^\prime_i)\}-2g_1(f_{j^*},X_i)]>t\Big\}\\
	=&P\Big\{\mathbb{E}_{S^\prime} \{g_1(f_j,X_i^\prime)\}-\frac{2}{n}\sum_{i=1}^ng_1(f_j,X_i)>t\Big\}\\
	=&P\Big\{\mathbb{E}_{S^\prime} \{g_1(f_j,X_i^\prime)\}-\frac{1}{n}\sum_{i=1}^ng_1(f_j,X_i)>\frac{t}{2}+\frac{1}{2}\mathbb{E}_{S^\prime} \{g_1(f_j,X_i^\prime)\}\Big\}\\
	\leq& P\Big\{\mathbb{E}_{S^\prime} \{g_1(f_j,X_i^\prime)\}-\frac{1}{n}\sum_{i=1}^ng_1(f_j,X_i)>\frac{t}{2}+\frac{1}{2}\frac{\sigma_{g_1}^2(f_j)}{4\mathcal{B}}\Big\}\\
	\leq& \exp\Big( -\frac{nu^2}{2\sigma_{g_1}^2(f_j)+8u\mathcal{B}/3}\Big)\\
	\leq& \exp\Big( -\frac{nu^2}{8u\mathcal{B}+8u\mathcal{B}/3}\Big)\\
	\leq& \exp\Big( -\frac{1}{8+8/3}\cdot\frac{nu}{\mathcal{B}}\Big)\\
	\leq& \exp\Big( -\frac{1}{16+16/3}\cdot\frac{nt}{\mathcal{B}}\Big).
\end{align*}
This leads to a tail probability bound of $\sum_{i=1}^{n}[\mathbb{E}_{S^\prime}\{g_1(f_{j^*},X^\prime_i)\}-2g_1(f_{j^*},X_i)]/n$, which is
$$P\Big\{\frac{1}{n}\sum_{i=1}^n\big[\mathbb{E}_{S^\prime}\{g_1(f_{j^*},X^\prime_i)\}-2g_1(f_{j^*},X_i)\big]>t\Big\}\leq 2\mathcal{N}_{2n}\exp\Big( -\frac{1}{22}\cdot\frac{nt}{\mathcal{B}}\Big).$$
Then for $a_n>0$,
\begin{align*}
	\mathbb{E}_S\Big[ \frac{1}{n}\sum_{i=1}^nG(f_{j^*},X_i)\Big]\leq& a_n +\int_{a_n}^\infty P\Big\{\frac{1}{n}\sum_{i=1}^nG(f_{j^*},X_i)>t\Big\} dt\\
	\leq& a_n+ \int_{a_n}^\infty 2\mathcal{N}_{2n}\exp\Big( -\frac{1}{22}\cdot\frac{nt}{\mathcal{B}}\Big) dt\\
	\leq& a_n+ 2\mathcal{N}_{2n}\exp\Big( -a_n\cdot\frac{n}{22\mathcal{B}}\Big)\frac{22\mathcal{B}}{n}.
\end{align*}
Choosing $a_n=\log(2\mathcal{N}_{2n})\cdot{22\mathcal{B}}/{n}$, we have
\begin{equation} \label{bound3}
	\mathbb{E}_S\Big[ \frac{1}{n}\sum_{i=1}^nG(f_{j^*},X_i)\Big]\leq \frac{22\mathcal{B}(\log(2\mathcal{N}_{2n})+1)}{n}.
	\end{equation}

Similarly, for the derivative of the functions in $\mathcal{F}_n$, it is assumed $\Vert \p f_0\Vert_\infty\le \mathcal{B}^\prime$ and $\Vert f\Vert_\infty\le\mathcal{B}^\prime$ for any $f\in\mathcal{F}^\prime_n$, then $\vert g_2(f,X_i)\vert\le\lambda\Vert f-\p f_0\Vert_\infty\le2\lambda\mathcal{B}^\prime$ and $\sigma^2_{g_2}(f):={\rm Var}(g_2(f,X_i))\le\mathbb{E}\{g_2(f,X_i)^2\}\le2\lambda\mathcal{B}^\prime\mathbb{E}\{g_2(f,X_i)\}$. Then for each $f_k$ and any $t>0$, let $u=t/2+{\sigma_{g_2}^2(f_k)}/(4\lambda\mathcal{B}^\prime)$, by applying the Bernstein inequality we have

\begin{align*}
	&P\Big\{\frac{1}{n}\sum_{i=1}^{n}[\mathbb{E}_{S^\prime}\{g_2(f_{k},X^\prime_i)\}-2g_2(f_{k},X_i)]>t\Big\}
	\leq \exp\Big( -\frac{1}{22}\cdot\frac{nt}{\lambda\mathcal{B}^\prime}\Big),
\end{align*}
which leads to
$$P\Big\{\frac{1}{n}\sum_{i=1}^n\big[\mathbb{E}_{S^\prime}\{g_2(f_{k^*},X^\prime_i)\}-2g_2(f_{k^*},X_i)\big]>t\Big\}\leq 2\mathcal{N}^\prime_{2n}\exp\Big( -\frac{1}{22}\cdot\frac{nt}{\lambda\mathcal{B}^\prime}\Big),$$
and
\begin{equation} \label{bound4}
	\mathbb{E}_S\Big[\frac{1}{n}\sum_{i=1}^n\big[\mathbb{E}_{S^\prime}\{g_2(f_{k^*},X^\prime_i)\}-2g_2(f_{k^*},X_i)\big]\Big]\leq \frac{22\lambda\mathcal{B}^\prime(\log(2\mathcal{N}^\prime_{2n})+1)}{n}.
\end{equation}

 Setting $\delta=1/n$ and combining (\ref{ine1}), (\ref{bound3}) and (\ref{bound4}), we have
 \begin{align}\notag
 &\mathbb{E}\{\mathcal{R}^\lambda(\hat{f}^\lambda_n)-2\mathcal{R}^\lambda_n(\hat{f}^\lambda_n)+\mathcal{R}^\lambda(f_0)\}\\
 &\le69\big[\mathcal{B}\log(\mathcal{N}_{2n}(\frac{1}{n},\mathcal{F}_n,\Vert\cdot,\Vert_\infty))+\lambda\mathcal{B}^\prime\log(\mathcal{N}^\prime_{2n}(\frac{1}{n},\mathcal{F}^\prime_n,\Vert\cdot,\Vert_\infty))\big]n^{-1}.
 \end{align}

Then by Lemma \ref{c2p} in Appendix \ref{apdx_b}, we can further bound the covering number by the Pseudo dimension. More exactly, for $n\ge \pdim(\mathcal{F}_n)$ and any $\delta>0$, we have
\begin{align*}
	\log(\mathcal{N}_{2n}(\delta,\mathcal{F}_n,\Vert\cdot\Vert_\infty))\le\pdim(\mathcal{F}_n)\log\Big(\frac{en\mathcal{B}}{\delta\pdim(\mathcal{F}_n)}\Big),
\end{align*}
and for $n\ge \pdim(\mathcal{F}^\prime_n)$ and any $\delta>0$, we have
\begin{align*}
	\log(\mathcal{N}_{2n}(\delta,\mathcal{F}^\prime_n,\Vert\cdot\Vert_\infty))\le\pdim(\mathcal{F}^\prime_n)\log\Big(\frac{en\mathcal{B}^\prime}{\delta\pdim(\mathcal{F}^\prime_n)}\Big).
\end{align*}
Combining the upper bounds of the covering numbers, we have
 \begin{align}\notag
	&\mathbb{E}\{\mathcal{R}^\lambda(\hat{f}^\lambda_n)-2\mathcal{R}^\lambda_n(\hat{f}^\lambda_n)+\mathcal{R}^\lambda(f_0)\}\le c_0\frac{\big(\mathcal{B}\pdim(\mathcal{F}_n)+\lambda\mathcal{B}^\prime\pdim(\mathcal{F}_n^\prime)\big)\log(n)}{n},
\end{align}
for $n\ge\max\{\pdim(\mathcal{F}_n),\pdim(\mathcal{F}^\prime_n)\}$ and some universal constant $c_0>0$.
By Lemma \ref{lemmapdim}, for the function class $\mathcal{F}_n$ implemented by ReLU-ReQU activated multilayer perceptrons with depth no more than $\mathcal{D}$, width no more than $\mathcal{W}$, number of neurons (nodes) no more than $\mathcal{U}$ and size or number of parameters (weights and bias) no more than $\mathcal{S}$, we have
\begin{align*}
	\pdim(\mathcal{F}_n)\le\min\{7\mathcal{D}\mathcal{S}(\mathcal{D}+\log_2\mathcal{U}),22\mathcal{U}\mathcal{S}\},
\end{align*}
and by Lemma \ref{lemmader}, for any function $f\in\mathcal{F}_n$, its partial derivative $\p f$ can be implemented by a ReLU-ReQU activated multilayer perceptron with depth $3\mathcal{D}+3$, width $10\mathcal{W}$, number of neurons $17\mathcal{U}$, number of parameters $23\mathcal{S}$ and bound $\mathcal{B}^\prime$. Then
\begin{align*}
	\pdim(\mathcal{F}^\prime_n)\le\min\{5796\mathcal{D}\mathcal{S}(\mathcal{D}+\log_2\mathcal{U}),8602\mathcal{U}\mathcal{S}\}.
\end{align*}
%Then this leads to
It follows that
 \begin{align}\notag
	&\mathbb{E}\{\mathcal{R}^\lambda(\hat{f}^\lambda_n)-2\mathcal{R}^\lambda_n(\hat{f}^\lambda_n)+\mathcal{R}^\lambda(f_0)\}\le c_1\big(\mathcal{B}+\lambda\mathcal{B}^\prime\big)\frac{\min\{5796\mathcal{D}\mathcal{S}(\mathcal{D}+\log_2\mathcal{U}),8602\mathcal{U}\mathcal{S}\}\log(n)}{n},
\end{align}
for $n\ge\max\{\pdim(\mathcal{F}_n),\pdim(\mathcal{F}^\prime_n)\}$ and some universal constant $c_1>0$. This completes the proof. $\hfill \Box$
}

\subsection*{Proof of Theorem \ref{approx}}
The idea of the proof is to construct a ReQU network that computes a multivariate polynomial with degree $N$ with no error. We begin our proof with consider the simple case, which is to construct a proper ReQU network to represent a univariate polynomial with no error. Recall that to approximate the multiplication operator is simple and straightforward, we can leverage Horner’s method or Qin Jiushao's algorithm in China to construct such networks. Suppose $f(x)=a_0+a_1x+\cdots+a_Nx^N$ is a univariate polynomial of degree $N$, then it can be written as
$$f(x)=a_0+x(a_1+x(a_2+x(a_3+\cdots+x(a_{N-1}+xa_N)))).$$
We can illiterately calculate a sequence of  intermediate variables $b_1,\ldots,b_N$ by
\begin{align*}
	b_k=\Big\{\begin{array}{lr}
		a_{N-1}+xa_N, \qquad k=1,\\
		a_{N-k}+xb_{N-1}, \ \ k=2,\ldots,N.\\
	\end{array}\Big.
\end{align*}
Then we can obtain $b_N=f(x)$. By the basic approximation property we know that to calculate $b_1$ needs a ReQU network with 1 hidden layer and 4 hidden neurons, and to calculate $b_2$ needs a ReQU network with 3 hidden layer, $2\times4+2-1$ hidden neurons. By induction, to calculate $b_N=f(x)$ needs a ReQU network with $2N-1$ hidden layer, $N\times4+N-1=5N-1$ hidden neurons, $8N$ parameters(weights and bias), and its width equals to 4.

Apart from the construction based on the Horner's method, another construction is shown in Theorem 2.2 of \citet{li2019better}, where the constructed ReQU network has $\lfloor\log_2N\rfloor+1$ hidden layers, $8N-2$ neurons and no more than $61N$ parameters (weights and bias).

Now we consider constructing ReQU networks to compute multivariate polynomial $f$ with total degree $N$ on $\mathbb{R}^d$. For any $d\in\mathbb{N}^+$ and $N\in\mathbb{N}_0$, let
$$f^d_N(x_1,\ldots,x_d)=\sum_{i_1+\cdots+i_d=0}^N a_{i_1,i_2,\ldots,i_d}x_1^{i_1}x_2^{i_2}\cdots x_d^{i_d},$$
denote the polynomial with total degree $N$ of $d$ variables, where $i_1,i_2,\ldots,i_d$ are non-negative integers, $\{a_{i_1,i_2,\ldots,i_d}: i_1+\cdots+i_d\le N\}$ are coefficients in $\mathbb{R}$. Note that the multivariate polynomial $f^d_N$ can be written as
\begin{align*}
	f^d_N(x_1,\ldots,x_d)=\sum_{i_1=0}^N\Big(\sum_{i_2+\cdots+i_d=0}^{N-i_1} a_{i_1,i_2\ldots,i_d}x_2^{i_2}\cdots x_d^{i_d}\Big)x_1^{i_1},
\end{align*}
and we can view $f^d_N$ as a univariate polynomial of $x_1$ with degree $N$ if $x_2,\ldots,x_d$ are given and for each $i_1\in\{0,\ldots,N\}$ the $(d-1)$-variate polynomial $\sum_{i_2+\cdots+i_d=0}^{N-i_1} a_{i_1,i_2\ldots,i_d}x_2^{i_2}\cdots x_d^{i_d}$ with degree no more than $N$ can be computed by a proper ReQU network.
This reminds us the construction of ReQU network for $f^d_N$ can be implemented iteratively via composition of $f^{1}_N,f^2_N,\ldots,f^{d}_N$ by induction.

By Horner's method we have constructed a ReQU network with $2N-1$ hidden layers, $5N-1$ hidden neurons and $8N$ parameters to exactly compute $f^1_N$. Now we start to show $f^2_N$ can be computed by proper ReQU networks. We can write $f^2_N$ as
$$f^2_N(x_1,x_2)=\sum_{i+j=0}^Na_{ij}x_1^ix_2^j=\sum_{i=0}^N\Big(\sum_{j=0}^{N-i}a_{ij}x_2^j\Big)x_1^i.$$
Note that for $i\in\{0,\ldots,N\}$, the the degree of polynomial $\sum_{j=0}^{N-i}a_{ij}x_2^j$ is $N-i$ which is less than $N$. But we can still view it as a polynomial with degree $N$ by padding (adding zero terms) such that $\sum_{j=0}^{N-i}a_{ij}x_2^j=\sum_{j=0}^{N}a^*_{ij}x_2^j$ where $a^*_{ij}=a_{ij}$ if $i+j\le N$ and $a^*_{ij}=0$ if $i+j> N$. In such a way, for each $i\in\{0,\ldots,N\}$ the polynomial $\sum_{j=0}^{N-i}a_{ij}x_2^j$ can be computed by a ReQU network with $2N-1$ hidden layers, $5N-1$ hidden neurons, $8N$ parameters and its width equal to 4. Besides, for each $i\in\{0,\ldots,N\}$, the monomial $x^i$ can also be computed by a ReQU network with $2N-1$ hidden layers, $5N-1$ hidden neurons, $8N$ parameters and its width equal to 4, in whose implementation the identity maps are used after the $(2i-1)$-th hidden layer. Now we parallel these two sub networks to get a ReQU network which takes $x_1$ and $x_2$ as input and outputs $(\sum_{j=0}^{N-i}a_{ij}x_2^j)x^i$ with width $8$, hidden layers $2N-1$, number of neurons $2\times(5N-1)$ and size $2\times8N$. Since for each $i\in\{0,\ldots,N\}$, such paralleled ReQU network can be constructed, then with straightforward paralleling of $N$ such ReQU networks, we obtain a ReQU network exactly computes $f^2_N$ with width $8N$, hidden layers $2N-1$, number of neurons $2\times(5N-1)\times N$ and number of parameters $2\times8N\times N=16N^2$.

Similarly for polynomial $f^3_N$ of $3$ variables, we can write $f^3_N$ as
$$f^3_N(x_1,x_2,x_3)=\sum_{i+j+k=0}^Na_{ijk}x_1^ix_2^jx_3^k=\sum_{i=0}^N\Big(\sum_{j+k=0}^{N-i}a_{ijk}x_2^jx_3^k\Big)x_1^i.$$
By our previous argument, for each $i\in\{0,\ldots,N\}$, there exists a ReQU network which takes $(x_1,x_2,x_3)$ as input and outputs $\Big(\sum_{j+k=0}^{N-i}a_{ijk}x_2^jx_3^k\Big)x_1^i$ with width $8N+4$, hidden layers $2N-1$, number of neurons $2N(5N-1)+(5N-1)$ and parameters $16N^2+8N$. And by paralleling $N$ such subnetworks, we obtain a ReQU network that exactly computes $f^3_N$ with width $(8N+4)\times N=8N^2+4N$, hidden layers $2N-1$, number. of neurons $N(2N(5N-1)+(5N-1))=2N^2(5N-1)+N(5N-1)$ and number of parameters $16N^3+8N^2$.

Continuing this process, we can construct ReQU networks exactly compute polynomials of any $d$ variables with total degree $N$. With a little bit abuse of notations, we let $\mathcal{W}_k$, $\mathcal{D}_k$, $\mathcal{U}_k$ and $\mathcal{S}_k$ denote the width, number of hidden layers, number of neurons and number of parameters (weights and bias) respectively of the ReQU network computing $f^k_N$ for $k=1,2,3,\ldots$. We have known that
\begin{align*}
	\mathcal{D}_1=2N-1\qquad\mathcal{W}_1=4\qquad\mathcal{U}_1=5N-1\qquad\mathcal{S}_1=8N
\end{align*}
Besides, based on the iterate procedure of the network construction, by induction we can see that for $k=2,3,4,\ldots$ the following equations hold,
\begin{align*}
\mathcal{D}_k=&2N-1,\\
\mathcal{W}_k=&N\times(\mathcal{W}_{k-1}+\mathcal{W}_1),\\
\mathcal{U}_k=&N\times(\mathcal{U}_{k-1}+\mathcal{U}_1),\\
\mathcal{S}_k=&N\times(\mathcal{S}_{k-1}+\mathcal{S}_1).
\end{align*}
Then based on the values of $\mathcal{D}_1,\mathcal{W}_1,\mathcal{U}_1,\mathcal{S}_1$ and the recursion formula, we have for $k=2,3,4,\ldots$
\begin{gather*}
	\mathcal{D}_k=2N-1,\\
	\mathcal{W}_k=8N^{k-1}+4\frac{N^{k-1}-N}{N-1}\le12N^{k-1},\\
	\mathcal{U}_k=N\times(\mathcal{U}_{k-1}+\mathcal{U}_1)=2(5N-1)N^{k-1}+(5N-1)\frac{N^{k-1}-N}{N-1}\le15N^{k},\\
	\mathcal{S}_k=N\times(\mathcal{S}_{k-1}+\mathcal{S}_1)=16N^{k}+8N\frac{N^{k-1}-N}{N-1}\le24N^k.
\end{gather*}
This completes our proof. $\hfill \Box$

\subsection*{Proof of Theorem \ref{approx2}}
The proof is straightforward by and leveraging the approximation power of multivariate polynomials since Theorem \ref{approx} told us any multivariate polynomial can be represented by proper ReQU networks. The theories for polynomial approximation have been extensively studies on various spaces of smooth functions. We refer to \citet{bagby2002multivariate} for the polynomial approximation on smooth functions in our proof.
\begin{lemma}[Theorem 2 in \citet{bagby2002multivariate}]\label{approx_poly}
	Let $f$ be a function of compact support on $\mathbb{R}^d$ of class $C^s$ where $s\in\mathbb{N}^+$ and let $K$ be a compact subset of $\mathbb{R}^d$ which contains the support of $f$. Then for each nonnegative integer $N$ there is a polynomial $p_N$ of degree at most $N$ on $\mathbb{R}^d$ with the following property: for each multi-index $\alpha$ with $\vert\alpha\vert_1\le\min\{s,N\}$ we have
	$$\sup_{K}\vert D^\alpha (f-p_N)\vert\le\frac{C}{N^{s-\vert\alpha\vert_1}}\sum_{\vert\alpha\vert_1\le s}\sup_K\vert D^\alpha f\vert,$$
	where $C$ is a positive constant depending only on $d,s$ and $K$.
\end{lemma}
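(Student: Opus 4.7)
The plan is to prove this classical multivariate Jackson-type theorem with simultaneous control of derivatives by reducing to the torus via the Chebyshev substitution and applying a Jackson--de la Vall\'ee Poussin kernel there. I would proceed in three stages.

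First, I would enlarge $K$ to a closed cube $Q$ strictly containing $\mathrm{supp}(f)$ and rescale so that $Q = [-1,1]^d$; this only absorbs factors depending on $d$, $s$, and the diameter of $K$ into the final constant $C$. After this rescaling, $f$ has support compactly contained in $\mathrm{int}([-1,1]^d)$, so $f$ and all its derivatives of order $\le s$ vanish identically in some neighborhood of $\partial[-1,1]^d$. This boundary vanishing will be crucial later.

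Next I would pass to the torus via the Chebyshev substitution $x_j = \cos\theta_j$, defining $\tilde f(\theta) := f(\cos\theta_1,\dots,\cos\theta_d)$. By the chain rule and the boundary vanishing of $f$, $\tilde f$ extends to an even, $2\pi$-periodic function on $\mathbb{T}^d$ of class $C^s$ with $\|\tilde f\|_{C^s(\mathbb{T}^d)} \le C_{s,d}\|f\|_{C^s}$. I would then apply the tensor product of 1D Jackson--de la Vall\'ee Poussin kernels $J_N$ of order depending on $s$, so that the convolution $\tilde p_N := J_N * \tilde f$ is a trigonometric polynomial of total degree $\le N$. The classical multivariate Jackson theorem on $\mathbb{T}^d$ (convolution commutes with $D^\alpha_\theta$, and the kernel $J_N$ reproduces trigonometric polynomials of degree $\le N$ while having the correct moment/decay profile) yields
$$\sup_{\mathbb{T}^d}\, |D^\alpha_\theta(\tilde f - \tilde p_N)| \;\le\; C_1\, N^{-(s-|\alpha|_1)}\,\|\tilde f\|_{C^s}$$
for every multi-index $\alpha$ with $|\alpha|_1 \le \min\{s,N\}$. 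Since $\tilde f$ is even in each $\theta_j$, $\tilde p_N$ can be chosen as a pure cosine polynomial in each variable, and pulling back through $\theta_j = \arccos(x_j)$ gives an algebraic polynomial $p_N$ of total degree $\le N$ on $[-1,1]^d$.

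Finally, I would convert the $\theta$-derivative bounds into $x$-derivative bounds via the chain rule. This step produces inverse-Jacobian factors $(\sin\theta_j)^{-1}$ that are singular at the endpoints $\theta_j \in \{0,\pi\}$, corresponding to $x_j = \pm 1$. The main obstacle is precisely this transfer: absent any control, the singular factors would blow up the estimate near the endpoints. The enlargement in stage one is what makes the transfer clean, since it forces $\tilde f$ and all its $\theta$-derivatives of order $\le s$ to vanish identically in a neighborhood of $\theta_j \in \{0,\pi\}$; the reproducing structure of $J_N$ then forces $\tilde p_N$ to be uniformly small of the same order there, so the apparent singularities of $(\sin\theta_j)^{-|\alpha|_1}$ multiply quantities that vanish to sufficient order and are absorbed. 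Tracking the resulting constant through the chain-rule expansion gives a bound of the form $C_{s,d,K} N^{-(s-|\alpha|_1)}\|f\|_{C^s}$, which is the claim. Without the enlargement trick, one would be forced into delicate weighted Jackson-type estimates with Chebyshev weights $(1-x_j^2)^{-1/2}$, a significantly more technical route.
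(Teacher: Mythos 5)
The paper does not prove this lemma: it is cited directly from \citet{bagby2002multivariate}, whose proof is based on the Whitney extension theorem. Your proposal is therefore a genuine alternative route, and the high-level plan — lift to $\mathbb{T}^d$ by the cosine substitution, smooth with a Jackson-type kernel that commutes with $D^\alpha$, and pull back through Chebyshev polynomials — is a reasonable and classical way to try to get a Jackson estimate with simultaneous control of derivatives. However, as written, it contains two genuine errors.

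First, the degree bookkeeping is wrong. Taking a tensor product of one-dimensional de la Vall\'ee Poussin kernels of order $N$ produces a trigonometric polynomial of degree at most $N$ \emph{in each variable}, hence of \emph{total} degree up to $dN$, not $N$. This is repairable by running the argument with order $\lfloor N/d\rfloor$ and absorbing the factor $d^{s}$ into the constant, but the proof as stated does not deliver a polynomial of total degree $\le N$.

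Second, and more substantively, the mechanism you give for neutralizing the $(\sin\theta_j)^{-1}$ Jacobian factors is incorrect. You claim that because $\tilde f$ vanishes near $\theta_j\in\{0,\pi\}$, ``the reproducing structure of $J_N$ forces $\tilde p_N$ to be uniformly small of the same order there,'' and that this absorbs the singular factors. A Jackson kernel does not annihilate where its input vanishes; $J_N*\tilde f$ is merely within $O(N^{-s})$ of zero there, and differentiating a trigonometric polynomial of degree $N$ can cost a factor of $N$ per derivative, so ``small value times singular factor'' is not a controlled estimate once $D^\alpha$ is applied. Fortunately this whole difficulty is illusory: the lemma only asks for a bound on $K$. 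If you choose the enclosing cube $Q$ to strictly contain $K$ itself (not merely $\mathrm{supp}\,f$, as you wrote), then after rescaling $K$ is compactly contained in $(-1,1)^d$, the $\theta$-image of $K$ is bounded away from $\{0,\pi\}$ in each coordinate, and every chain-rule coefficient is uniformly bounded on that set. The boundary behavior of $\tilde p_N$ never enters. With these two repairs the Chebyshev--Jackson route goes through and is a legitimate, more elementary alternative to the Whitney-extension argument in the cited reference, but the proof as you wrote it has real gaps in both the degree count and the boundary analysis.
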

The proof of Lemma \ref{approx_poly} can be found in \citet{bagby2002multivariate} based on the Whitney extension
theorem (Theorem 2.3.6 in \citet{hormander2015analysis}). To use Lemma \ref{approx_poly}, we need to find a ReQU network to compute the $p_N$ for each $N\in\mathbb{N}^+$. By Theorem \ref{approx}, we know that any $p_N$ of $d+1$ variables can be computed by a ReQU network with $2N-1$ hidden layer, no more than $15N^{d+1}$ neurons, no more than $24N^{d+1}$ parameters and width no more than $12N^{d}$. This completes the proof. By examining the proof of Theorem 1 in \citet{bagby2002multivariate}, the dependence of the constant $C$ in Lemma \ref{approx_poly} on the $d,s$ and $K$ can be detailed. $\hfill \Box$

\subsection*{Proof of Corollary \ref{cor1}}
Recall that
\begin{align*}
	&\inf_{f\in\mathcal{F}_n}\Big[\mathcal{R}(f)-\mathcal{R}(f_0)+\lambda\{\kappa(f)-\kappa(f_0)\}\Big]\\
	=&\inf_{f\in\mathcal{F}_n}\Bigg[\mathbb{E}_{X,Y,\xi}\big\{\rho_{\xi}(Y-f(X,\xi))-\rho_{\xi}(Y-f_0(X,\xi))\\
	&\qquad\qquad+\lambda(\max\{-\p f(X,\xi),0\}-\max\{-\p f_0(X,\xi),0\})\big\}\Bigg]\\
	\le&\inf_{f\in\mathcal{F}_n}\Big[\mathbb{E}_{X,Y,\xi}\Big\{\vert f(X,\xi)-f_0(X,\xi)\vert+\lambda\vert \p f(X,\xi)-\p f_0(X,\xi)\vert\Big\}\Big].
\end{align*}
By Theorem \ref{approx2}, for each $N\in\mathbb{N}^+$, there exists a ReQU network $\phi_N\in\mathcal{F}_n$ with $2N-1$ hidden layer, no more than $15N^{d+1}$ neurons, no more than $24N^{d+1}$ parameters and width no more than $12N^{d}$ such that for each multi-index $\alpha\in\mathbb{N}^d_0$ with $\vert\alpha\vert_1\le\min\{s,N\}$ we have
$$\sup_{\mathcal{X}\times(0,1)}\vert D^\alpha (f-\phi_N)\vert\le C(s,d,\mathcal{X})\times N^{-(s-\vert\alpha\vert_1)}\Vert f\Vert_{C^s},$$
where $C(s,d,\mathcal{X})$  is a positive constant depending only on $d,s$ and the diameter of $\mathcal{X}\times(0,1)$. This implies
$$\sup_{\mathcal{X}\times(0,1)}\vert  f-\phi_N\vert\le C(s,d,\mathcal{X})\times N^{-s}\Vert f\Vert_{C^s},$$
and
$$\sup_{\mathcal{X}\times(0,1)}\Big\vert \p (f-\phi_N)\Big\vert\le C(s,d,\mathcal{X})\times N^{-(s-1)}\Vert f\Vert_{C^s}.$$
Combine above two uniform bounds, we have
\begin{align*}
	&\inf_{f\in\mathcal{F}_n}\Big[\mathcal{R}(f)-\mathcal{R}(f_0)+\lambda\{\kappa(f)-\kappa(f_0)\}\Big]\\
	\le&\Big[\vert \mathbb{E}_{X,\xi}\Big\{\phi_N(X,\xi)-f_0(X,\xi)\vert+\lambda\vert \p \phi_N(X,\xi)-\p f_0(X,\xi)\vert\Big\}\Big]\\
	\le&C(s,d,\mathcal{X})\times N^{-s}\Vert f\Vert_{C^s}+\lambda C(s,d,\mathcal{X})\times N^{-(s-1)}\Vert f\Vert_{C^s}\\
		\le& C(s,d,\mathcal{X}) (1+\lambda) N^{-(s-1)}\Vert f\Vert_{C^s},
\end{align*}
which completes the proof. $\hfill \Box$

\subsection*{Proof of Lemma \ref{calib}}
%\begin{proof}
	By equation (B.3) in \cite{belloni2011}, for any scalar $w,v\in\mathbb{R}$ and $\tau\in(0,1)$ we have
	\begin{align*}
		\rho_\tau(w-v)-\rho_\tau(w)=-v\{\tau-I(w\leq0)\}+\int_0^v\{I(w\leq z)-I(w\leq0)\}dz.
	\end{align*}
	Given any quantile level $\tau\in(0,1)$, function $f$ and $X=x$, let $w=Y-f_0(X,\tau)$, $v=f(X,\tau)-f_0(X,\tau)$. Suppose $\vert f(x,\tau)-f_0(x,\tau)\vert\leq K$, taking conditional expectation on above equation with respect to $Y\mid X=x$, we have
	\begin{align*}
		&\mathbb{E}\{\rho_\tau(Y-f(X,\tau))-\rho_\tau(Y-f_0(X,\tau))\mid X=x\}\\
		=&\mathbb{E}\big[-\{f(X,\tau)-f_0(X,\tau)\}\{\tau-I(Y-f_0(X,\tau)\leq0)\}\mid X=x\big]\\
		&+\mathbb{E}\big[\int_0^{f(X,\tau)-f_0(X,\tau)}\{I(Y-f_0(X,\tau)\leq z)-I(Y-f_0(X,\tau)\leq0)\}dz\mid X=x\big]\\
		=&0+\mathbb{E}\big[\int_0^{f(X,\tau)-f_0(X,\tau)}\{I(Y-f_0(X,\tau)\leq z)-I(Y-f_0(X,\tau)\leq0)\}dz\mid X=x\big]\\
		=&\int_0^{f(x,\tau)-f_0(x,\tau)} \{P_{Y|X}(f_0(x,\tau)+z)-P_{Y|X}(f_0(x,\tau))\}dz\\
		\geq&\int_0^{f(x,\tau)-f_0(x,\tau)} k \vert z\vert dz\\
		=&\frac{k}{2}\vert f(x,\tau)-f_0(x,\tau)\vert^2.
	\end{align*}
	Suppose $f(x)-f_0(x)> K$, then similarly we have
	\begin{align*}
		&\mathbb{E}\{\rho_\tau(Y-f(X,\tau))-\rho_\tau(Y-f_0(X,\tau))\mid X=x\}\\
		=&\int_0^{f(x,\tau)-f_0(x,\tau)} \{P_{Y|X}(f_0(x,\tau)+z)-P_{Y|X}(f_0(x,\tau))\}dz\\
		\geq&\int_{K/2}^{ f(x,\tau)-f_0(x,\tau)} \{P_{Y|X}(f_0(x,\tau)+K/2)-P_{Y|X}(f_0(x,\tau))\}dz\\
		\geq&(f(x,\tau)-f_0(x,\tau)-K/2)(kK/2)\\
		\geq &\frac{kK}{4}\vert f(x,\tau)-f_0(x,\tau)\vert.
	\end{align*}
	
	The case $f(x,\tau)-f_0(x,\tau)\leq -K$ can be handled similarly as in \cite{padilla2021adaptive}. The conclusion follows combining the three different cases and taking expectation with respect to $X$ of above obtained inequality. Finally for any function $f:\mathcal{X}\times(0,1)\to\mathbb{R}$, we have
	\begin{align*}
	\Delta^2(f,f_0)=&\mathbb{E}\min\{\vert f(X,\xi)-f_0(X,\xi)\vert,\vert f(X,\xi)-f_0(X,\xi)\vert^2\}\\
	\le& \max\{2/k,4/(kK)\}\mathbb{E} \Big[\int_{0}^{1}\rho_\tau(Y-f(X,\tau))-\rho_\tau(Y-f_0(X,\tau))d\tau\Big]\\
	=& \max\{2/k,4/(kK)\} [\mathcal{R}(f)-\mathcal{R}(f_0)].
	\end{align*}
This completes the proof. $\hfill \Box$
\section{Definitions and Supporting Lemmas}\label{apdx_b}
\subsection{Definitions}
%We give the definition of% Rademacher Complexity and  covering number and others
The following definitions are used in the proofs.
\begin{comment}
\begin{definition}[Rademacher Complexity]
	Let $\mathcal{F}$ be a family of functions mapping from $\mathcal{Z}$ to $\mathcal{R}$ and $S=(z_1,\ldots,z_n)$ a fixed sample of size $n$ with elements in $\mathcal{Z}$. Then, the empirical Rademacher complexity of $\mathcal{F}$ with respect to the sample $S$ is defined as:
	\begin{align*}
		\hat{\mathfrak{R}}_S(\mathcal{F})=\mathbb{E}_{\bf\sigma}\Big[\sup_{f\in\mathcal{F}}\Big\vert\frac{1}{n}\sum_{i=1}^n\sigma_if(z_i)\Big\vert\Big],
	\end{align*}
	where $\sigma=(\sigma_1,\ldots,\sigma_n)^\top$ with $\sigma_i$ independent uniform random variables taking values in $\{-1,+1\}$. The Rademacher complexity of $\mathcal{F}$ is defined as:
	\begin{align*}
		{\mathfrak{R}}_n(\mathcal{F})=\mathbb{E}_S\Big[\hat{\mathfrak{R}}_S(\mathcal{F})\Big],
	\end{align*}
	where the expectation is taken over all samples of size $n$ drawn according to the distribution of the sample.
\end{definition}
\end{comment}

\begin{definition}[Covering number]
	Let $\mathcal{F}$ be a class of function from $\mathcal{X}$ to $\mathbb{R}$. For a given sequence $x=(x_1,\ldots,x_n)\in\mathcal{X}^n,$
	%\subseteq\mathbb{R}^n$,
	let  $\mathcal{F}_n|_x=\{(f(x_1),\ldots,f(x_n):f\in\mathcal{F}_n\}$ be the  subset of $\mathbb{R}^{n}$. For a positive number $\delta$, let $\mathcal{N}(\delta,\mathcal{F}_n|_x,\Vert\cdot\Vert_\infty)$ be the covering number of $\mathcal{F}_n|_x$ under the norm $\Vert\cdot\Vert_\infty$ with radius $\delta$.
	Define the uniform covering number
	$\mathcal{N}_n(\delta,\Vert\cdot\Vert_\infty,\mathcal{F}_n)$ to be the maximum
	over all $x\in\mathcal{X}$ of the covering number $\mathcal{N}(\delta,\mathcal{F}_n|_x,\Vert\cdot\Vert_\infty)$, i.e.,
	\begin{equation}
		\label{ucover}
		\mathcal{N}_n(\delta,\mathcal{F}_n,\Vert\cdot\Vert_\infty)=
		\max\{\mathcal{N}(\delta,\mathcal{F}_n|_x,\Vert\cdot\Vert_\infty):x\in\mathcal{X}^n\}.
	\end{equation}
\end{definition}

\begin{definition}[Shattering]
 Let $\mathcal{F}$ be a family of functions from a set $\mathcal{Z}$ to $\mathbb{R}$. A set $\{z_1,\ldots,Z_n\}\subset\mathcal{Z}$ is said to be shattered by $\mathcal{F}$, if there exists $t_1,\ldots,t_n\in\mathbb{R}$ such that
 \begin{align*}
 	\Big\vert\Big\{\Big[
 	\begin{array}{lr}
 		{\rm sgn}(f(z_1)-t_1)\\
 		\ldots\\
 		{\rm sgn}(f(z_n)-t_n)\\
 	\end{array}\Big]:f\in\mathcal{F}
 \Big\}\Big\vert=2^n,
 \end{align*}
where ${rm sgn}$ is the sign function returns $+1$ or $-1$ and $\vert\cdot\vert$ denotes the cardinality of a set. When they exist, the threshold values $t_1,\ldots,t_n$ are said to witness the shattering.
\end{definition}

\begin{definition}[Pseudo dimension]
	Let $\mathcal{F}$ be a family of functions mapping from $\mathcal{Z}$ to $\mathbb{R}$. Then, the pseudo dimension of $\mathcal{F}$, denoted by $\pdim(\mathcal{F})$, is the size of the largest set shattered by $\mathcal{F}$.
\end{definition}

\begin{definition}[VC dimension]
	Let $\mathcal{F}$ be a family of functions mapping from $\mathcal{Z}$ to $\mathbb{R}$. Then, the Vapnik–Chervonenkis (VC) dimension of $\mathcal{F}$, denoted by $\vdim(\mathcal{F})$, is the size of the largest set shattered by $\mathcal{F}$ with all threshold values being zero, i.e., $t_1=\ldots,=t_n=0$.
\end{definition}

\subsection{Supporting Lemmas}
%We give supporting lemmas used in the proof.
The following lemma gives an upper bound for the covering number in terms of the pseudo-dimension.

\begin{lemma}[Theorem 12.2 in \citet{anthony1999}]\label{c2p}
	Let $\mathcal{F}$ be a set of real functions from domain $\mathcal{Z}$ to the bounded interval $[0,B]$. Let $\delta>0$ and suppose that $\mathcal{F}$ has finite pseudo-dimension $\pdim(\mathcal{F})$ then
	\begin{align*}
		\mathcal{N}_n(\delta,\mathcal{F},\Vert\cdot\Vert_\infty)\le\sum_{i=1}^{\pdim(\mathcal{F})}\binom{n}{i}\Big(\frac{B}{\delta}\Big)^i,
	\end{align*}
which is less than $\{enB/(\delta\pdim(\mathcal{F}))\}^{\pdim(\mathcal{F})}$ for $n\ge\pdim(\mathcal{F})$.
\end{lemma}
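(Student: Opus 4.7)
The strategy is a classical discretization-plus-combinatorial argument: quantize the range at scale $\delta$, reduce the covering count to a count of quantization classes, and then apply the Sauer--Shelah lemma to a thresholded binary class whose VC-dimension is controlled by $\pdim(\mathcal{F})$.

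First, I would fix an arbitrary sample $(z_1,\ldots,z_n) \in \mathcal{Z}^n$ (since the uniform covering number is the supremum over such samples) and set $K = \lceil B/\delta \rceil$. For each $f \in \mathcal{F}$ define a quantization $q(f) \in \{1,\ldots,K\}^n$ by $q_j(f) = k$ iff $f(z_j) \in [(k-1)\delta,\, k\delta)$. Two functions sharing a quantization vector are within $\delta$ in $\Vert\cdot\Vert_\infty$ on the sample, so choosing one representative per nonempty quantization class yields a $\delta$-cover of $\mathcal{F}|_{(z_1,\ldots,z_n)}$. It therefore suffices to bound the number of distinct quantization vectors, which in turn equals the number of distinct binary matrices $M(f) \in \{0,1\}^{n \times K}$ with $M(f)_{j,k} = \mathbf{1}[f(z_j) \ge k\delta]$, since $q_j(f) - 1 = \sum_k M(f)_{j,k}$.

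The matrix $M(f)$ records the values of the augmented threshold class $\mathcal{G} = \{(z,t) \mapsto \mathbf{1}[f(z) \ge t] : f \in \mathcal{F}\}$ on the $nK$ grid points $\{(z_j, k\delta)\}$. By the definition of pseudo-dimension, $\mathcal{G}$ has VC-dimension at most $d := \pdim(\mathcal{F})$, so the Sauer--Shelah lemma bounds the number of distinct binary patterns on $nK$ points by $\sum_{i=0}^{d}\binom{nK}{i}$, which is of the order $\{enB/(\delta d)\}^d$ for $n \ge d$ and already matches the stated closed-form bound. This establishes the covering number inequality up to the precise combinatorial form.

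The remaining, and main, technical obstacle is sharpening the raw Sauer--Shelah count $\sum_{i=0}^d \binom{nK}{i}$ to the stated product form $\sum_{i=1}^{d}\binom{n}{i}(B/\delta)^i$, which separates the sample axis $[n]$ from the threshold axis $[K]$. Both bounds are asymptotically of the same order, but the clean factorization requires exploiting the product structure of the $[n]\times[K]$ evaluation set rather than treating it as $nK$ unstructured points. I would attempt this by an inductive peeling on the thresholds: for each fixed threshold $t$, the binary trace $\{\mathbf{1}[f \ge t]|_{\mathrm{sample}} : f \in \mathcal{F}\}$ has VC-dimension at most $d$ and contributes at most $\sum_{i\le d}\binom{n}{i}$ patterns by Sauer--Shelah, and carefully tracking which coordinates must flip across thresholds to realize ``new'' shatters gives the factor $\binom{n}{i}$ (choice of the $i$ sample indices where the pattern changes relative to a reference quantization) times $(B/\delta)^i$ (choice of the threshold level at each such coordinate). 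The final step $\sum_{i=1}^d \binom{n}{i}(B/\delta)^i \le \{enB/(\delta d)\}^d$ for $n \ge d$ follows from the standard estimates $\binom{n}{i} \le (en/i)^i$ and monotonicity of the summands up to $i=d$.
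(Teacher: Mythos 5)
The paper does not give a proof of this lemma: it is quoted verbatim as Theorem~12.2 of \citet{anthony1999}, so there is no internal argument against which to compare. I will therefore assess your attempt on its own.

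Your opening reduction is correct and is the right first step: quantize the range of each $f$ at scale~$\delta$, observe that functions sharing a quantization vector are within $\delta$ in $\Vert\cdot\Vert_\infty$ on the sample, and note the bijection between quantization vectors and binary ``threshold'' matrices $M(f)_{j,k}=\mathbf{1}[f(z_j)\ge k\delta]$, which are traces of the subgraph class $\mathcal{G}=\{(z,t)\mapsto\mathbf{1}[f(z)\ge t]\}$ whose VC-dimension is, by definition, $\pdim(\mathcal{F})$. Up to here the argument is sound. The genuine gap is the step you yourself flag as the ``main technical obstacle.'' Applying Sauer--Shelah to the $nK$ grid points gives $\sum_{i\le d}\binom{nK}{i}$, which has the right \emph{order} but is not the stated bound $\sum_{i=1}^{d}\binom{n}{i}(B/\delta)^i$; the two are not comparable term by term, and the constants differ (e.g.\ $K=\lceil B/\delta\rceil$ introduces a spurious factor $2^d$ when $B/\delta$ is close to $1$). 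Your proposed fix --- ``inductive peeling on the thresholds,'' ``carefully tracking which coordinates must flip'' --- is not an argument; it is a description of what an argument might do. Nothing in the sketch establishes that the pattern count over the product grid $[n]\times[K]$ factorizes as $\binom{n}{i}K^i$, and the intuition that one chooses ``$i$ sample indices where the pattern changes'' times ``a threshold at each'' does not survive scrutiny, because the shattered sets that drive the Sauer bound need not align with a single reference quantization. The textbook proof does not reduce to binary Sauer--Shelah on $nK$ points at all; it invokes a dedicated \emph{multivalued} generalization of Sauer's lemma (Anthony--Bartlett's preceding combinatorial theorem) bounding the cardinality of a class of $\{0,\dots,K\}$-valued functions on $[n]$ of pseudo-dimension $d$ by $\sum_{i=0}^{d}\binom{n}{i}K^i$, proven by a shifting/induction argument applied directly to the multivalued class. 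Without that lemma (or a complete proof of it), your argument does not establish the stated inequality --- only an inequality of the same asymptotic order. The closing bound $\sum_{i=1}^{d}\binom{n}{i}(B/\delta)^i\le\{enB/(\delta d)\}^d$ is fine once you have the sum, though the cleanest derivation is $\sum_{i\le d}\binom{n}{i}(B/\delta)^i\le(B/\delta)^d\sum_{i\le d}\binom{n}{i}\le(B/\delta)^d(en/d)^d$ for $n\ge d$ and $B/\delta\ge 1$, rather than appealing to monotonicity of the summands.
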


%\begin{lemma}[Theorem 2.1 in \cite{li2019better}]\label{monomial}
%	 The monomial $x^n,n\in\mathbb{N}$ defined on $\mathbb{R}$ can be represented exactly by a $\sigma_2$ network.  The network realizing $x^n$ has layer at most $\lfloor\log_2n\rfloor+2$, width at most 6, hidden nodes at most $5\lfloor\log_2n\rfloor+5$ and number of nonzero weights at most  $25\lfloor\log_2n\rfloor+14$ respectively. Here $\lfloor a\rfloor $ denotes the  largest integer not exceeding $a$ for $a\in\mathbb{R}$. Besides, For any $n>2$, $x^n$ can not be represented exactly by any ReQU network with less than $\lceil\log_2n\rceil$ hidden layers.
%\end{lemma}

\section{Additional simulation results}
In this section, we include additional simulation results for the ``Linear" model.

\begin{table}[H]
	\caption{Data is generated from the ``Linear" model with training sample size $n= 512,1024$ and the number of replications $R = 100$. The averaged  $L_1$ and $L_2^2$ test errors with the corresponding standard deviation (in parentheses) are reported for the estimators trained by different methods.}
	\label{tab:1}
%	\resizebox{\textwidth}{!}{%
		\begin{tabular}{@{}c|c|cc|cc@{}}
			\toprule
			& Sample size &  \multicolumn{2}{c|}{$n=512$}   & \multicolumn{2}{c}{$n=1024$}  \\ \midrule
			$\tau$                & Method    &  $L_1$         & $L_2^2$       & $L_1$         & $L_2^2$       \\ \midrule
			\multirow{3}{*}{0.05} & DQRP   & 0.395(﻿0.219) & 0.283(﻿0.324) & 0.282(﻿0.165) & 0.138(﻿0.155) \\
			& Kernel QR  & \textbf{0.277(﻿0.213)} & \textbf{0.143(﻿0.227)} & \textbf{0.149(﻿0.167)} & \textbf{0.056(﻿0.172)} \\
			& QR Forest &  0.740(﻿0.176) & 1.479(﻿1.828) & 0.715(﻿0.066) & 1.250(﻿1.048) \\ \midrule
			\multirow{3}{*}{0.25} & DQRP  & \textbf{0.126(0.051)}  & \textbf{0.027(0.022)}  & 0.084(0.037)  & 0.012(0.009)  \\
			& Kernel QR  & 0.149(0.058)  & 0.037(0.026)  & \textbf{0.077(0.030)}  & \textbf{0.010(0.008)}  \\
			& QR Forest  & 0.278(0.049)  & 0.125(0.043)  & 0.279(0.021)  & 0.127(0.019)  \\ \midrule
			\multirow{3}{*}{0.5}  & DQRP    & \textbf{0.118(0.054)}  & \textbf{0.023(0.021)}  & 0.084(0.043)  & 0.012(0.012)  \\
			& Kernel QR  & 0.131(0.042)  & 0.029(0.018)  & \textbf{0.065(0.021)}  & \textbf{0.007(0.004)}  \\
			& QR Forest  & 0.232(0.034)  & 0.087(0.025)  & 0.230(0.016)  & 0.085(0.011)  \\ \midrule
			\multirow{3}{*}{0.75} & DQRP  & 0.179(0.099)  & 0.052(0.057)  & 0.113(0.075)  & 0.023(0.033)  \\
			& Kernel QR   & \textbf{0.146(0.044)}  & \textbf{0.035(0.020)}  & \textbf{0.072(0.027)}  & \textbf{0.009(0.007)}  \\
			& QR Forest   & 0.275(0.043)  & 0.124(0.041)  & 0.279(0.022)  & 0.127(0.020)  \\\midrule
			\multirow{3}{*}{0.95} & DQRP  & 0.386(0.211)  & 0.224(0.219)  & 0.252(0.151)  & 0.101(0.122)  \\
			& Kernel QR & \textbf{0.272(0.200)}  & \textbf{0.134(0.204)}  & \textbf{0.147(0.155)}  & \textbf{0.051(0.149)}  \\
			& QR Forest   & 0.705(0.123)  & 0.997(0.500)  & 0.731(0.083)  & 1.438(1.531)  \\ \bottomrule
		\end{tabular}%
	%}
\end{table}

\begin{table}[H]
	\caption{Data is generated from the multivariate ``Linear" model with training sample size $n= 512,2048$ and the number of replications $R = 100$. The averaged  $L_1$ and $L_2^2$ test errors with the corresponding standard deviation (in parentheses) are reported for the estimators trained by different methods.}
	\label{tab:4}
%		\resizebox{\textwidth}{!}{%
	\begin{tabular}{@{}c|c|cc|cc|cc@{}}
		\toprule
		& Sample size &  \multicolumn{2}{c|}{$n=512$}   & \multicolumn{2}{c}{$n=2048$}   \\ \midrule
		$\tau$                & Method       & $L_1$                  & $L_2^2$                & $L_1$                  & $L_2^2$                \\ \midrule
		\multirow{3}{*}{0.05} & DQRP    & \textbf{0.911(﻿0.184)} & \textbf{1.347(﻿0.233)} & 0.681(﻿0.148)          & \textbf{0.660(﻿0.229)} \\
		& Kernel QR   & 1.095(﻿0.101)          & 1.610(﻿0.251)          & 0.908(﻿0.065)          & 1.019(﻿0.124)          \\
		& QR Forest       & 0.939(0﻿.100)          & 1.355(﻿0.278)          & \textbf{0.648(﻿0.052)} & 0.679(﻿0.112)          \\ \midrule
		\multirow{3}{*}{0.25} & DQRP      & \textbf{0.537(0.077)}  & \textbf{0.551(0.152)}  & \textbf{0.328(0.047)}  & \textbf{0.204(0.053)}  \\
		& Kernel QR   & 0.586(0.044)           & 0.580(0.090)           & 0.367(0.023)           & 0.230(0.028)           \\
		& QR Forest    & 0.739(0.037)           & 0.847(0.081)           & 0.514(0.018)           & 0.413(0.028)           \\ \midrule
		\multirow{3}{*}{0.5}  & DQRP        & \textbf{0.531(0.071)}  & \textbf{0.523(0.137)}  & \textbf{0.309(0.044)}  & \textbf{0.178(0.047)}  \\
		& Kernel QR    & 0.557(0.044)           & 0.534(0.089)           & 0.350(0.022)           & 0.210(0.026)           \\
		& QR Forest         & 0.658(0.031)           & 0.680(0.060)           & 0.458(0.016)           & 0.331(0.022)           \\ \midrule
		\multirow{3}{*}{0.75} & DQRP        & \textbf{0.575(0.105)}  & 0.653(0.244)           & 0.382(0.089)           & 0.262(0.112)           \\
		& Kernel QR     & 0.584(0.047)           & \textbf{0.580(0.095)}  & \textbf{0.366(0.021)}  & \textbf{0.227(0.026)}  \\
		& QR Forest     & 0.741(0.040)           & 0.848(0.088)           & 0.515(0.019)           & 0.414(0.029)           \\ \midrule
		\multirow{3}{*}{0.95} & DQRP       & \textbf{0.971(0.213)}  & \textbf{1.419(0.480)}  & \textbf{0.635(0.163)}  & \textbf{0.596(0.256)}  \\
		& Kernel QR    & 1.088(0.098)           & 1.596(0.246)           & 0.883(0.073)           & 0.975(0.132)           \\
		& QR Forest         & 0.982(0.117)           & 1.441(0.434)           & 0.649(0.049)           & 0.683(0.109)           \\ \bottomrule
	\end{tabular}
%}
\end{table}

\begin{figure}[H]
	\centering
	\includegraphics[width=\textwidth]{./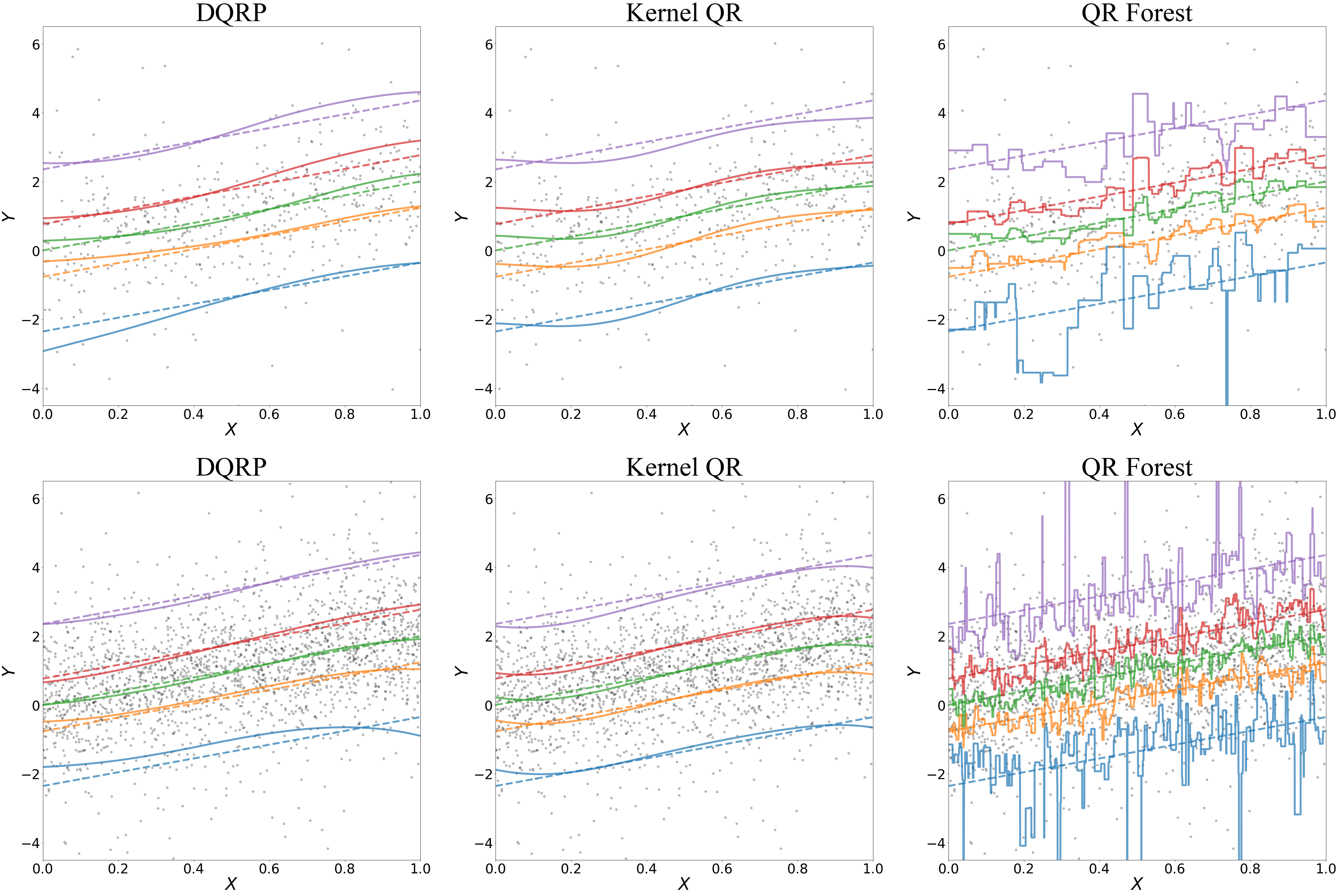}
	\caption{The fitted quantile curves by different methods under the univariate ``Linear" model when $n=512,2048$. The training data is depicted as grey dots.The target quantile functions at the quantile levels $\tau=$0.05 (blue), 0.25 (orange), 0.5 (green), 0.75 (red), 0.95 (purple) are depicted as dashed curves, and the estimated quantile functions are represented by solid curves with the same color. From the top to the bottom, the rows correspond to the sample size $n=512,2048$. From the left to the right, the columns correspond to the methods DQRP, kernel QR and QR Forest.}
	\label{fig:1}
\end{figure}

\begin{figure}[H]
	\centering
	\includegraphics[width=0.315\textwidth]{./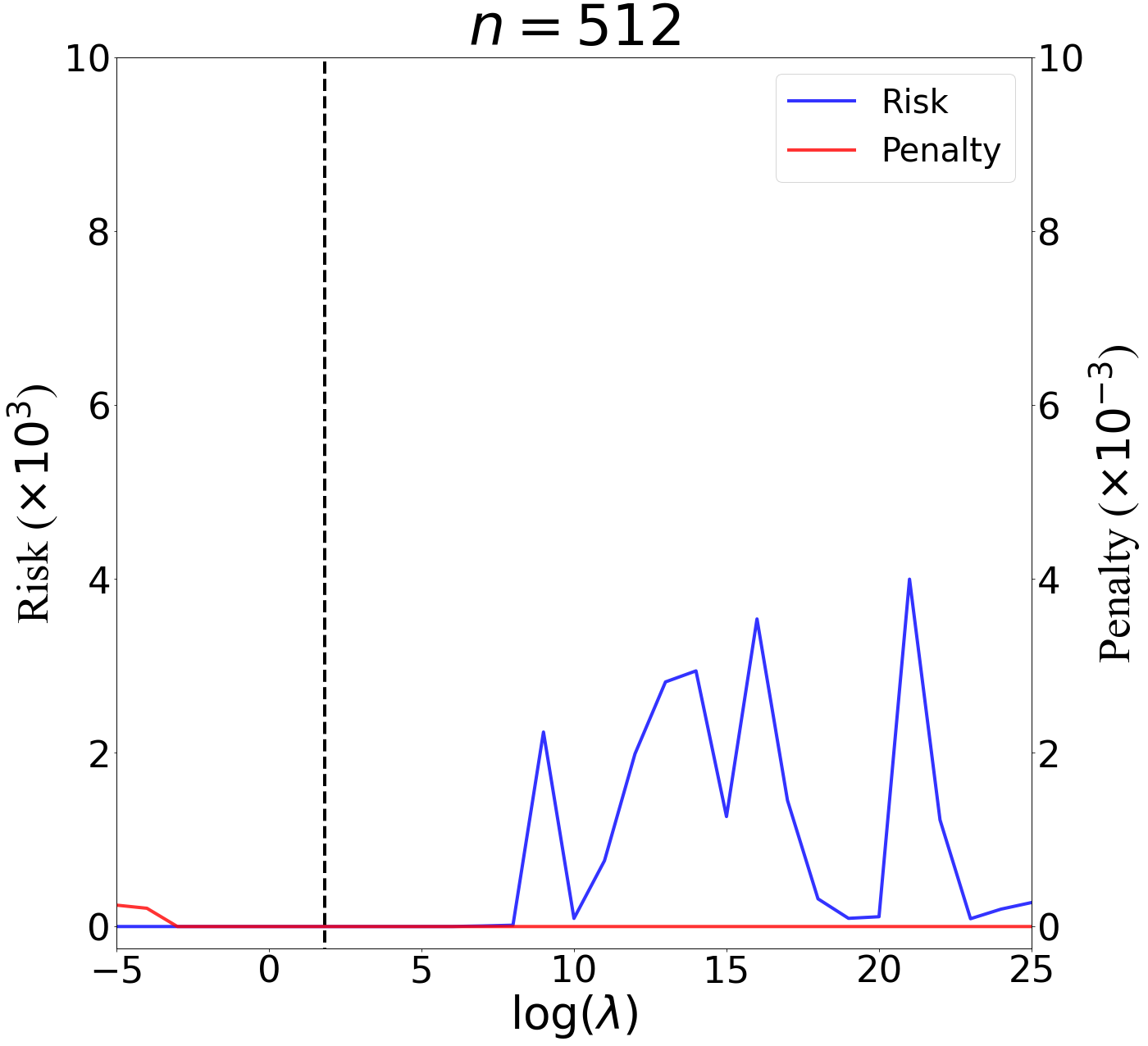}
	\includegraphics[width=0.315\textwidth]{./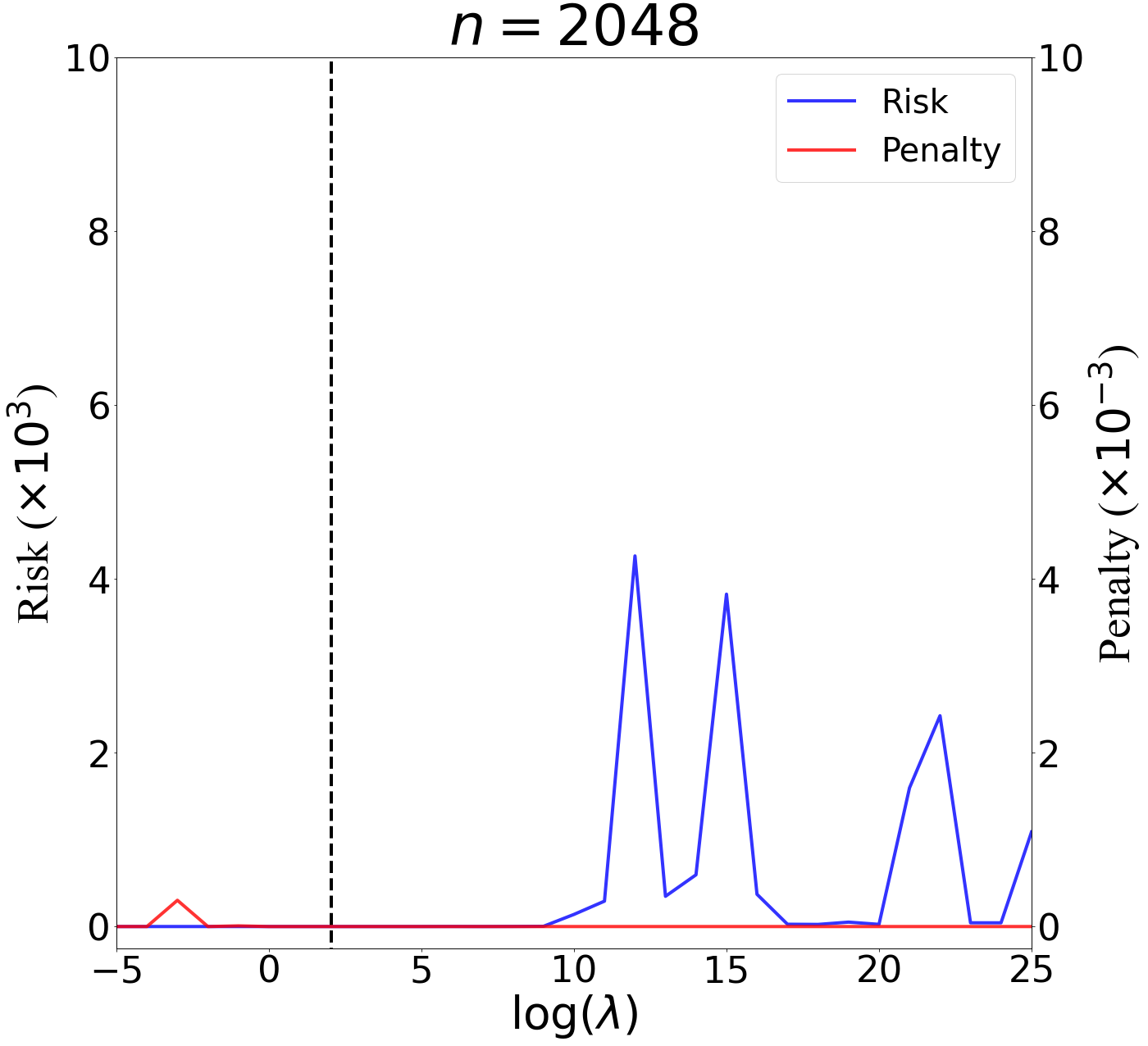}
	\caption{The value of risks and penalties under the univariate ``Wave" model when $n=512,2048$. A vertical dashed line is depicted at the value $\lambda=\log(n)$ on x-axis in each figure.}
	\label{fig:4}
\end{figure}

\begin{figure}[H]
	\centering
	\includegraphics[width=0.315\textwidth]{./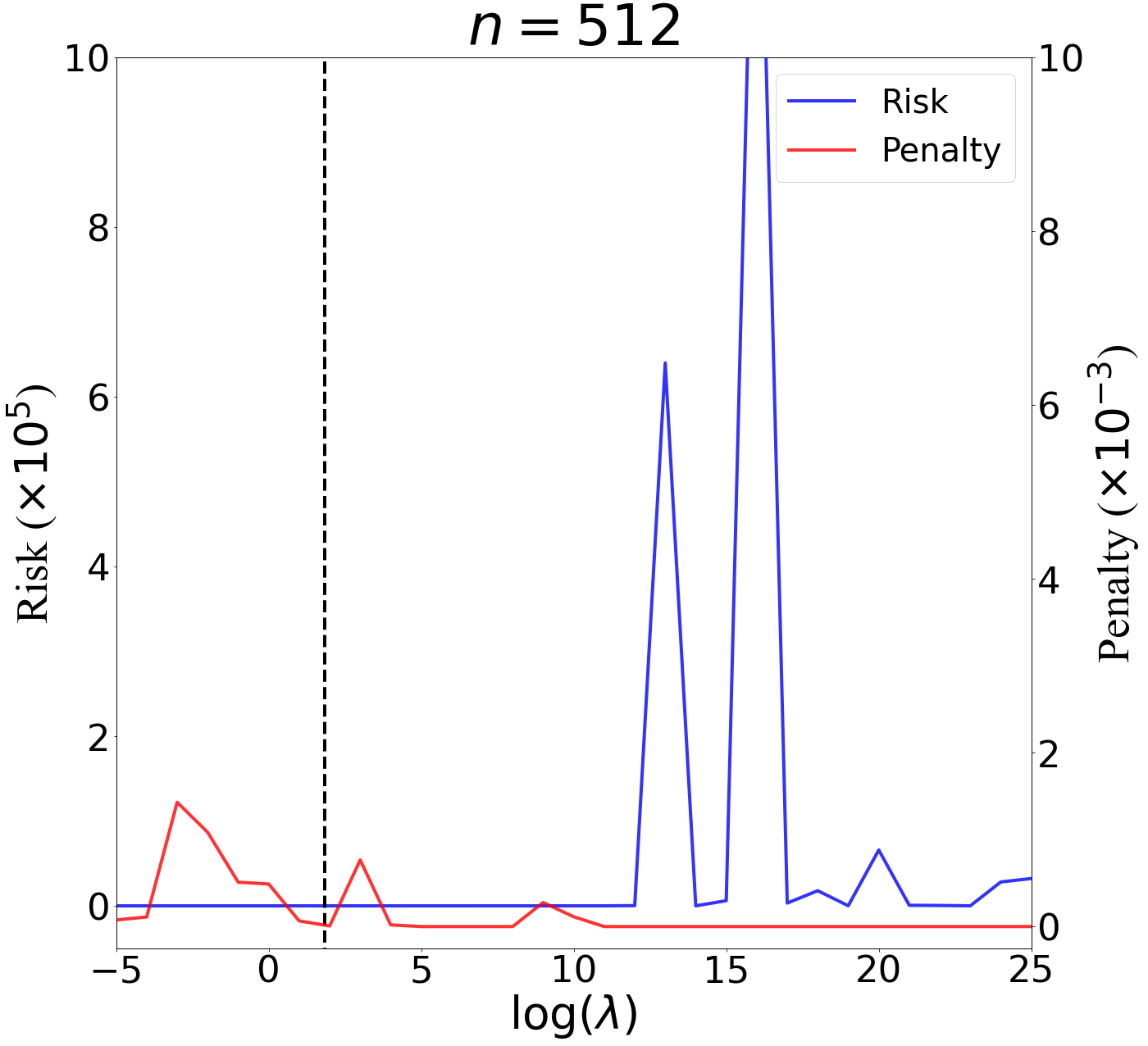}
	\includegraphics[width=0.315\textwidth]{./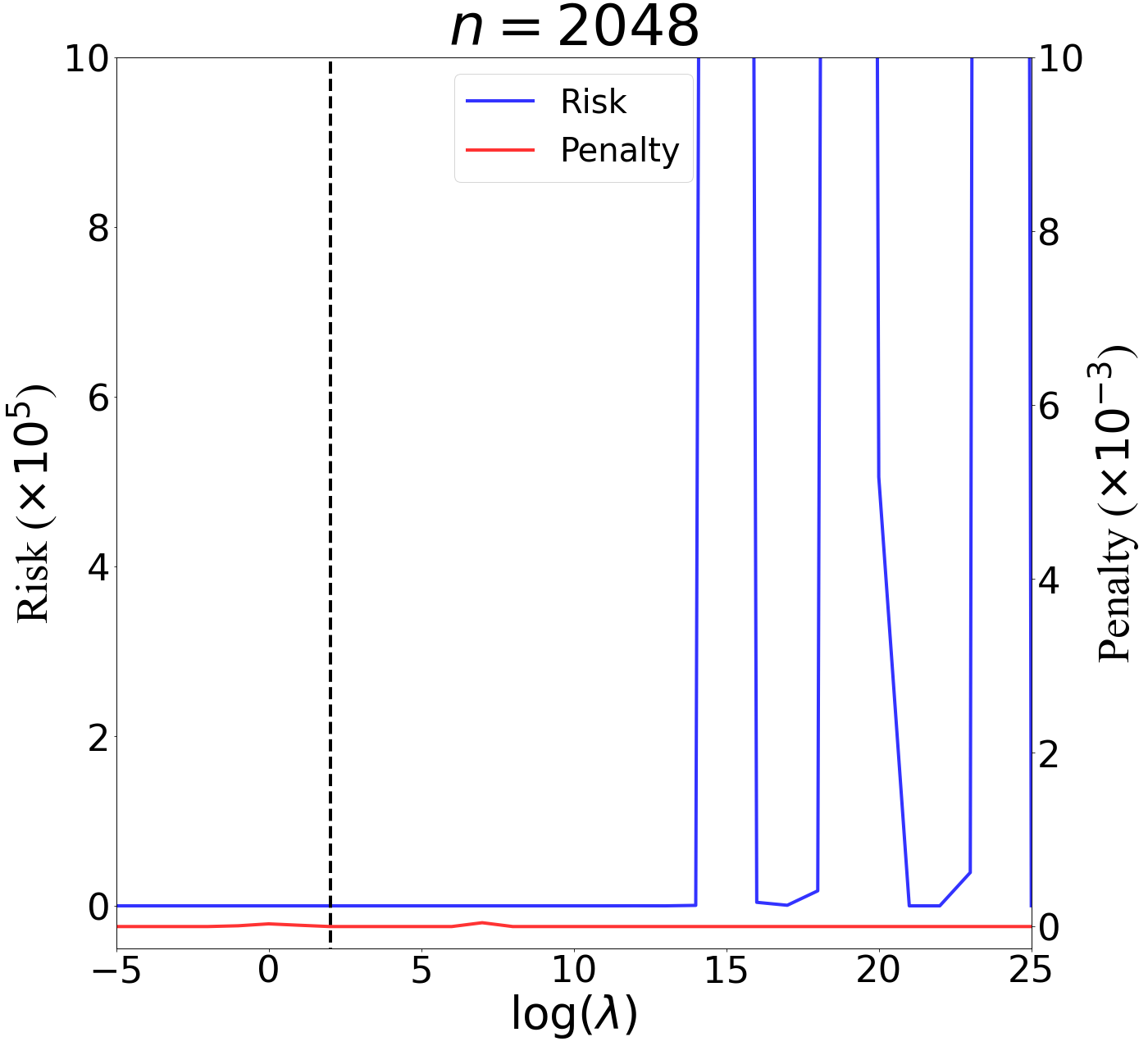}
	\caption{The value of risks and penalties under the multivariate single index model when $n=512,2048$ and $d=8$. A vertical dashed line is depicted at the value $\lambda=\log(n)$ on x-axis in each figure.}
	\label{fig:6}
\end{figure}

\end{document}